\setlist[enumerate]{leftmargin=.5in}
\setlist[itemize]{leftmargin=.5in}
\crefname{hypothesis}{Hypothesis}{Hypotheses}
\title{Wasserstein-based Projections with Applications to Inverse Problems\thanks{Resubmitted to arXiv on April 14, 2021. 
\funding{SWF, ATL, and SO were supported by AFOSR MURI FA9550-18-1-0502, AFOSR Grant No. FA9550-18-1-0167, and ONR Grants N00014-18-1-2527 snf N00014-17-1-21. 
HH's work was supported by the National Science Foundation (NSF) Graduate Research Fellowship under Grant No. DGE-1650604. Any opinion, findings, and conclusions or recommendations expressed in this material are those of the authors and do not necessarily reflect the views of the NSF.}}}
\author{
Howard Heaton{\thanks{Equal contribution}\:\:\thanks{Department of Mathematics, UCLA, Los Angeles, CA (\email{hheaton@ucla.edu}\;, \email{swufung@math.ucla.edu}, \email{atlin@math.ucla.edu}, \email{sjo@math.ucla.edu}, \email{wotaoyin@math.ucla.edu}).}}
\hspace{-6pt}
\and 
Samy Wu Fung\footnotemark[2]\:\:\footnotemark[3]
\and 
Alex Tong Lin\footnotemark[2]\:\:\footnotemark[3]
\and 
Stanley Osher\footnotemark[3]  
\and 
Wotao Yin\footnotemark[3]  
}
\newtheorem{assumption}[theorem]{Assumption}
\tikzset{every picture/.style={remember picture}}
\definecolor{bl}{RGB}{30,30,150}
\definecolor{projcolor}{rgb}{1,0.0,0.1}
\definecolor{avecolor}{rgb}{0.6,0.5,0.0} 
\definecolor{distcolor}{rgb}{0.1,0.4,0.9} 
\definecolor{gcolor}{rgb}{0.5,0.0,0.7}
\newenvironment{remarkx}
{
    \begin{remark}
}%
{
    \hfill $\blacklozenge$ \end{remark} 
}
\newcommand{\limk}{\lim_{k\rightarrow\infty}}
\newcommand{\newparagraph}{  \\[-6pt] }
\def\W{1.25}
\def\spycolor{projcolor!75!black}
\newcommand{\prox}[1]{\mathrm{prox}_{#1}}
\DeclareMathOperator*{\argmin}{argmin}
\newcommand{\PU}{ \bbP_{\mathrm{true}} }
\newcommand{\sA}{{\cal A}}
\newcommand{\sD}{{\cal D}}
\newcommand{\sF}{{\cal F}}
\newcommand{\sI}{{\cal I}}
\newcommand{\sM}{{\cal M}}
\newcommand{\sP}{{\cal P}}
\newcommand{\sX}{{\cal X}}
\newcommand{\sY}{{\cal Y}}
\newcommand{\bbN}{{\mathbb N}} 
\newcommand{\bbR}{{\mathbb R}}
\newcommand{\bbE}{{\mathbb E}}
\newcommand{\bbP}{{\mathbb P}}
\newcommand{\ie}{\textit{i.e.}\ }
\newcommand{\eg}{\textit{e.g.}\ }
\begin{document}

\maketitle

\begin{abstract}
    Inverse problems consist of recovering a signal from a collection of noisy measurements. These are typically cast as optimization problems, with classic approaches using a data fidelity term and an analytic regularizer that stabilizes recovery.  
    Recent  Plug-and-Play (PnP) works propose replacing the operator for analytic regularization in optimization methods by a  data-driven denoiser. 
    These schemes obtain state of the art results, but at the cost of   limited theoretical guarantees. To bridge this gap, we present a new algorithm that takes samples from the manifold of true data as input and outputs an approximation of the projection operator onto this manifold. 
    Under standard assumptions, we prove this algorithm  generates a learned operator, called Wasserstein-based projection (WP), that approximates the true projection with high probability.
    Thus, WPs can be inserted into optimization methods in the same manner as PnP, but now with theoretical guarantees.  Provided numerical examples show WPs obtain state of the art results for unsupervised PnP signal recovery.\footnote{All codes for this work can be found at: \url{https://github.com/swufung/WassersteinBasedProjections}.}    
\end{abstract}

\begin{keywords}
  inverse problem, generative adversarial network, generative modeling, deep neural network, Wasserstein,  projection, Halpern, Plug-and-Play, learning to optimize, computed tomography
\end{keywords}

\begin{AMS}
65K10, 65F22, 92C55
\end{AMS}

\section{Introduction}
    Inverse problems arise in numerous applications such as medical imaging~\cite{arridge1999optical,arridge2009optical,hansen2006deblurring,osher2005iterative}, phase retrieval~\cite{bauschke2002phase,candes2015phase,fung2020multigrid,thao2020phase}, geophysics~\cite{bui2013computational, fung2019multiscale,fung2019uncertainty,haber2000fast,haber2004inversion,kan2020pnkh}, and machine learning~\cite{cucker2002best,fung2019large,haber2017stable,vito2005learning,fung2020admm}. The underlying goal of inverse problems is to recover a signal\footnote{We use the term \textit{signal} to  describe objects of interest that can be represented mathematically (\eg images, parameters of a differential equation, and points in a Euclidean space).} from a collection of indirect noisy measurements. 
    Formally stated, consider a finite dimensional Hilbert space $\sX$ (\eg $\bbR^n$) with scalar product $\braket{\cdot,\cdot}$ and norm $\|\cdot\|$ for the domain space, and similarly for  the measurement space $\sY$ (\eg $\bbR^m$). 
    Let $A: \mathcal{X} \to \mathcal{Y}$ be a mapping between $\mathcal{X}$ and $\mathcal{Y}$, and let $b \in \sY$ be the available measurement data given by
    \begin{equation}
        b = A(u^\star) + \varepsilon,        
    \end{equation}
    where $u^\star \in \mathcal{X}$ denotes the true signal and $\varepsilon \in \sY$ denotes the noise in the measurement. The specific task of inverse problems  is to recover $u^\star$ from the noisy measurements $b$. 
    A difficulty in recovering $u^\star$ is that inverse problems are often ill-posed, making their solutions unstable for noise-affected data. To overcome ill-posedness, many traditional approaches estimate the true signal $u^\star$ by a solution $\tilde{u}$ to the variational problem 
    \begin{equation}
        \min_{u \in \mathcal{X}} \; \ell(A(u), b) + J(u),\label{eq: variational-problem}
    \end{equation}
    where\footnote{Here we use $\overline{\bbR} \triangleq \bbR \cup \{\infty\}$.} $\ell \colon \sY \times \sY \to \overline{\bbR}$ is the fidelity term that measures the discrepancy between the measurements and the application of the forward operator $A$ to the signal estimate (\eg least squares). The function $J\colon \mathcal{X} \to \overline{\bbR}$ serves as a regularizer, which ensures both that the solution to (\ref{eq: variational-problem}) is unique and that its computation is stable. 
    In addition to ensuring well-posedness, regularizers are constructed in an effort to instill prior knowledge of the true signal. 
    Common model-based regularizers include, \eg sparsity $J(u) = \|u\|_1$~\cite{beck2009fast,candes2006quantitative,candes2006robust,donoho2006compressed}, Tikhonov $J(u) = \|u\|^2$~\cite{calvetti2003tikhonov,golub1999tikhonov}, Total Variation $J(u) = \| \nabla u \|_1$~\cite{chan2020two,rudin1992nonlinear,yin2008bregman}, and recently, data-driven regularizers~\cite{adler2018learned,kobler2017variational,lunz2019adversarial}.   
    A generalization of using data-driven regularization consists of Plug-and-Play (PnP) methods~\cite{chan2016plug,cohen2020regularization,venkatakrishnan2013Plug}, which replace the proximal operators in an optimization algorithm with data-driven operators. (This generalizes data-driven regularization since, in some instances, PnP operators cannot be expressed as the proximal of any regularizer.)
    \newparagraph
          
    An underlying theme of regularization is that signals represented in high dimensional spaces often exhibit redundancies.  
    For example, in a piece-wise constant image, any given pixel value is often highly correlated with adjacent pixel values.
    This leads to the insight that such signals possess intrinsically low dimensional manifold\footnote{\textit{Manifold} is used loosely in the widespread sense of machine learning rather than a topological definition.} $\sM$ representations~\cite{maaten2008visualizing,osher2017low,peyre2009manifold}.  
    However, explicitly approximating the manifold is highly nontrivial. Thus, a key question remains: \newparagraph
    
    \begin{center} 
        \textit{How can we guarantee reconstructed signal estimates are on the manifold of true signals?} \newparagraph \       
    \end{center}     
    This reconstruction guarantee can be ensured by solving a special  case of \eqref{eq: variational-problem} given by  
    \begin{equation}
        \min_{u\in\sM} \ell(A(u),b).
        \label{eq: VP-manifold}
    \end{equation}
    Standard   methods for solving the constrained problem (\ref{eq: VP-manifold}) require repeatedly projecting signal estimates onto the manifold $\sM$  (see Subsection \ref{subsec: motivation}).  However, in practice access is typically given to samples drawn from $\sM$ rather than an explicit representation of the projection $P_\sM$.

    \paragraph{Contribution} 
    The main difficulty in solving~\eqref{eq: VP-manifold} is that this task typically requires   projecting onto the manifold $\sM$.
    The key contribution of the present work is to provide the first algorithm, to our knowledge, that takes samples of $\sM$ as input and outputs a PnP operator that provably approximates the  projection operator $P_{\sM}$.\footnote{We assume the manifold is convex and the signals are not ``too noisy'' (see Assumptions~\ref{ass: manifold-support} and~\ref{ass: projection-manifold-recovery}).} We refer to the approximation    as a Wasserstein-based projection (WP).
     Once this approximation of $P_\sM$ is obtained, it can be directly incorporated into any optimization method for solving (\ref{eq: VP-manifold}). 
     We also emphasize our approach is  \textit{unsupervised} and  does not require directly representing the manifold.\footnote{Here unsupervised means a direct correspondence between noisy measurements and true signals is \textit{not} needed (\eg the number of noisy measurement data may differ from the number of true signal samples).}  
    

    \paragraph{Outline}
    We begin by describing Wasserstein-based projections in Section~\ref{sec:advProjections}. 
    The convergence analysis and some practical notes are covered in Section~\ref{sec:convergence-analysis}.
    We discuss related works in Section~\ref{sec:relatedWorks}.  
    We show the effectiveness of Wasserstein-based projections on low-dose CT problems and conclude with a brief discussion in Section~\ref{sec:conclusion}.

\section{Wasserstein-based Projections} \label{sec:advProjections} 
    In this section, we provide relevant background of projections for inverse problems. We then describe our proposed scheme to learn the projection (Algorithm \ref{alg: training}) and deploy it (Algorithm \ref{alg: adversarial-projection}).
    \subsection{Motivation} \label{subsec: motivation}
     Many standard optimization algorithms can be used to generate a sequence $\{u^k\}$ that converges to a solution $\tilde{u}_b$ of (\ref{eq: VP-manifold}).
     For example, fixing a step size $\lambda\in(0,\infty)$, the proximal gradient method uses updates of the form
     \begin{equation}
         u^{k+1} = P_{\sM}\left(u^k - \lambda \nabla \ell(A(u^k),b)\right).
     \end{equation}
         Letting $f(d) \triangleq \ell(d,b)$ and $\beta,\gamma \in (0,\infty)$, linearized ADMM \cite{ouyang2015accelerated} updates take the form
     \begin{subequations}
    \begin{align} 
        u^{k+1} & = P_{\sM}\left( u^k - \beta A^\top (\nu^k + Au^k - y^k  \right), \\
        y^{k+1} & =  \prox{\gamma f}\left( y^k + \gamma (\nu^k +  Au^{k+1}-y^k \right) ,\\
        \nu^{k+1} & = \nu^k + Au^{k+1}-y^{k+1}.
    \end{align}
     \end{subequations}
     The primal dual hybrid gradient (PDHG) method \cite{chambolle2011first,esser2010general} produces updates of the form 
     \begin{subequations}
        \begin{align}
        u^{k+1} & = P_{\sM}\left( u^k - \gamma A^\top \nu^k \right) \\
        \nu^{k+1} & = \prox{\beta f^*}\left(  \nu^k + \beta A(2u^{k+1}-u^k)  \right),
        \end{align}
     \end{subequations} 
     where $f^*$ is the convex conjugate\footnote{The proximal for $f^*$ can be directly obtained from that of $f$ using the Moreau decomposition \cite{moreau1965proximite}.} of $f$.
     Each example above (and many others) requires iteratively applying the projection operator $P_{\sM}$ to solve \eqref{eq: VP-manifold}.
     However, difficulty arises in most practical settings  where access is not provided to an analytic expression for $P_{\sM}$. 
     As is common in data science, access is typically provide to samples of the true signals in $\sM$. 
     To our knowledge, this work presents the first algorithm that provably approximates the projection $P_\sM$ from available sampled data, which we present below. The philosophical distinction between   classic and the data-driven schemes is further illustrated in Figure \ref{fig: classic-vs-learned}. 
    
    \begin{figure}
        \centering
        \subfloat[Classic Regularization]{
        \includegraphics[width=0.45\textwidth]{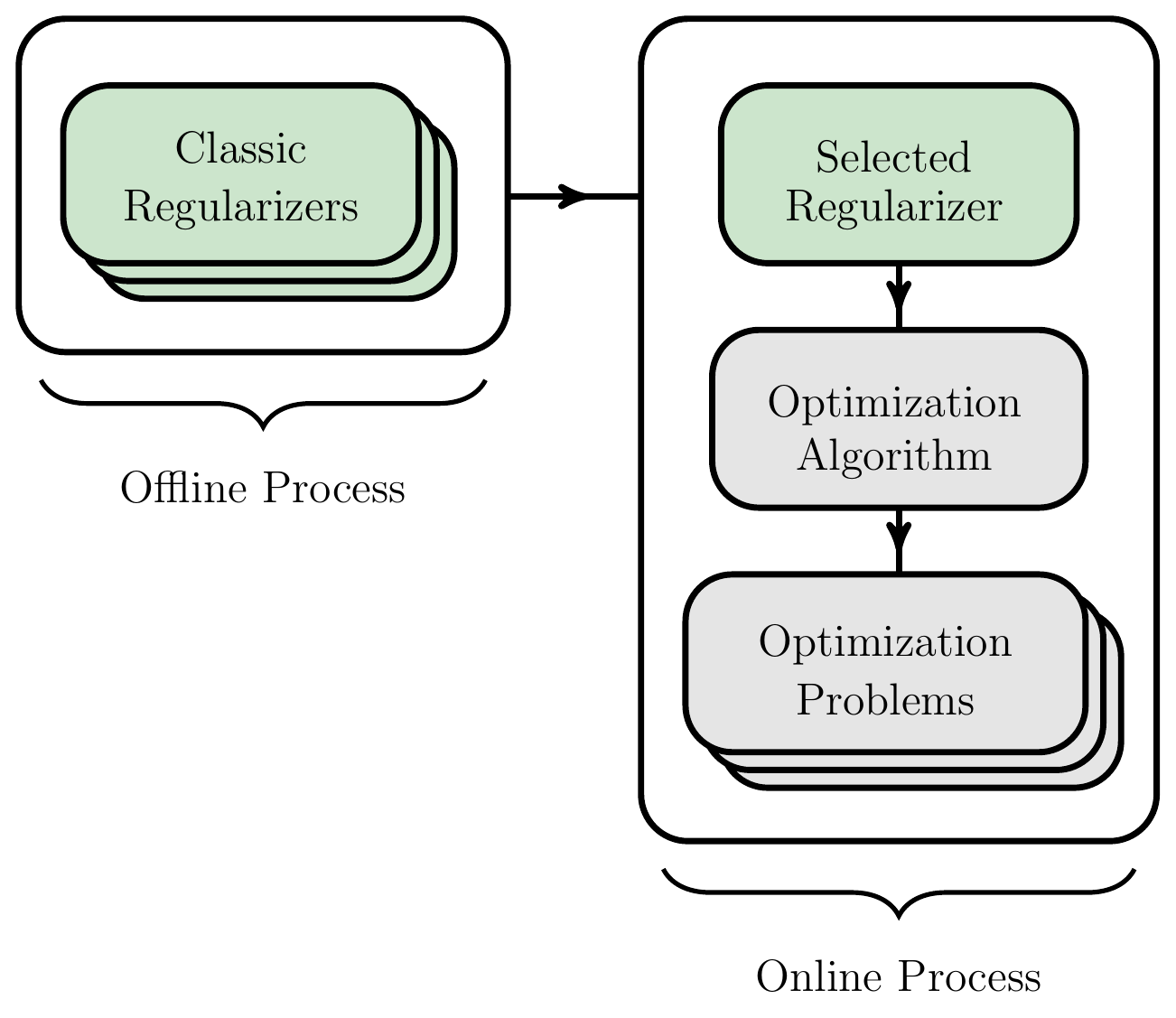}}
        \hspace{5pt}
        \subfloat[Learned Projection]{
        \includegraphics[width=0.49\textwidth]{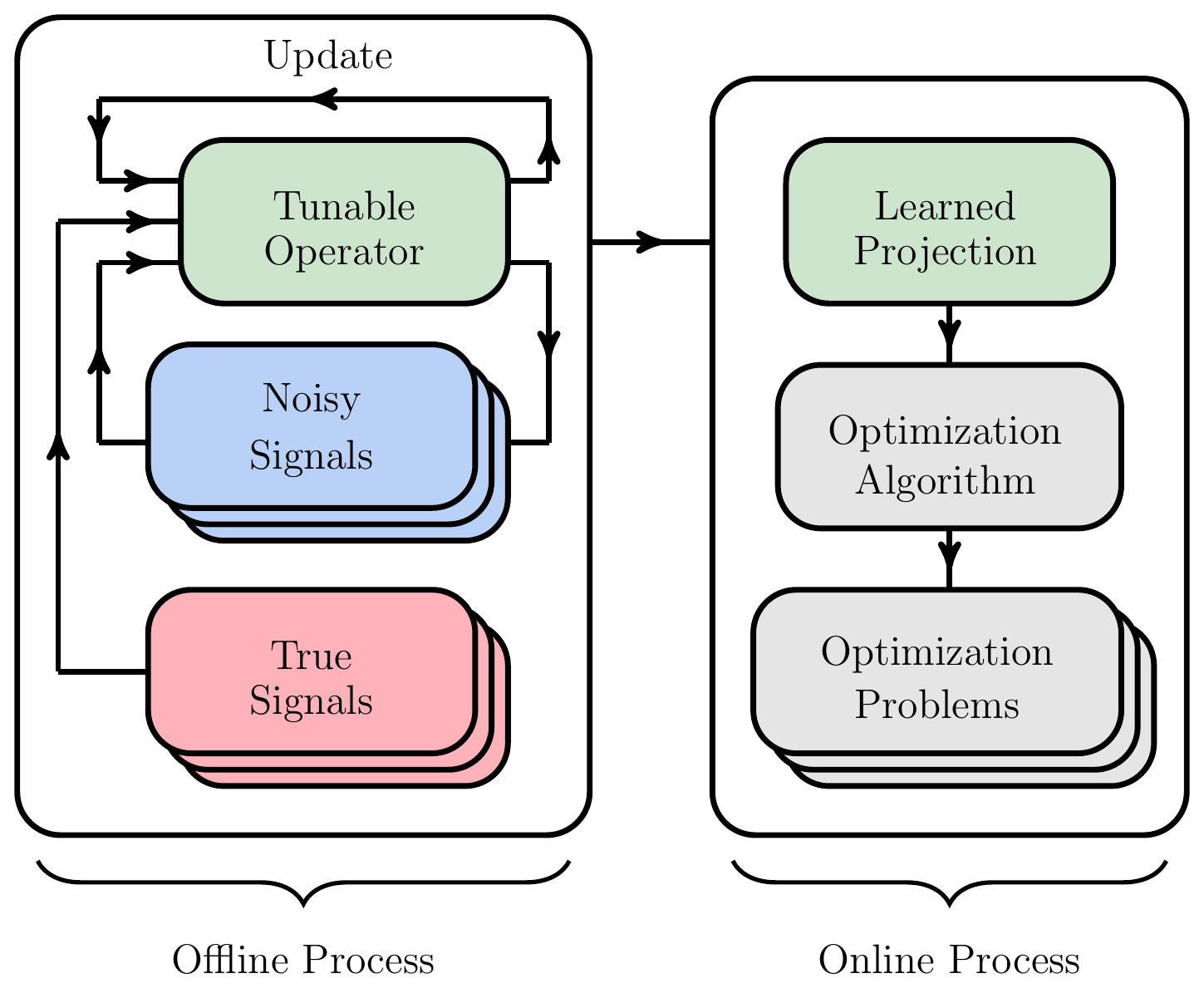}}        
        \caption{Comparison of offline/online processes for classic and learned projection schemes. Classic approaches select an analytic regularizer and incorporate it into an optimization algorithm, which is then applied to solve a class of optimization problems. Constructing the learned projection consists of iteratively refining a tunable operator using true and noisy samples. As the tunable operator is updated, the noisy signals are also updated, each time making the noisy signals more closely resemble the true signals. During the online process, the learned projection is deployed in an optimization algorithm analogously to the classic method. The key contribution of this paper is to present an algorithm for the offline process used to obtain the learned projection operator.}
        \label{fig: classic-vs-learned}
    \end{figure}
    
    \subsection{Projection Background}
    For a convex and compact set $\sM\subset \sX$,   the projection operator  $P_{\sM}:\sX\rightarrow\sX $   is defined by 
    \begin{equation}
        P_{\sM}(u) \triangleq \argmin_{v\in\sM} \;  \|v-u\|.
    \end{equation}   
    This projection can be used to express the pointwise distance function
    \begin{equation}
        d_{\sM}(u) \triangleq \inf_{v\in\sM} \| v-u\| = \| P_{\sM}(u)-u\|.
    \end{equation} 
    Indeed, for $\alpha \in \bbR$ and $\lambda = \alpha \cdot d_{\sM}(u)$, we obtain the inclusion relation\footnote{The case for $u\notin \sM$ follows from \cite[Lemma 2.2.28]{cegielski2012iterative}, and for $u\in\sM$ observe $P_{\sM}(u) - u = 0 \in \partial d_{\sM}(u)$.}
    \begin{equation}
        u + \alpha \left(P_{\sM}(u) - u\right)
        \in u - \lambda \partial d_{\sM}(u),
        \label{eq:relaxed-projection}
    \end{equation} 
    and the left hand side is called the $\alpha$-relaxed projection of $u$ onto $\sM$. 
    In particular, we can directly obtain the left hand side from the subgradient expression on the right (see (\ref{eq: gk-def})), where $0 \in \partial d_\sM(u)$ whenever $d_\sM$ is not differentiable at $u$. \newparagraph 
    
    We next introduce how to   approximate the distance function $d_{\sM}$.
    To do this, we solve an optimization problem over the set $\Gamma$ of nonnegative 1-Lipschitz functions (n.b. $d_{\sM} \in \Gamma$). 
    We assume access is provided to a collection of noisy signals $\tilde{u}$ drawn from a distribution\footnote{We  use ``distribution'' and ``measure'' interchangeably, using the former to emphasize the collection of signals and latter to emphasize assigned probabilities.} $\widetilde{\bbP}$.
    Similarly, we assume a distribution of true signals $\PU$ is provided\footnote{Typical  examples  of $\bbP_{\mathrm{true}}$ consist  of  true  signals/images for training data. Examples  of $\tilde{\bbP}$ consist of images reconstructed from a given distribution of observed measurements $b$ (\eg  TV-based reconstructions).}.
    With additional assumptions (see Section \ref{sec:convergence-analysis}), $\tau \in (0,\infty)$, and $p\in[1,\infty)$, each function that solves the optimization problem
    \begin{equation}
       \min_{f\in\Gamma}\;  \bbE_{u\sim\PU} \left[ f(u) + \tau f(u)^p \right] -\bbE_{\tilde{u}\sim\widetilde{\bbP}} \left[ f(\tilde{u}) \right]
        \label{eq:distance-inclusion-sup}
    \end{equation}
    coincides with the distance function $d_{\sM}$ over the support of $\widetilde{\bbP}$. 
    That is, if $\phi$ solves (\ref{eq:distance-inclusion-sup}), then $\phi(\tilde{u}) = d_{\sM}(\tilde{u})$ for all $\tilde{u} \in \mbox{supp}(\widetilde{\bbP})$.
    Since we are only interested in computing projections of signals $\tilde{u}$ drawn from the noisy distribution $\widetilde{\bbP}$, any solution $\phi$ to (\ref{eq:distance-inclusion-sup}) will work for our purposes.
    In summary, our task is to   solve the   problem (\ref{eq:distance-inclusion-sup}) and then use a subgradient of this solution with (\ref{eq:relaxed-projection}) to   form a relaxed projection.  \newparagraph

    Having obtained the pointwise distance function $d_{\sM}$, we propose estimating each projection onto the manifold $\sM$ by performing a sequence of subgradient descent steps.
    In an ideal situation, given a point $u\notin \sM$, we would perform a single step of size $d_{\sM}(u)$ in the direction\footnote{The distance function $d_{\sM}$ is differentiable outside of the manifold $\sM$.} $-\nabla d_{\sM}(u)$ to obtain the projection $P_{\sM}(u)$ in a single step.
    However, this is not a useful approach due to our limited ability to approximate $d_{\sM}$ in practice (\eg due to finite sampling of true signals and minimizing over a subset of $\Gamma$). Instead, we prescribe an iterative scheme with   dynamic step sizes, which (hopefully) overcomes most of the approximation errors obtained in practice.        
    Given an anchoring sequence of scalars $\{\gamma_k\} \subset (0,1)$ and a 1-Lipschitz (\ie nonexpansive) operator $T:\sX\rightarrow\sX$, Halpern \cite{halpern1967fixed} proposed finding the projection of $u^1 = \tilde{u}$ onto the fixed point set  $\mathrm{fix}(T) \triangleq \{ u : u=T(u)\}$ by generating a sequence $\{u^k\}$ via the update  
    \begin{equation} 
        u^{k+1} = \gamma_k u^1 + (1-\gamma_k) T(u^k), \ \ \ \mbox{for all $k\in\bbN$},
        \label{eq: halpern-update}
    \end{equation}
    where each update is a convex combination of $u^1$ and $T(u^k)$.
    Our method takes the form
    \begin{equation}
        u^{k+1} = \gamma_k u^1 + (1-\gamma_k)\Big( u^k + \alpha_k (P_{\sM}(u^k) - u^k) \Big), \ \ \ \mbox{for all $k\in\bbN$.}
        \label{eq: adv-proj-update}
    \end{equation} 
    The expression in (\ref{eq: adv-proj-update}) that replaces $T(u^k)$ in (\ref{eq: halpern-update}) corresponds to a nonexpansive operator provided $\alpha_k \in [0,2]$. For a typical update, we  use $\alpha_k\in[0,2]$; however, this is not always the case for our step size rule (see Remark \ref{remark: step-size}). 
    Algorithm \ref{alg: training}  articulates the training procedure for identifying the parameters in our Wasserstein-based projection algorithm (Algorithm \ref{alg: adversarial-projection}).
    
    \begin{remarkx}
        The pointwise distance function $d_{\sM}(u)$ is distinct from the  Wasserstein-1 distance $\mbox{Wass}(\widetilde{\bbP}, \PU)$ between the distribution $\widetilde{\bbP}$ of estimates and the true signal distribution $\PU$.
        The former measures the distance from an individual point to a set while the latter is a metric  for distributions.   
        The connection between these, in our setting, is that the expected value of the distance to the manifold among all $\tilde{u}\sim \widetilde{\bbP}$ is equivalent to the Wasserstein-1 distance,\footnotemark \ie 
        \begin{equation}
            \bbE_{\tilde{u}\sim \widetilde{\bbP}}\left[ d_{\sM}(\tilde{u})\right]
            = \mbox{Wass}(\widetilde{\bbP}, \PU).
        \end{equation} 
        This also illustrates that the name {\it Wasserstein-based projection} (WP) is derived from the fact that the loss function we use in (\ref{eq:distance-inclusion-sup}) coincides with the Wasserstein distance.
    \end{remarkx}
    \footnotetext{{This   follows from Lemma \ref{lemma:betak}  and the dual Kantarovich formulation of the Wasserstein-1 distance.}}

    \begin{remarkx}
        Throughout this work, we use probability spaces   $(\Omega, \sF, \bbP)$, where $\Omega$ is the sample space, $\sF$ the $\sigma$-algebra, and $\bbP:\sF\rightarrow\bbR$ is the probability measure.
        Everywhere in this work, we assume the sample space is the entire Hilbert space (\ie $\Omega = \sX$) and the event space is the power set (\ie $\sF = \sP(\Omega)$). We will use various measures.
        In particular, we assume a measure $\PU$ is provided for true data, which in practice is approximated through discrete sampling. Additionally, we find it practical to introduce a distribution $\bbP^k$ for each iteration $k$ of our algorithm so that we may write $u^k \sim \bbP^k$. 
        Note each distribution $\bbP^{k}$ is formed as a push forward operation of compositions of our algorithmic operator on $\bbP^1$ (see Remark \ref{remark: probability-push-forward}).
    \end{remarkx}

    \subsection{Algorithms}
  \begin{algorithm}[t]
    {\small
    \caption{Training to generate parameters for distance function $d_{\sM}$  estimates}
    \label{alg: training}
    \begin{algorithmic}[1]
        \STATE{\begin{tabular}{p{0.585\textwidth}r}
         \hspace*{-8pt} Choose nonzero parameter $\mu \in \{ \tilde{\mu}\in\bbR^2_{\geq 0} : \tilde{\mu}_1+\tilde{\mu}_2 < 2 \}$
         &  $\vartriangleleft$ Region bounded by simplex
         \end{tabular}}
          
        \STATE{\begin{tabular}{p{0.585\textwidth}r}
         \hspace*{-8pt} Choose class of function parameterizations $\sI$
         & 
         $\vartriangleleft$ See Assumption \ref{ass: lip}
         \end{tabular}}
         
        \STATE{\begin{tabular}{p{0.585\textwidth}r}
         \hspace*{-8pt} Choose anchoring sequence $\{\gamma_k\}\subset(0,1]$
         & 
         $\vartriangleleft$ See Assumption \ref{ass: gammak-props}
         \end{tabular}}         
         
        \STATE{\begin{tabular}{p{0.585\textwidth}r}
         \hspace*{-8pt} Choose signals $\{u^1_i\}_{i \in \sD}\subseteq \sX$ for initial distribution $\bbP^1$
         & 
         $\vartriangleleft$  $\sD$ is an index set
        \end{tabular}}
                  
    \STATE{{\bf for} $k=1,2,\ldots$}
        \vspace*{2pt}
        \STATE{\begin{tabular}{p{0.585\textwidth}r}
                \ \ $\theta^k \leftarrow \displaystyle \argmin_{\theta\in\mathcal{I}} \; \bbE_{u\sim\PU} \left[ J_\theta(u) + \tau J_\theta(u)^p\right] - \bbE_{u^k \sim \bbP^k} \left[ J_\theta(u^k) \right]$ 
                & {$\vartriangleleft$ Train to find weights}
                \end{tabular}}
                
        \STATE{\begin{tabular}{p{0.585\textwidth}r}
            \ \ $\beta_k \leftarrow  \displaystyle \bbE_{u^k\sim\bbP^k} \left[ J_{\theta^k}(u^k) \right] - \bbE_{u\sim\PU}\left[ J_{\theta^k}(u)\right]$
            & $\vartriangleleft$ Assign mean distance
            \end{tabular}}
        \vspace*{3pt}
        
        \STATE{\begin{tabular}{p{0.585\textwidth}r}
          \ \ $u^{k+1}_{i} \leftarrow  \gamma_k u^1_{i}   + (1-\gamma_k) g_k(u^k_{i})$, for all $i \in \sD$
            & 
            {$\vartriangleleft$ Update signals with (\ref{eq: gk-abusive}) }
            \end{tabular}}   
        
        \vspace{3pt}
        
    \RETURN $\{\mu, \beta_k, \gamma_k, \theta^k\}$
    \end{algorithmic}
    } 
    \end{algorithm}    
      
    The aim of Algorithm \ref{alg: training} is to determine a collection of function parameters $\{\theta^k\}$, step sizes $\{\beta_k,\gamma_k\}$, and relaxation parameters $\mu$.
    The relaxation parameters  $\mu$ in Step 1 determine whether the updates, on average, form under or over-relaxations (under if $(\mu_1+\mu_2) \in(0,1)$ and over if $(\mu_1+\mu_2) \in (1,2)$).
    The function parameterization $\sI$ in Step 2 defines the collection $\{J_{\theta}\}_{\theta\in\sI}$ of functions over which the optimization occurs in Step 6, which in practice forms an approximation of the set $\Gamma$ of all nonnegative  1-Lipschitz functions. 
    In our case, the collection of functions is parameterized by a 1-Lipschitz neural network, whose weights are $\theta \in \sI$.
    The anchoring sequence $\{\gamma_k\}$ in Step 3 is chosen to pull  successive updates closer to the initial iterate (\eg $\gamma_k = 1/k$), which is key to ensuring $\{u^k\}$ converges to the closest point to $u^1$ that is contained in $\sM$.
    The initial distribution $\bbP^1$ in Step 4 is, in practice, given by a collection of samples $\{u_i^1\}_{i\in\sD} \subset \sX$ where $\sD$ is an indexing set (n.b. in practice $\sD$ is an enumeration of a finite collection). The expectation is then approximated by an averaging sum over all the samples. Another collection of samples is used similarly for the true distribution $\PU$. Note, however, the true data samples are typically given whereas we are given some liberty in choosing the samples for $\bbP^1$ (described further in Section \ref{sec:numExperiments}).  
    A {\bf for} loop occurs in Lines 5-8 with each index $k$ corresponding to a distribution $\bbP^k$ of signal estimates of the form $u^k$ (see Remark \ref{remark: probability-push-forward}). In other words, we (informally) say that $\bbP^k$ consists of all signals $u_i^k$ for $i \in \sD$.
    The problem in Line 6 corresponds to (\ref{eq:distance-inclusion-sup}) and is used to obtain an estimate $J_{\theta^k}$ of the pointwise distance function $d_{\sM}$, which can be accomplished using a variant of stochastic gradient descent (SGD) \cite{robbins1951stochastic,bottou2010large} or ADAM~\cite{kingma2014adam}.    
    Line 7 then assigns the mean distance between points in the distribution of estimates $\bbP^k$ and the manifold $\sM$ to the variable $\beta_k$.
    Line 8 defines the updates for each iterate $u^k$ following the Halpern-type update described in (\ref{eq: adv-proj-update}). 
    In particular,  we use the definition 
    \begin{equation}
        g_k(u) \triangleq
        \begin{cases}
        \begin{array}{cl}
            u - (\mu_1 \beta_k + \mu_2 J_{\theta^k}(u)) \cdot \nabla J_{\theta^k}(u)
            &   \mbox{if $J_{\theta^k}$ is differentiable at $u$,}   \\
            u & \mbox{otherwise.}
        \end{array}
        \end{cases}
        \label{eq: gk-def}
    \end{equation}
    Assuming $d_{\sM} = J_{\theta^k}$ and setting
    \begin{equation}
        \lambda_k(u) \triangleq \mu_1 \beta_k + \mu_2 J_{\theta^k}(u),
        \label{eq:lambda-def}
    \end{equation}
    we obtain the relaxed projection
    \begin{equation}
        g_k(u) = u + \underbrace{\dfrac{\lambda_k(u)}{d_{\sM}(u)}}_{=:\alpha_k(u)} \left( P_{\sM}(u) - u\right)
        = u + \alpha_k(u) \left(P_{\sM}(u) - u\right)
        \in u - \lambda_k(u) \partial d_{\sM}(u),
        \label{eq: gk-relaxed-projection}
    \end{equation}
    where $\alpha_k(u)$ is defined to be the underbraced term and we adopt the convention of taking $\alpha_k(u) = 0$ when  $d_{\sM}(u) = 0$. (This is justified since $d_{\sM}(u) = 0$ implies $P_{\sM}(u) = u$.)
    Upon completion of  training, projections can be performed 
    by applying Algorithm \ref{alg: adversarial-projection}.

    \begin{remarkx}        
        In practice, because we perform numerical differentiation, we abusively write
        \begin{equation} \label{eq: gk-abusive}
            g_k(u) 
            = u - \lambda_k(u) \nabla J_{\theta^k}(u)
            = u - \big( \mu_1 \beta_k + \mu_2 J_{\theta^k}(u) \big) \nabla J_{\theta^k}(u),
        \end{equation}
        which is what is used in our experiments.
    \end{remarkx}

    \begin{algorithm}
    \caption{Wasserstein-based Projection (WP)     (Deployment of $P_{\sM}$ Approximation)}
    \label{alg: adversarial-projection}
    \begin{algorithmic}[1] 
        \def\TAB{\hspace*{10pt}}     
        
        \STATE{WP$(u)$: \hspace*{161pt} {$\vartriangleleft$ Provided signal}}         
                        
        \STATE{ \TAB Choose parameters $\{\mu ,\beta_k,\gamma_k,\theta^k\}$ 
        \hspace{29.5 pt} $\vartriangleleft$ Use result from Algorithm \ref{alg: training}} 
        
        \STATE{ \TAB $u^1 \leftarrow u$ \hspace*{154.5pt} {$\vartriangleleft$ Assume ${u}\sim \bbP^1$ and assign as initial iterate}} 
        
        \STATE{ \TAB {\bf for} $k=1,2,\ldots$ {\bf do}}
        \STATE{\TAB  \TAB 
            $u^{k+1} \leftarrow \gamma_k u^1  + (1-\gamma_k) g_k(u^k)$
            \hspace{35.5pt}  {$\vartriangleleft$ Halpern-type update (see  \eqref{eq: gk-abusive})}}
        \STATE{ \TAB {\bf return}  $u^{k+1}$
        \hspace{128.0pt}  {$\vartriangleleft$ Estimate of $P_{\sM}(u)$}}
    \end{algorithmic}
    \end{algorithm}  
    
    Once we have trained the parameters $\theta^k$, we can use them to approximate the projection $P_\sM$ using Algorithm~\ref{alg: adversarial-projection} as follows.
    First the parameters $\{\mu, \beta_k, \gamma_k, \theta^k\}$  are chosen according to Algorithm \ref{alg: training}.
    Then in Line 3 the point $u^1$ is initialized to the given estimate. A {\bf for} loop is formed in Lines 4-6 so that, for each $k$,   the Halpern-type update is computed using a relaxed projection with $g_k$ (Line 6). Since it is not explicit in the notation, we emphasize that $g_k$ is defined in terms of $\beta_k,$ $\gamma_k$, and $\theta^k$.
    Upon repeating this process the same number of times as the training iterations, we obtain our estimate $u^k$ in Line 7 of the projection of $u$ onto $\sM$.
    We emphasize Algorithm~\ref{alg: training} is performed \emph{once} in an offline process to obtain the projection using training data. Once trained, Algorithm~\ref{alg: adversarial-projection} can then be used in an online process for any signal that was not necessarily used during training.
    
      \begin{figure}[t]
        \centering
        \includegraphics[width= 4.25in]{./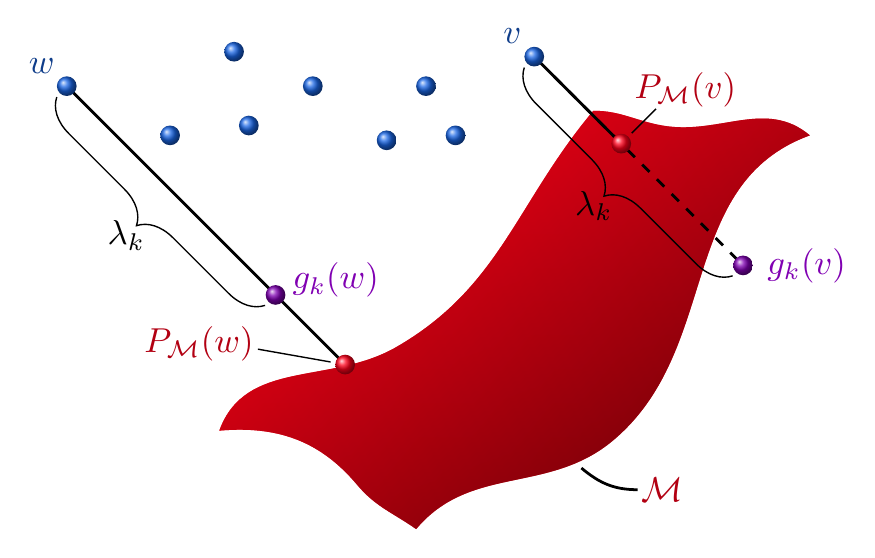}
        \caption{The \textcolor{distcolor!50!black}{blue points} are samples drawn from \textcolor{distcolor!50!black}{distribution $\bbP^k$}, which are updated to the \textcolor{gcolor!60!black}{purple relaxed projection $g_k$} of the  projection $P_{\sM}$ onto the visible portion of the \textcolor{projcolor!60!black}{red manifold $\sM$}. Here $g_k(v)$ and $g_k(w)$ are over and under-relaxations, respectively, and a common step size $\lambda_k$ is used.
        }
        \label{fig: manifold-projection}
    \end{figure}  

\section{Convergence Analysis} \label{sec:convergence-analysis} 
    This section formalizes the assumptions and states the main convergence result for the WP method (Algorithm \ref{alg: adversarial-projection}). 
    In similar fashion to \cite{lunz2019adversarial}, we first articulate one formalization of the idea that   true data is contained in a low dimensional manifold $\sM$.
    All proofs can be found in Appendix~\ref{sec:appendix-proofs}
    
    \begin{assumption} \label{ass: manifold-support}
        The support of the distribution of true signals $\PU$ is a convex, compact set $\mathcal{M} \subset \sX$, \ie $\mathrm{supp}(\PU) = \sM$.
    \end{assumption}
    
    \begin{remarkx} \label{remark: close-points}
        The convexity assumption may seem stringent since manifolds are not, in general, convex. However, from a practical perspective, we are interested in cases where a point is already ``close'' to the manifold $\sM$. This is because existing methods typically allow us to get ``close'' to the manifold (\eg use a variational method to get an estimate $\tilde{u}$ of a signal $u^\star$). Moreover, for such ``close'' points, the manifold looks like a Euclidean space and, thus, effectively appears to be a convex set from the perspective of individual noisy signals.
    \end{remarkx}
    
    The closeness idea in Remark \ref{remark: close-points} and   boundedness of $\sM$ yield the following assumption.
    
    \begin{assumption}
        \label{ass: bounded-D1}
        The support of the initial distribution $\bbP^1$ is bounded.
    \end{assumption}          
    
    \begin{remarkx} \label{remark: probability-push-forward}
        We formally define each distribution $\bbP^k$ of estimates by using a sequence of algorithmic operators $\{\sA_k : \sX\rightarrow\sX \}$. Set $\sA_1 (u) \triangleq u$ and inductively define
        \begin{equation}
            \sA_{k+1}(u)
            \triangleq \gamma_{k} u + (1-\gamma_{k}) g_{k}(\sA_{k}(u)),
            \ \ \ \mbox{for all $k\in\bbN$.}
        \end{equation}
        Then we see the $k$-th iteration $u^k$ in the WP algorithm (Algorithm \ref{alg: adversarial-projection}, Line 5), satisfies
        \begin{equation}
            u^{k} = \sA_{k}(u^1),
            \ \ \ \mbox{for all $k\in\bbN$.}
        \end{equation}
        We may view $\sA_k(u^1)$ as a random variable with sample $u^1 \sim \bbP^1$.
        To match notation with what is typical in algorithmic literature, rather than write $\{\sA_k(u^1)\}$, we refer to the sequence of random variables $\{u^k\}$ with the sample $u^1$ implicit.
        For each iteration, we also define the distribution $\bbP^{k}$ as the push forward of the algorithmic operator $\sA_k$ applied to $\bbP^1$, \ie
        \begin{equation}
            \bbP^{k} \triangleq (\sA_k)_\# \bbP^1, \ \ \ \mbox{for all $k\in\bbN$.}
        \end{equation}
        Recalling the definition of the push forward operation, we may equivalently write
        \begin{equation}
            \bbP^{k}[U] 
            = \bbP^1[ \sA_k^{-1}(U)]
            = \bbP^1\big[ \{ u^1 : \sA_k(u^1) \in U  \}  \big], 
            \ \ \ \mbox{for all $k\in\bbN$ and $U \subseteq \sX$.}
        \end{equation} 
    \end{remarkx}
    Our next assumption draws a connection between the distribution of signal estimates $\bbP^k$ and the distribution of true data $\PU$.
     This assumption effectively states the noise is not ``too large'' and the distribution $\bbP^k$ is sufficiently representative  (\ie the observed signals are not missing significant features from the true signals). This is weaker than assuming each individual signal can be recovered from its measurements. And, if our method is makes appropriate progress, truth of the assumption for $k=1$   naturally implies the truth for all subsequent values of $k$.
    
    \begin{assumption} \label{ass: projection-manifold-recovery}
        For all $k\in\bbN$, the distribution $\bbP^k$ is such that the push forward of the projection operation onto the manifold $\mathcal{M}$ recovers the true signal distribution $\PU$ up to a set of measure zero, \ie $\PU = (P_{\sM})_\# (\bbP^k)$.
    \end{assumption}

    With much credit to to \cite{lunz2019adversarial}, we extend their result \cite[Theorem 2]{lunz2019adversarial} to obtain the following theorem relating the set of nonnegative 1-Lipschitz functions to the   distance function $d_{\sM}$.
    \begin{theorem} \label{thm:distance-attains-sup}
        Under Assumptions \ref{ass: manifold-support} and \ref{ass: projection-manifold-recovery}, for all $k\in \bbN$, $\tau\in [0,\infty)$, and $p\in[1,\infty)$, the pointwise distance function $d_{\sM}$ is a solution to
        \begin{equation}
            \min_{f \in \Gamma} \;  \bbE_{u\sim \PU}\left[ f (u) + \tau f(u)^p \right] - \bbE_{u^k\sim \bbP^k} \left[f(u^k)\right],
            \label{eq:distance-sup-problem}
        \end{equation}
        where $\Gamma$ is the set of nonnegative 1-Lipschitz functions  mapping $\sX$ to $\bbR$. 
        Moreover, when $\tau > 0$, the restriction of each minimizer $f^\star$ of (\ref{eq:distance-sup-problem}) to the support of $\bbP^k$ is unique, \ie
        \begin{equation}
             f^\star(u^k) = d_{\sM}(u^k), \ \ \ \mbox{for all $u^k\in\mathrm{supp}(\bbP^k)$.} 
             \label{eq: distance-support-equality}
        \end{equation}
    \end{theorem}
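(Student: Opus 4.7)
The plan is to exploit the two structural consequences of the assumptions: (i) $\mathrm{supp}(\PU) = \sM$ forces any admissible $f \in \Gamma$ to satisfy $\bbE_{\PU}[f] \ge 0$ and $\bbE_{\PU}[f^p] \ge 0$, with both vanishing precisely when $f \equiv 0$ on $\sM$; and (ii) Assumption~\ref{ass: projection-manifold-recovery} rewrites the $\PU$-expectation as the push-forward under $P_{\sM}$, so one can compare $f$ at $u^k$ and at $P_\sM(u^k)$ via the 1-Lipschitz property. First I would verify $d_{\sM} \in \Gamma$ (nonnegative and 1-Lipschitz by the standard reverse triangle inequality for distance functions).

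Next I would compute the objective at the candidate $d_{\sM}$. Since $d_{\sM} \equiv 0$ on $\sM = \mathrm{supp}(\PU)$, both $\bbE_{\PU}[d_\sM]$ and $\bbE_{\PU}[d_\sM^p]$ vanish, so the objective at $d_{\sM}$ reduces to $-\bbE_{\bbP^k}[d_\sM]$. For an arbitrary $f \in \Gamma$, I would use Assumption~\ref{ass: projection-manifold-recovery} to write
\begin{equation*}
\bbE_{\PU}[f] - \bbE_{\bbP^k}[f] \;=\; \bbE_{\bbP^k}\!\left[ f(P_{\sM}(u^k)) - f(u^k) \right]
\;\ge\; -\,\bbE_{\bbP^k}\!\left[ \|P_{\sM}(u^k) - u^k\| \right] \;=\; -\bbE_{\bbP^k}[d_{\sM}],
\end{equation*}
where the inequality is the 1-Lipschitz bound applied pointwise. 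Adding the nonnegative term $\tau\,\bbE_{\PU}[f^p]$ can only increase the left-hand side, so the objective at any $f \in \Gamma$ is at least $-\bbE_{\bbP^k}[d_\sM]$, matching the value at $d_{\sM}$. This proves $d_{\sM}$ is a minimizer and gives the result for $\tau = 0$ as well.

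For the uniqueness claim when $\tau > 0$, suppose $f^\star$ attains the minimum. Equality in the above chain forces two things. First, $\tau\,\bbE_{\PU}[(f^\star)^p] = 0$; together with $f^\star \ge 0$, continuity of $f^\star$ (from the Lipschitz property), and $\mathrm{supp}(\PU) = \sM$, this gives $f^\star \equiv 0$ on $\sM$. Second, equality in the pointwise Lipschitz bound
\begin{equation*}
f^\star(u^k) - f^\star(P_{\sM}(u^k)) \;\le\; \|u^k - P_{\sM}(u^k)\| \;=\; d_{\sM}(u^k)
\end{equation*}
must hold $\bbP^k$-almost surely. Combined with $f^\star(P_{\sM}(u^k)) = 0$, this yields $f^\star(u^k) = d_{\sM}(u^k)$ for $\bbP^k$-a.e.\ $u^k$. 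Finally, since $f^\star$ and $d_{\sM}$ are both continuous on $\sX$, their equality on a $\bbP^k$-full-measure subset of $\mathrm{supp}(\bbP^k)$ extends to every point of $\mathrm{supp}(\bbP^k)$, establishing \eqref{eq: distance-support-equality}.

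\textbf{Anticipated obstacle.} The genuinely load-bearing step is the clean use of Assumption~\ref{ass: projection-manifold-recovery} to convert $\bbE_{\PU}[f]$ into $\bbE_{\bbP^k}[f \circ P_{\sM}]$; once this is in hand, the argument is essentially the Kantorovich--Rubinstein one-sided inequality with a single optimal transport plan (pushing each $u^k$ to $P_\sM(u^k)$). The only delicate point I would double-check is the measure-theoretic step in uniqueness: extending a $\bbP^k$-a.e.\ equality of two continuous functions to all of $\mathrm{supp}(\bbP^k)$, which is standard but worth writing out to ensure no hidden assumption on $\bbP^k$ (such as having full support on some open set) is being smuggled in.
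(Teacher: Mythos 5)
Your proposal is correct and follows essentially the same route as the paper's proof: verify $d_{\sM}\in\Gamma$, use Assumption~\ref{ass: projection-manifold-recovery} to rewrite the $\PU$-expectation as a push-forward and apply the pointwise 1-Lipschitz bound along the coupling $u^k\mapsto P_{\sM}(u^k)$, then for $\tau>0$ force $f^\star\equiv 0$ on $\sM$ and equality in the Lipschitz bound. The only cosmetic difference is in the uniqueness step, where the paper runs two separate contradiction arguments on compact subsets of positive measure while you note directly that each of the two nonnegative gap terms must vanish and then extend the $\bbP^k$-a.e.\ equality to $\mathrm{supp}(\bbP^k)$ by continuity; both are valid and rest on the same facts.
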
       
    
    Theorem \ref{thm:distance-attains-sup} is incredibly useful for our task since it provides a way to approximate the pointwise distance function $d_{\sM}$. In order to apply Theorem \ref{thm:distance-attains-sup}, we use the following assumption. 
   
    \begin{assumption} \label{ass: lip}
        The   parameter set   $\sI$ is such that the collection of functions
        $\{J_{\theta}\}_{\theta\in\sI}$ forms the set $\Gamma$ of nonnegative 1-Lipschitz functions mapping $\sX$ to $\bbR$, \ie $\{J_{{\theta}}\}_{\theta\in\sI} = \Gamma$.
    \end{assumption}   
    \begin{remarkx}
     Assumption \ref{ass: lip} can be approximately implemented by a few approaches. For example, one can choose standard 1-Lipschitz activations functions (\eg see \cite{anil2019sorting,combettes2020deep,combettes2020lipschitz,gao2017properties}). Linear mappings can be made 1-Lipschitz by spectral normalization   \cite{miyato2018spectral}, adding a gradient penalty to the loss function~\cite{gulrajani2017improved}, or   projecting   onto the set of orthonormal matrices~\cite{zaeemzadeh2020normpreservation}.
    \end{remarkx}
    
    Together the above assumptions and the following standard conditions on the anchoring sequence $\{\gamma_k\}$ allow us to state our main convergence result (n.b. we can choose $\gamma_k = 1/k$).
    
    \begin{assumption}
        \label{ass: gammak-props}
        The sequence $\{\gamma_k\}$ satisfies the following properties:         
            i) $\gamma_k \in (0,1]$ for all $k\in\bbN$, 
            ii) $\limk \gamma_k = 0$, and
            iii) $\sum_{k\in\bbN} \gamma_k = \infty.$         
    \end{assumption}
    
    \begin{theorem} \label{thm: main-result}
        $\mathrm{(Convergence\ of\ Wasserstein-based\ Projections)}$
        Suppose Assumptions \ref{ass: manifold-support}, \ref{ass: bounded-D1},  \ref{ass: projection-manifold-recovery}, \ref{ass: lip}, and \ref{ass: gammak-props} hold. 
        If the sequence $\{u^k\}$ is generated by Algorithm \ref{alg: adversarial-projection}, then the sequence $\{u^k\}$ converges  to $P_{\sM}(u^1)$ in mean square, and thus, in probability.
    \end{theorem}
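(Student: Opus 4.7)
The strategy is to recast Algorithm~\ref{alg: adversarial-projection} as a Halpern-type iteration driven by a varying relaxed projection, and then push the classical strong-convergence argument for Halpern iterates through a per-iteration expectation. First I would use Theorem~\ref{thm:distance-attains-sup} together with Assumption~\ref{ass: lip} to conclude that, for each $k$, the trained $J_{\theta^k}$ coincides with the pointwise distance $d_{\sM}$ on the support of $\bbP^k$. Since Assumption~\ref{ass: manifold-support} makes $\sM$ convex and compact, $P_{\sM}$ is firmly nonexpansive and $d_{\sM}$ is differentiable off $\sM$ with $\nabla d_{\sM}(u) = (u-P_{\sM}(u))/d_{\sM}(u)$. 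Substituting into (\ref{eq: gk-def}) identifies $g_k$ with the relaxed projection $g_k(u) = u + \alpha_k(u)(P_{\sM}(u)-u)$, where $\alpha_k(u) = \mu_2 + \mu_1\beta_k/d_{\sM}(u)$. Moreover, the identity alluded to in the Wasserstein remark gives $\beta_k = \bbE_{u^k\sim\bbP^k}[d_{\sM}(u^k)]$.

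Next, I would derive a Fej\'er-type monotonicity inequality in expectation. Fix the target $v \triangleq P_{\sM}(u^1) \in \sM$. The variational characterization $\langle u - P_{\sM}(u),\, P_{\sM}(u) - v\rangle \geq 0$ together with direct expansion yields, for every $u$,
\begin{equation*}
\|g_k(u) - v\|^2 \leq \|u - v\|^2 - \alpha_k(u)\bigl(2 - \alpha_k(u)\bigr)\,d_{\sM}(u)^2.
\end{equation*}
Combining this with the Halpern step and the convexity bound $\|\gamma a + (1-\gamma)b\|^2 \leq \gamma\|a\|^2 + (1-\gamma)\|b\|^2$, then taking expectations over $u^k \sim \bbP^k$ and substituting $\alpha_k(u^k)d_{\sM}(u^k) = \mu_1\beta_k + \mu_2 d_{\sM}(u^k)$ and $(2-\alpha_k(u^k))d_{\sM}(u^k) = (2-\mu_2)d_{\sM}(u^k) - \mu_1\beta_k$, produces after simplification using $\beta_k = \bbE[d_{\sM}(u^k)]$ a bound of the form
\begin{equation*}
\bbE\|u^{k+1} - v\|^2 \leq (1-\gamma_k)\,\bbE\|u^k - v\|^2 + \gamma_k\,\|u^1 - v\|^2 - (1-\gamma_k)\,C_k,
\end{equation*}
where $C_k$ is a quadratic form in $\beta_k$ and $\bbE[d_{\sM}(u^k)^2]$ that is nonnegative precisely because of the simplex constraint $\mu_1 + \mu_2 < 2$ imposed in Step~1 of Algorithm~\ref{alg: training}. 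This is where the relaxation constraint on $\mu$ does its essential work.

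Finally, I would sharpen the recursion by expanding the Halpern identity $u^{k+1} - v = \gamma_k(u^1 - v) + (1-\gamma_k)(g_k(u^k) - v)$ in squared norm and exploiting the variational inequality $\langle u^1 - v,\, w - v\rangle \leq 0$ valid for every $w\in\sM$ (including $w = P_{\sM}(u^k)$). This replaces the crude term $\gamma_k\|u^1-v\|^2$ by a term whose $\limsup$ is nonpositive, giving
\begin{equation*}
a_{k+1} \leq (1-\gamma_k)\,a_k + \gamma_k\,\sigma_k, \qquad a_k \triangleq \bbE\|u^k - P_{\sM}(u^1)\|^2,\ \ \limsup_k \sigma_k \leq 0.
\end{equation*}
The conditions in Assumption~\ref{ass: gammak-props} on $\{\gamma_k\}$ together with a standard recursive-sequence lemma (Xu-type) then deliver $a_k \to 0$, i.e.\ mean-square convergence; convergence in probability then follows from Chebyshev's inequality. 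Assumption~\ref{ass: bounded-D1} is used along the way to guarantee that every expectation encountered is finite, and Assumption~\ref{ass: projection-manifold-recovery} is what keeps the characterization $\beta_k = \bbE[d_{\sM}(u^k)]$ valid across iterations.

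The main obstacle I anticipate is the varying and potentially unbounded pointwise step size $\alpha_k(u) = \mu_2 + \mu_1\beta_k/d_{\sM}(u)$, which blows up as $u$ approaches $\sM$ and so prevents $g_k$ from being nonexpansive pointwise on $\sX$. Consequently the standard Halpern-Wittmann template must be executed at the level of expectations, and the nonnegativity of the averaged correction $C_k$ must be extracted from the simplex constraint $\mu_1+\mu_2<2$ in conjunction with the identity $\beta_k = \bbE[d_{\sM}(u^k)]$ rather than from any pointwise contraction property. A secondary technical subtlety is establishing $\limsup_k \sigma_k \leq 0$ in the sharpened recursion; this requires the full Halpern anchoring, not merely averaged Fej\'er-monotonicity.
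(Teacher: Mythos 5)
Your proposal is correct and follows essentially the same route as the paper's proof: identify $g_k$ as a relaxed projection with $\alpha_k(u)=\mu_2+\mu_1\beta_k/d_{\sM}(u)$, derive the averaged Fej\'er-type descent inequality whose correction term is nonnegative via Jensen's inequality and the constraint $0<\mu_1+\mu_2<2$, fold the Halpern anchoring into a recursion $\delta_{k+1}\leq(1-\gamma_k)\delta_k+\gamma_k\sigma_k$, use the projection variational inequality $\braket{w-P_{\sM}(u^1),u^1-P_{\sM}(u^1)}\leq 0$ to get $\limsup_k\sigma_k\leq 0$, and close with the standard Xu-type lemma. The one step you flag as a subtlety—showing $\limsup_k\sigma_k\leq 0$—is indeed where the paper expends the most effort (extracting a subsequence along which $\beta_{m_k}^2/\gamma_{m_k}$ stays bounded, hence $\beta_{m_k}\to 0$ and $\bbE[\|u^{m_k+1}-u^{m_k}\|]\to 0$), but your identification of the mechanism is the right one.
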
 
    By the definition of convergence in probability, this theorem implies, given $\varepsilon > 0$, the probability that the inequality $\|u^k - P_{\sM}(u^1)\| > \varepsilon$ holds goes to zero as $k\rightarrow\infty$. That is,
    \begin{equation}
        \limk \bbP^k\left[ \{  u^k  : \|u^k - P_{\sM}(u^1)\| > \varepsilon \} \right] = 0.
    \end{equation}
    This may be interpreted as saying that the probability that $u^k$ is not within distance $\varepsilon$ to $P_{\sM}(u^1)$ approaches zero as the iteration progresses (\ie as $k$ increases).

  \section{Related Works} \label{sec:relatedWorks}
     Here we present a brief overview of deep learning methods for inverse problems, Wasserstein GANs~\cite{arjovsky2017wasserstein,goodfellow2014generative} and their connections to optimal transport~\cite{lin2020apac,tanaka2019discriminator}, adversarial regularizers~\cite{lunz2019adversarial}, and expert regularizers~\cite{gilboa2013expert}. 
 
     \subsection{Deep Learning for Inverse Problems} \label{subsec:DeepLearningForInverseProblems}
    Our approach falls under the category of using deep learning to solve inverse problems~\cite{wang2016perspective}. 
    One approach, known as post-processing, first applies a pseudo-inverse operator to the measurement data (\eg FBP) and then learns a transformation in the image space. This approach has been investigated and found effective by several authors~\cite{jin2017deep,chen2017low,gulrajani2017improved,ronneberger2015u}.
    Another approach is to learn a regularizer, and then use it in a classical variational reconstruction scheme according to~\eqref{eq: variational-problem}.
    Other works investigate using dictionary learning~\cite{xu2012low}, variational auto-encoders~\cite{meinhardt2017learning}, and wavelet transforms~\cite{dokmanic2016inverse} for these learned regularizers.
    Perhaps the most popular schemes are learned iterative algorithms such as gradient descent~\cite{adler2017solving,kobler2017variational,hammernik2018learning}, proximal gradient descent or primal-dual algorithms~\cite{adler2018learned,sun2016deep}. These iterative schemes are typically unrolled, and an ``adaptive" iteration-dependent regularizer is learned. 
    One key difference between the approach of WPs and the aforementioned data-driven approaches is that our approach is \emph{unsupervised}. That is, we do not need a correspondence between the measurement $b$ and the true underlying signal $u^\star$.
    Generating approximate WPs simply requires a batch of true signals and a batch of measurements, regardless of whether these directly correspond to each other (\ie an injective map between the two might not be available); this is especially useful in some applications (\eg medical imaging) where the true image corresponding to the measurement is often not available. 
    Another set of work uses deep image priors (DIP)~\cite{ulyanov2018deep,baguer2020computed}, which attempt to parameterize the \emph{signal} by a neural network. The weights are optimized by a gradient descent method that minimizes the data discrepancy of the output of the network. The authors in~\cite{baguer2020computed} show that combining DIPs with classical regularization techniques are effective in limited-data regimes.
    
    \subsection{Wasserstein GANs and Optimal Transport} \label{subsec:WGANandOT}
    Our work bears connections with GANs \cite{goodfellow2014generative,arjovsky2017wasserstein}, and their applications to inverse problems~\cite{shah2018solving}.
    In GANs \cite{goodfellow2014generative,arjovsky2017wasserstein}, access is given to a discriminator and generator, and the goal is to train the generator to produce samples from a desired distribution. 
    The generator does this by taking samples from a known distribution $\mathcal{N}$ and transforming them into samples from the desired distribution $\PU$. 
    Meanwhile, the purpose of the discriminator is to guide the optimization of the generator. 
    Given a generator network $G_\theta$ and a discriminator network $D_\omega$, the goal in Wasserstein GANs is to find a saddle point solution to the minimax problem
        \begin{equation}
            \inf_{G_\theta} \sup_{D_\omega}\;  \mathbb{E}_{u\sim \PU} \left[D_\omega(u)\right] - \mathbb{E}_{z\sim \mathcal{N}} \left[ D_\omega(G_\theta(z))\right], \quad \text{s.t. }\quad  \|\nabla D_\omega\| \le 1,
            \label{eq:WGAN-problem}
        \end{equation}
    Here, the discriminator attempts to distinguish real images from fake/generated images, and the   generator aims to produce samples that ``fool" the discriminator by appearing real.
    The supremum expression in~\eqref{eq:WGAN-problem} is the Kantorovich-Rubenstein dual formulation~\cite{villani2008optimal} of the Wasserstein-1 distance, and the discriminator is required to be 1-Lipschitz. 
    Thus, the   discriminator   computes the Wasserstein-1 distance between the true distribution $\sD_{\rm true}$ and the fake image distribution  generated by $G_\theta(z)$.    
    Common methods to enforce the Lipschitz condition on the discriminator include weight-clipping \cite{arjovsky2017wasserstein} and gradient penalties in the loss function \cite{gulrajani2017improved}.\newparagraph
    
    Our approach can be viewed as training a special case of Wasserstein GANs, except that rather than solving a minimax problem, we solve a sequence of minimization problems. 
    In this case, $J$ is the discriminator network that distinguishes between signals coming from the ``fake" distribution (\ie our approximate distribution) and the true distribution, and $g_\eta$ is the generator which tries to generate signals that resemble those from the true distribution.\newparagraph

    Under certain assumptions (see Section~\ref{sec:convergence-analysis}),  Wasserstein-based projections can be interpreted as a subgradient flow that minimizes the Wasserstein-1 distance, where the function $J$ corresponds to the Kantorovich potential~\cite{arjovsky2017wasserstein,lin2020apac,tanaka2019discriminator,onken2020ot,lin2019fluid}, or in the context of mean field games and optimal control, the value function~\cite{ruthotto2020machine,lin2020apac}.
    Analogous to classical physics, the signals flow in a manner that minimize their potential energy. Our approach learns a sequence of these potential functions that project (or ``flow") the distribution of estimates toward the true distribution.

    \subsection{Adversarial Regularizers}
    \label{subsec:adversarialRegularizers}
    Our work is closely related to adversarial regularizers~\cite{lunz2019adversarial}.
    A good regularizer $J \colon \sX \to \overline{\bbR}$ is able to distinguish between signals drawn from the true distribution $\PU$ and drawn from an approximate distribution $\widetilde{\bbP}$ -- taking low values on signals from $\PU$ and high values otherwise~\cite{benning2017learning}. 
    Such  a regularizer plays a similar role as the discriminator described in Section~\ref{subsec:WGANandOT}; however, this setting is different in that $D_\omega$ assigns high values to true signals instead. Mathematically, $J = -D_\omega$.
    These regularizers are called adversarial regularizers~\cite{lunz2019adversarial}. They are trained a priori in a GAN-like fashion and then used to solve a classical inverse problem via the variational model \eqref{eq: variational-problem} (see Algorithms 1 and 2 in~\cite{lunz2019adversarial}).
    The adversarial regularizers act quite similarly to expert regularizers \cite{gilboa2013expert}, which attain small values at signals similar to the distribution of true signals and larger values at signals drawn elsewhere.  
    The key modeling difference between Wasserstein-based projections and adversarial regularizers is in how the manifold is used to construct a variational model.
    The latter essentially uses the distance function $d_{\sM}$ as a regularizer while the former uses the indicator function $\delta_{\sM}$ (see \eqref{eq: VP-manifold}). 
    Using the distance function as a regularizer can encourage nice behavior, but  requires choosing a weighting parameter and this approach allows noise to bias the reconstructed signal so that it is not necessarily on the manifold $\sM$.

    \subsection{Manifolds and Dimension Reduction}  
    The current era of big data has given rise to many problems that suffer from the curse of dimensionality \cite{donoho2000high}.
    In order to translate the high dimensional signals found in practice into interpretable visualizations, dimensionality reduction techniques have been introduced (\eg PCA~\cite{pearson1901liii}, Isomap \cite{tenenbaum2000global}, Laplacian eigenmaps \cite{belkin2003laplacian}, and  t-SNE \cite{maaten2008visualizing}). We refer the reader to \cite{maaten2009dimensionality,talwalkar2008large,lin2008riemannian,yan2006graph,cayton2005algorithms} for summaries and further sources on dimension reduction and manifold learning. Beyond visualization, some efforts seek to exploit low dimensional representations to better solve inverse problems. For example, a related work \cite{osher2017low} introduced a patch-based low dimensional manifold model (LDMM) for image processing. This built upon previous patch-based image processing works \cite{carlsson2008local,lee2003nonlinear,peyre2008image,peyre2009manifold}.
    Perhaps, the closest work to ours is \cite{rick2017one}, which attempts to learn a projection as well. However, that work used a different training loss function and provides limited theoretical analysis of the performance of their convolutional autoencoder approach.
        
\section{Numerical Experiments} \label{sec:numExperiments} 
    In this section, we demonstrate the potential of Wasserstein-based projections. 
    We begin with a toy example in 2D to provide intuition for the training and online processes.  
    We then test our approach on computed tomography (CT) image reconstruction problems  using two standard datasets: a synthetic dataset comprised of randomly generated ellipses and the Low-Dose Parallel Beam (LoDoPaB) dataset~\cite{leuschner2019lodopab}.
    As mentioned previously, all experiments aim to solve the problem (\ref{eq: experiments-problem}) using a sequence $\{z^t\}$ generated via (\ref{eq: relaxed-projected-gradient-update}).
    We approximate each projection in (\ref{eq: relaxed-projected-gradient-update}) using 20 steps of Algorithm \ref{alg: adversarial-projection}.
    
  \subsection{Deployment of the Projection}
    Once trained, the approximate WP operator can be incorporated into optimization algorithms in the same manner as Plug-and-Play methods. 
    To clarify this via illustration, we preview here how we apply the projection in our experiments. For the experiments, we solve the   special case of (\ref{eq: VP-manifold}) with least squares fidelity term, \ie 
    \begin{equation}
        \min_{z\in \sM} \dfrac{1}{2}\|Az-d\|_2^2 
        =\min_{z \in \bbR^n} \dfrac{1}{2}\|Az-d\|_2^2 + \delta_{\sM}(z),
        \label{eq: experiments-problem}
    \end{equation}
    where $A \in \bbR^{m\times n}$, and $b\in \bbR^m$.
    In our experiments,   (\ref{eq: experiments-problem}) is solved with a relaxed form of projected gradient. This consists of generating a sequence $\{z^t\}$ with updates of the form
    \begin{equation}
        z^{t+1} = (1- \kappa ) z^t + \kappa \cdot P_{\sM}(z^t- \xi A^T (Az^t-d)) ,
        \label{eq: relaxed-projected-gradient-update}
    \end{equation}    
    where  $\kappa \in (0,1)$ and $\xi \in (0, 2 / \|A^TA\|_2)$.     
    Application of the projection operator $P_{\sM}$ is required in the update (\ref{eq: relaxed-projected-gradient-update}), but in many applications we do not have an explicit expression for this.
    Thus, at each iteration $t$, we use Algorithm \ref{alg: adversarial-projection} to approximate projections, \ie 
    \begin{equation}
        z^{t+1} = (1- \kappa ) z^t + \kappa \cdot \mathrm{WP}\left(z^t- \xi A^T (Az^t-d)\right) ,
        \label{eq: relaxed-projected-gradient-update-WP}
    \end{equation}       
    where $\mathrm{WP}(z)$ is the output from Algorithm \ref{alg: adversarial-projection}.
    
    \begin{remarkx}
        \label{remark: PM_Approximation}
        In this subsection, we use $z$ and $z^t$ to denote signals.
        This notation is used to avoid confusion between the sequence $\{z^t\}$ in (\ref{eq: relaxed-projected-gradient-update}) and the sequence $\{u^k\}$ in Algorithm \ref{alg: adversarial-projection}. 
        The connection between these is that, setting $u^1 =  z^t - \alpha A^T (Az^t-d)$,  Algorithm \ref{alg: adversarial-projection}  computes the projection operation in (\ref{eq: relaxed-projected-gradient-update}), \ie
        \begin{equation}
           P_{\sM}(z^t-\xi A^T (Az^t-d)) = \limk u^k.
           \label{eq: projection-limit-approximation}
        \end{equation} 
        In our experiments we approximate the above limit using 20 iterations (\ie $u^{20}$).
    \end{remarkx}
    
    \begin{remarkx}
        Although for practical reasons we consider linear inverse problems in our experiments, we emphasize that our presented methodology applies even when $u^\star$ is recovered from {\it nonlinear} measurements (\ie when $A$ is a nonlinear operator). 
    \end{remarkx}

    \fboxsep=0.1in
    \fboxrule=0mm

    \subsection{Step Size Illustration}
  \begin{remarkx} \label{remark: step-size}
        The peculiar choice of step size in  (\ref{eq: gk-abusive}) requires explanation.
        As mentioned, an ideal setting would use step size $J_{\theta^k}(u^k)$. However, the problem in \eqref{eq:distance-inclusion-sup} often cannot be solved exactly.
        Even knowing that $J_{\theta^k}$ should equal zero on the manifold, we cannot simply perturb $J_{\theta^k}$ by adding a constant so that the average value on the manifold is zero. This would cause (with high probability) there to be points at which the step size $J_{\theta^k}$ would evaluate to negative values. And, using a negative step size would yield gradient \textit{ascent} and potential divergence.
        On the other hand, our choice of $\beta_k$ mitigates this offsetting issue by using the difference of the average values of $J_{\theta^k}$ for each distribution ($\bbP^k$ and $\PU$).
        Moreover, because the $\beta_k$ term provides an average step size contribution common to all signals in the distribution $\bbP^k$, it yields a more uniform flow of signals that is insensitive to errors in our approximation $J_{\theta^k}$ of the distance $d_{\sM}$.
        A drawback of using $\beta_k$ is that, since the step sizes are diminishing and dependent on the average distance, some ``stragglers'' (\ie signals that are left behind from the majority of the distribution) take a long time to reach the manifold when only using $\beta_k$. 
        We illustrate the underlying phenomena in Figure \ref{fig: stragglers}. 
        Figure \ref{fig: stragglers}a shows two initial distributions of signals, generated samples in blue and red signals in the true manifold.
        Figures \ref{fig: stragglers}b, \ref{fig: stragglers}c, and \ref{fig: stragglers}d show the straggler phenomenon, which is reduced by including a contribution of $J_{\theta^k}$ in the step size $\lambda_k$ (\ie $\mu_2 > 0$).
        In summary, there is a balance to be played in practice for how much to weigh each term to obtain the best results for a particular application, depending on how well can be $d_{\sM}$ approximated.
    \end{remarkx}

    \begin{figure}[t]
        \centering
        \subfloat[Original]{\includegraphics[width=1.53in]{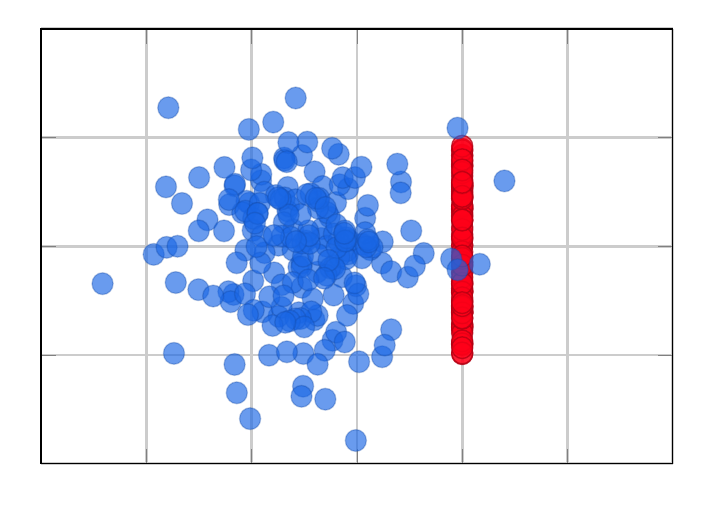}}        
        \subfloat[$\lambda_k(u) = \frac{1}{2}  \beta_k$]{\includegraphics[width=1.53in]{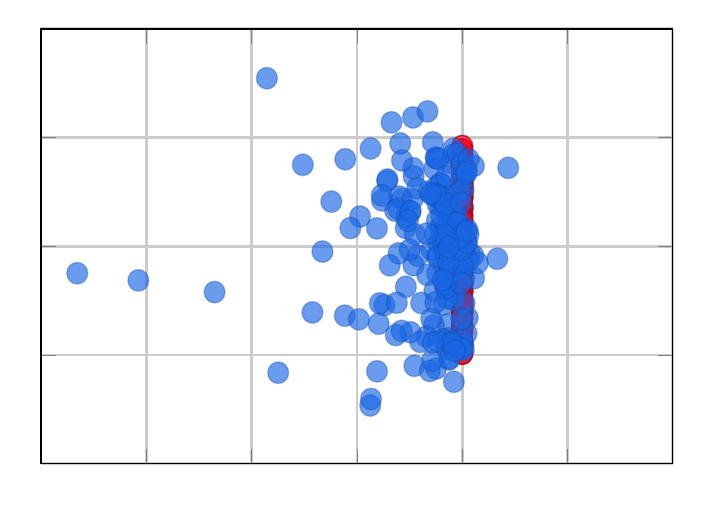}}
        \subfloat[$\lambda_k(u) = \frac{1}{4}  \left( \beta_k + J_{\theta^k}(u)\right)$]{\includegraphics[width=1.53in]{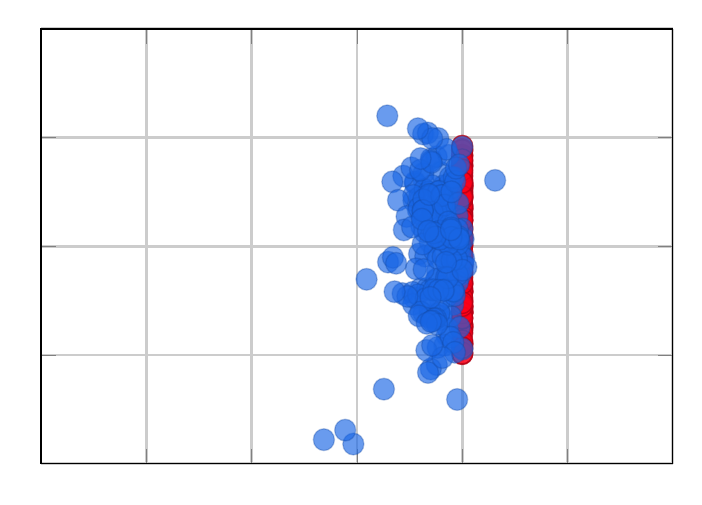}}
        \subfloat[$\lambda_k(u) = \frac{1}{2}  J_{\theta^k}(u)$]{\includegraphics[width=1.53in]{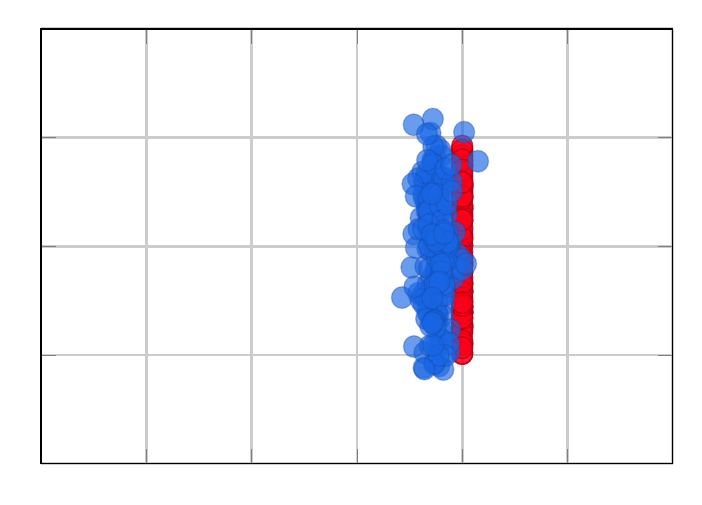}} 
        
        \caption{Illustration of the effects of different step size rules. A distribution $\bbP^1$ of blue generated samples and red manifold samples $\sM$ are shown in (a). The remaining figures show the distribution $\bbP^{12}$, after 11 updates to $\bbP^1$, for different step size $\lambda_k(u)$ rules.
        The code for generating this data is in this \href{https://colab.research.google.com/drive/1hhMmAr1MuBm9LOe29v8-cE88UdawUeRw?usp=sharing}{notebook}.}
        \label{fig: stragglers}
    \end{figure}    
    
    \begin{figure} 
        \centering
        \subfloat[$\bbP^1$ (blue) and $\PU$ (red)]{\includegraphics[height=1.95in]{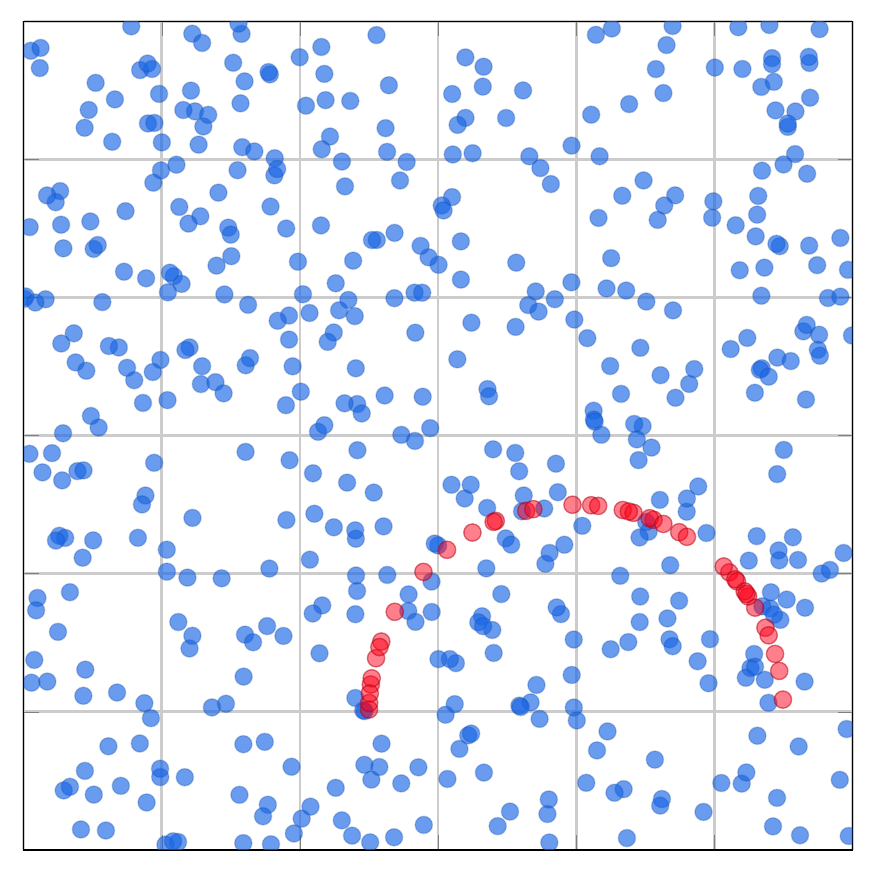}}
        %
        \subfloat[Convergence Trajectories]{\includegraphics[height=1.95in]{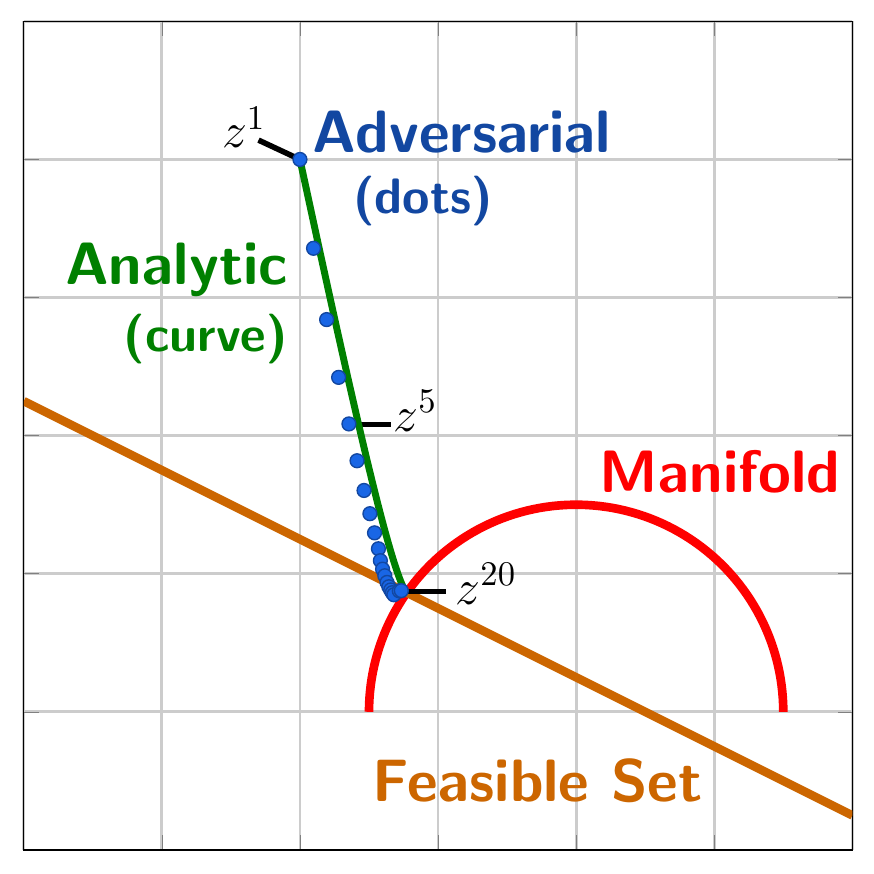}
        }
        %
        \subfloat[Landscape Plot of $J_{\theta^1} \approx d_{\sM}$ ]{
        \begin{minipage}{1.9in}
        \vspace{-1.885in}
        \includegraphics[height=1.86in]{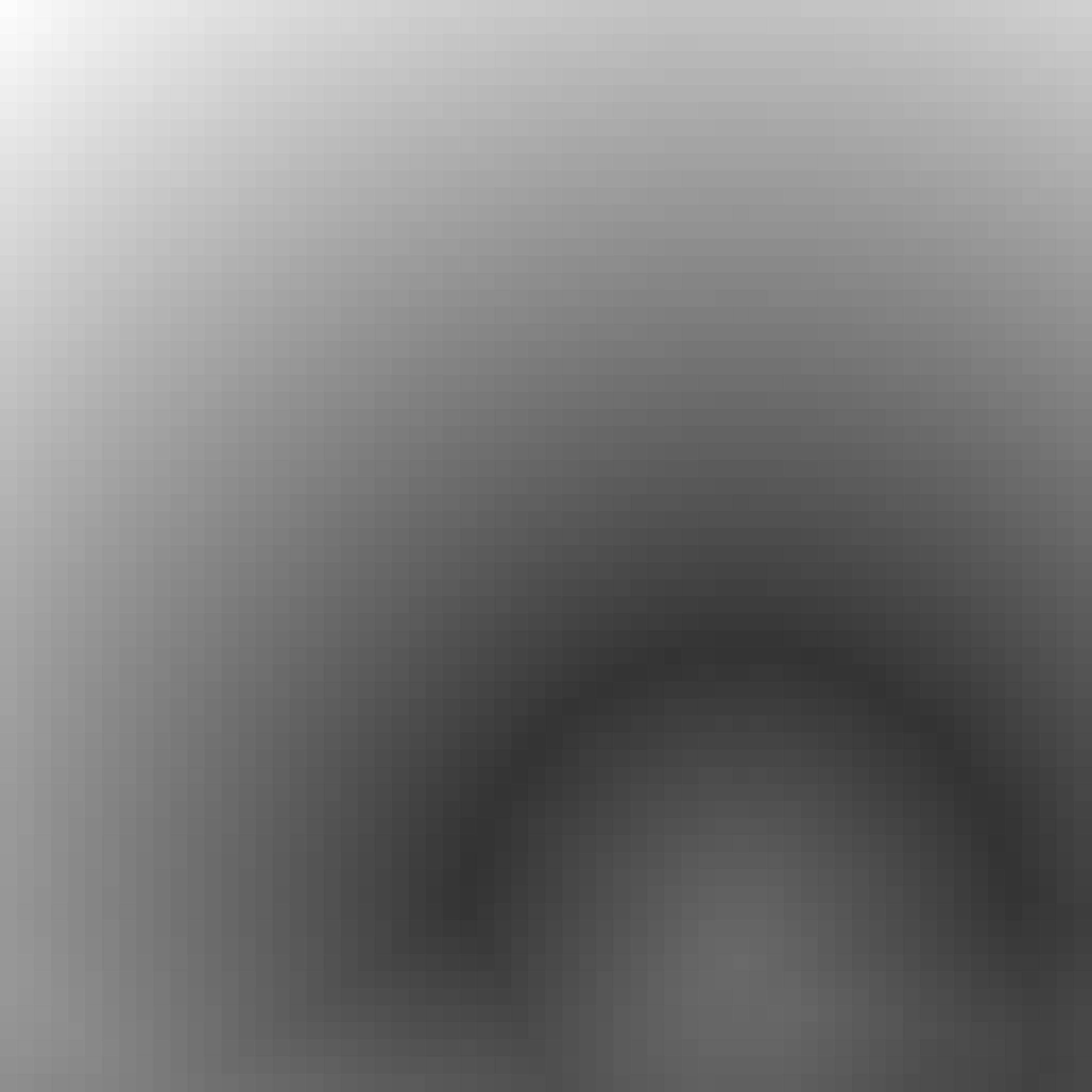} 
        \end{minipage} 
        } 
        \caption{2D Toy Problem training setup and inference example. 
        a) Training (Algorithm \ref{alg: training}) uses a uniform distribution $\bbP^1$ and manifold sampling distribution $\PU$.
        b) Trajectories (\ref{eq: relaxed-projected-gradient-update}) for solving (\ref{eq: experiments-problem}) are shown with analytic projections (green) and Wasserstein-based projections (blue). 
        Wasserstein-based projections are implemented using Algorithm \ref{alg: adversarial-projection}, only utilizing knowledge of $\PU$. The green curve uses an analytic expression for each projection. (The analytic method yields discrete points, but these are joined via a curve for illustrative purposes.)
        The feasible set is all $z$ such that $Az = d$.
        c) Estimate of the distance function $d_{\sM}$ from training. 
        All plots are over the region ${[ 0.0,3.0] \times [-0.5,2.5]}$. }
        \label{fig: toy-illustration-setup}
    \end{figure}
        
    \subsection{Toy Illustration}   
    We compare the performance of our Wasserstein-based method to an analytic method in a 2D example.\footnote{All codes for this example can be run online \href{https://colab.research.google.com/drive/1tO8T5E_Jycke9qV0s3uPsNqYybeDO0ue?usp=sharing}{here} via Jupyter notebooks in Google Colab.} 
    The fundamental difference between the methods is that the Wasserstein-based method only has has access to a sparse sampling of a manifold while the analytic method has complete knowledge of the manifold.
    Here the manifold $\sM$ is the upper half circle of radius 0.75 centered at $(2,0)$. A sparse sampling of 50 points from $\sM$ forms our estimate of $\PU$. We also sample a uniform distribution  on the rectangle $[0.0,3.0]\times[-0.5,1.5]$ to form $\bbP^1$. These distributions may be visualized in Figure \ref{fig: toy-illustration-setup}. 
    The estimate $J_{\theta^1}$ of the distance function $d_{\sM}$ is plotted in Figure \ref{fig: toy-illustration-setup}c. The neural network takes $u\in\bbR^2$ as input and outputs a scalar $J(u)$.
    Its structure uses GroupSort activation functions and orthonormal weight matrices in similar fashion to the proposed networks in \cite{anil2019sorting}, which possess the property of being universal 1-Lipschitz function approximators (as the number of parameters/layers increase). We used 6 hidden layers with $10\times 10$ weight matrices. 
    
    \begin{remarkx}
        This example uses a uniform distribution over a region that contains the true manifold. In practice, this cannot be typically done. Instead, we are often given a collection of samples from a noisy distribution.
        Additionally, during training in all of our numerical examples, we used signals that are drawn from $\bbP^k$ and perturbed by small amounts of random Gaussian noise to ``smooth'' $\bbP^k$, thereby improving generalization.
    \end{remarkx}
    
    Figure \ref{fig: toy-illustration-setup}b illustrates trajectories $\{z^t\}_{t=1}^{20}$ for solving the problem in \eqref{eq: experiments-problem} using \eqref{eq: relaxed-projected-gradient-update}, where $A = [1\  2]$, $b = 2$, and $z^1 = (0.5,1)$. The green curve uses an analytic formula for the projection $P_{\sM}$ while the blue dots use the Wasserstein-based projections algorithm (Algorithm \ref{alg: adversarial-projection}) to approximate $P_{\sM}$ from the provided manifold samples (see Figure \ref{fig: toy-illustration-setup}a). The two methods deviate from each other during an early portion of the trajectories, but ultimately both converge to the same limit.
    This shows the proposed algorithm was able to leverage sparse samples of the manifold $\sM$ to obtain a decent estimate of the projection operation $P_\sM$. 
    
     \subsection{Low-Dose Computed Tomography}
    We now perform Wasserstein-based projections on two low-dose CT examples.
    We focus on the unsupervised learning setting, where we do not have a correspondence between the distribution of true signals and approximate signals.
    Therefore, we set adversarial regularizers (a state-of-the-art unsupervised learning approach) as our benchmark for the CT problems.
    The quality of the image reconstructions are determined using the Peak Signal-To-Noise Ratio (PSNR) and structural similarity index measure (SSIM).
    As stated in Remark~\ref{remark: PM_Approximation} we use 20 iterations in Algorithm~\ref{alg: training} to approximate the projection operator for all experiments. We use the PyTorch deep learning framework~\cite{paszke2019pytorch} and the ADAM~\cite{kingma2014adam} optimizer.
    We also use the Operator Discretization Library (ODL) python library~\cite{jonas_adler_2017_249479} to compute the TV and filtered backprojection (FBP) solutions.
    The CT experiments are run on a single NVIDIA TITAN X GPU with 12GB RAM.
    \newparagraph
        
    \paragraph{Ellipse Phantoms}
    We use a synthetic dataset consisting of random phantoms of combined ellipses as in~\cite{adler2017solving}.
    The images have a resolution of $128 \times 128$ pixels. Measurements are simulated with a parallel beam geometry with a sparse-angle setup of only $30$ angles and $183$ projection beams. Moreover, we add Gaussian noise with a standard deviation of $2.5\%$ of the mean absolute value of the projection data to the projection data. In total, the training set contains 10,000 pairs, while the validation and test set consist of 1,000 pairs each.
    \newparagraph
    
    \begin{table}[t]
        \centering
        \caption*{CT Results on Ellipses Dataset}
        \begin{tabular}{@{}ccc@{}}
        \toprule
        Method & Avg. PSNR (dB) & Avg. SSIM
        \\
        \midrule
        Filtered Backprojection & 16.53 & 0.179
        \\
        Total Variation & 26.46 & 0.625 
        \\
        Adversarial Regularizers & 26.95 & 0.680
        \\
        Wasserstein-based Projections (ours) & {\bf 28.09} & {\bf 0.764}
        \\
        \vspace{.0mm}
        \end{tabular}
        \caption{Average PSNR and SSIM on a validation dataset with 1,000 images of random ellipses.}
        \label{tab:ellipse_results}
    \end{table}
    
    \begin{figure}
      \centering
      \small
        \setlength{\tabcolsep}{0.1pt}
        \begin{tabular}{ccccc}
            ground truth & FBP & TV & Adv. Reg. & Adv. Proj.
            \\

        \begin{tikzpicture} [spy using outlines={rectangle, magnification=3, size=1cm, connect spies}, rounded corners]
                \node[anchor=south west,inner sep=0] (image) at (0,0) {\adjincludegraphics[width=0.185\textwidth]{./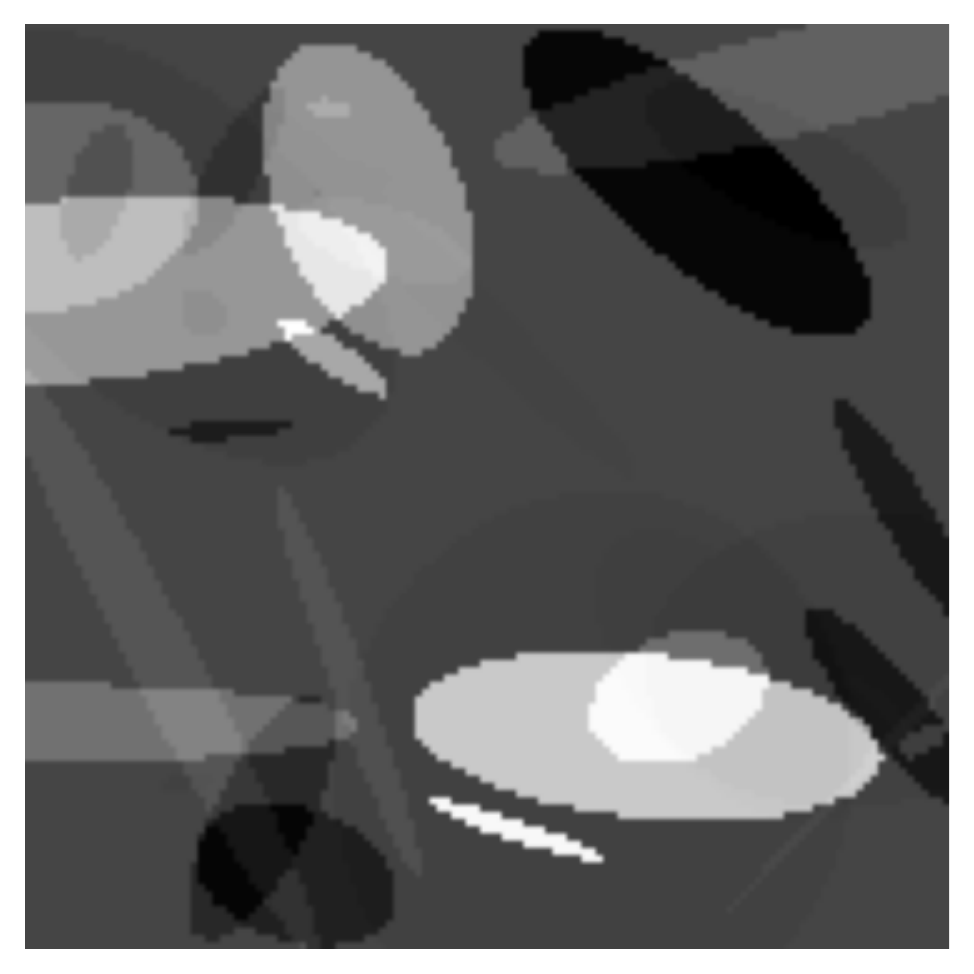}}; 
                \begin{scope}
                \spy[\spycolor,size=0.175\textwidth, every spy on node/.append style={line width = \W}] on (1.0,1.7) in node at (1.44, -1.5);  
                \end{scope}
        \end{tikzpicture}
        &
        \begin{tikzpicture} [spy using outlines={rectangle, magnification=3, size=1cm, connect spies}, rounded corners]
                \node[anchor=south west,inner sep=0] (image) at (0,0) {\adjincludegraphics[width=0.185\textwidth]{./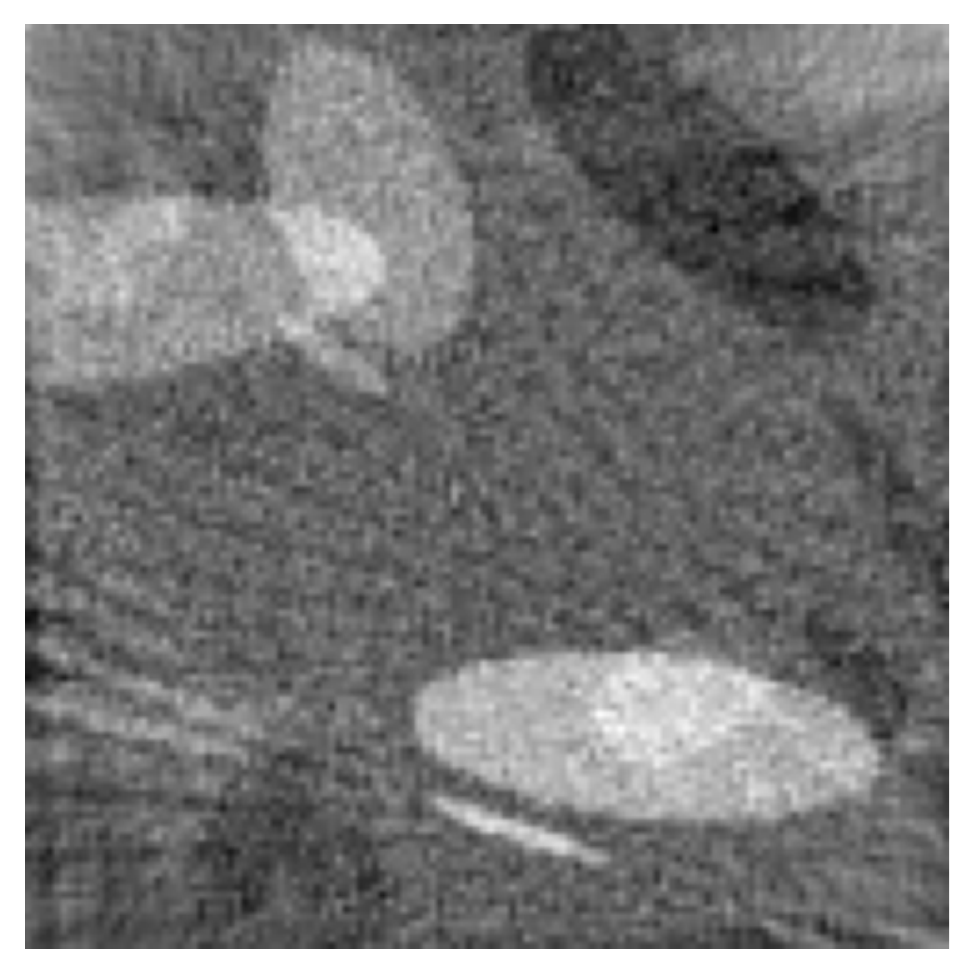}};
                \begin{scope}
                \spy[\spycolor,size=0.175\textwidth, every spy on node/.append style={line width = \W}] on (1.0,1.7) in node at (1.44, -1.5);  
                \end{scope}
        \end{tikzpicture}       
        &
        \begin{tikzpicture} [spy using outlines={rectangle, magnification=3, size=1cm, connect spies}, rounded corners]
                \node[anchor=south west,inner sep=0] (image) at (0,0) {\adjincludegraphics[width=0.185\textwidth]{./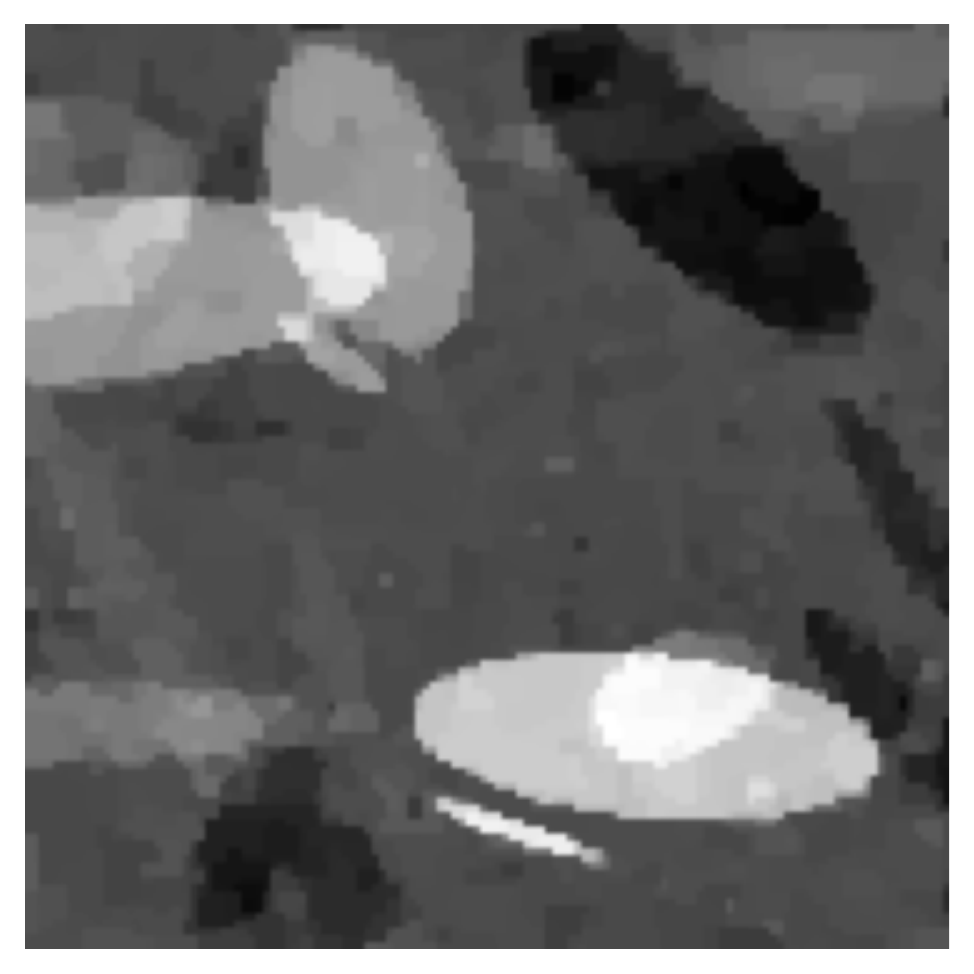}};
                \begin{scope}
                \spy[\spycolor,size=0.175\textwidth, every spy on node/.append style={line width = \W}] on (1.0,1.7) in node at (1.44, -1.5);  
                \end{scope}
        \end{tikzpicture}   
        &
        \begin{tikzpicture} [spy using outlines={rectangle, magnification=3, size=1cm, connect spies}, rounded corners]
                \node[anchor=south west,inner sep=0] (image) at (0,0) {\adjincludegraphics[width=0.185\textwidth]{./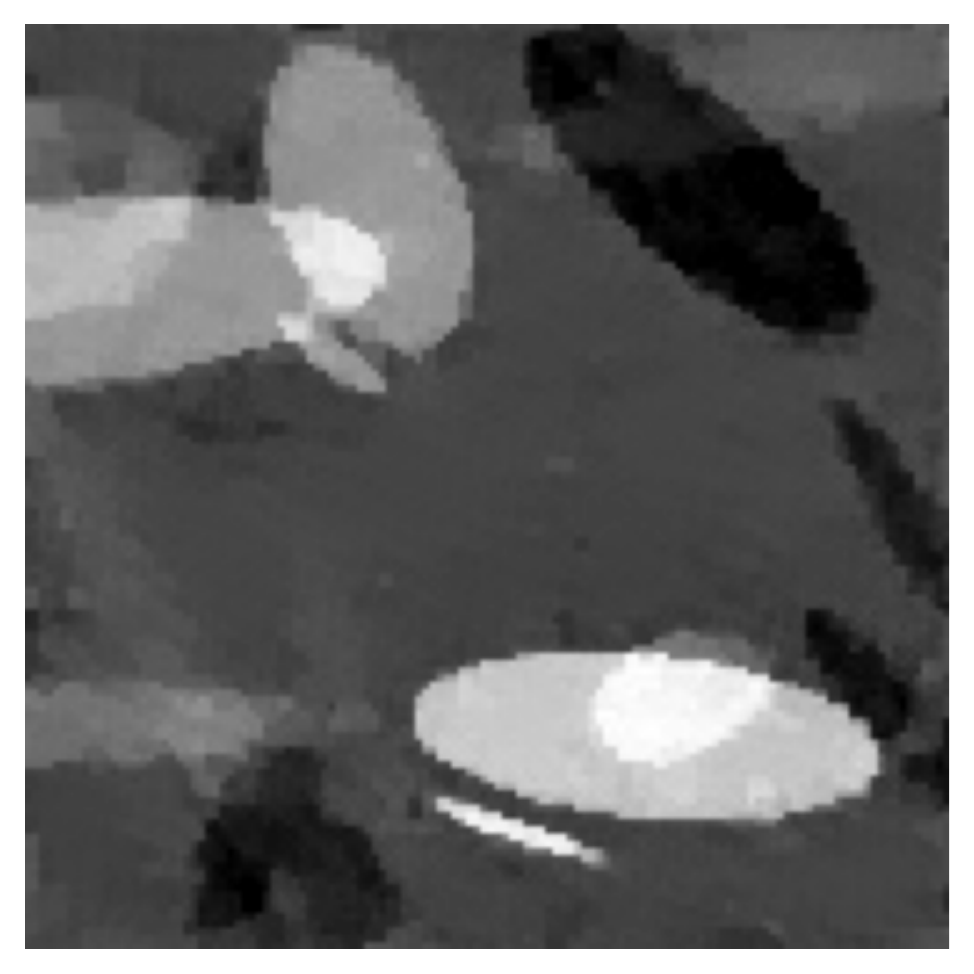}};
                \begin{scope}
                \spy[\spycolor,size=0.175\textwidth, every spy on node/.append style={line width = \W}] on (1.0,1.7) in node at (1.44, -1.5);  
                \end{scope}
        \end{tikzpicture}       
        &
        \begin{tikzpicture} [spy using outlines={rectangle, magnification=3, size=1cm, connect spies}, rounded corners]
                \node[anchor=south west,inner sep=0] (image) at (0,0) {\adjincludegraphics[width=0.185\textwidth]{./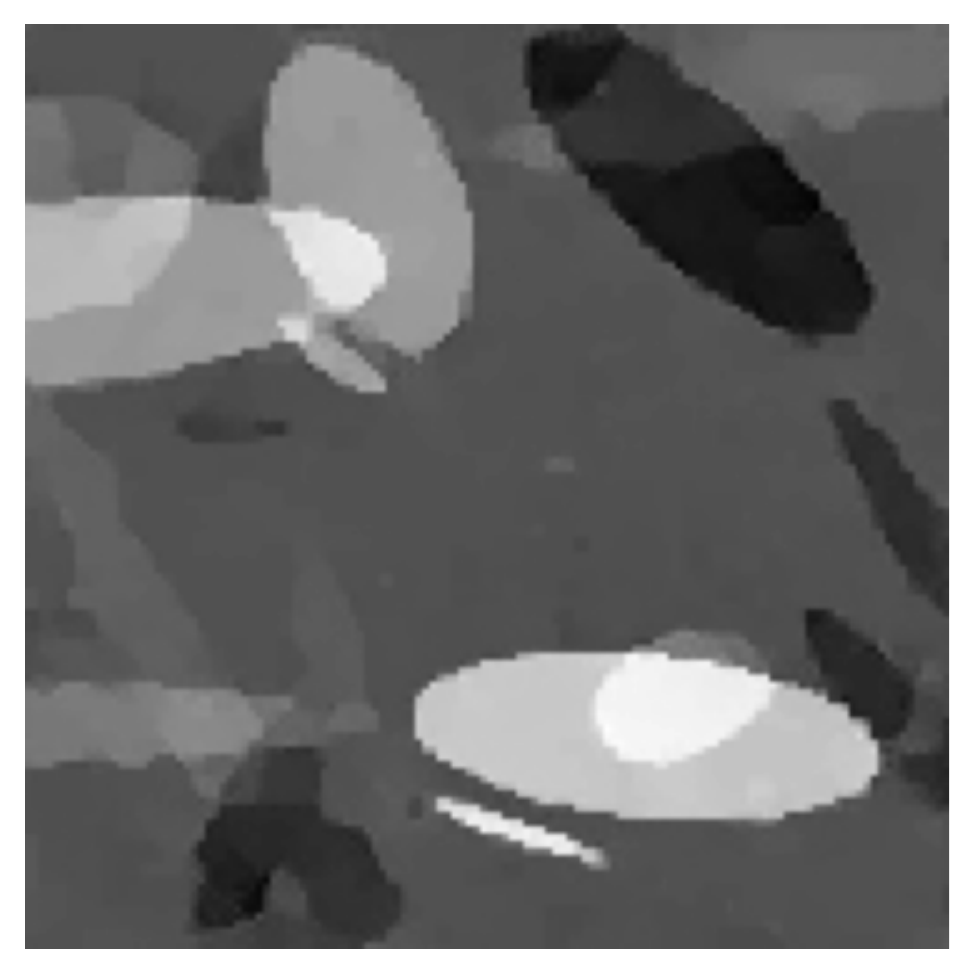}};
                \begin{scope}
                \spy[\spycolor,size=0.175\textwidth, every spy on node/.append style={line width = \W}] on (1.0,1.7) in node at (1.44, -1.5);  
                \end{scope}
        \end{tikzpicture}   
        \\
        & SSIM: 0.301 & SSIM: 0.812 & SSIM: 0.822 & SSIM: 0.870
        \\
        & PSNR: 18.53 & PSNR: 28.26 & PSNR: 28.30 & PSNR: 29.38
        \end{tabular}
        \caption{Reconstruction on a validation sample obtained with Filtered Back Projection (FBP) method, TV regularization, Adversarial Regularizer, and Wasserstein-based Projections (left to right). Bottom row shows expanded version of corresponding cropped region indicated by red box.}
        \label{fig:ellipses_reconstructions_3}
    \end{figure}

    \paragraph{Human Phantoms}
    As a more realistic dataset, we use human phantoms consisting of chest CT scans from the Low-Dose Parallel Beam dataset (LoDoPaB)~\cite{leuschner2019lodopab}. 
    In our setup, we use 20,000 training images and 2,000 validation images of size $128 \times 128$.
    Similar to the ellipse phantoms, we simulate the data using only 30 angles and 183 projection beams. 
    As a more realistic setting, we add $1.5\%$ Gaussian noise corresponding to the individual beams.
    Note this is different from adding the noise corresponding to the mean of all 183 beams for a particular angle as was done in the ellipse dataset.
    \newparagraph
    
    \paragraph{Network Structure}
    We use a simple 5 layer neural network containing 38,534 trainable parameters. The first three being convolution layers with kernel size $4$ and stride $2$, with output channels $32, 64,$ and $1$ for layers one, two, and three, respectively. 
    For the last two layers, we use fully connected layers to bring the dimensions back to a scalar.
    As nonlinear activation function,we choose the Parametric Rectified Linear Units (PReLU) functions
    \begin{equation*}
		\sigma_c(x) = 
		\begin{cases}
		x & \text{if } x \geq 0 \\
		- c x & \text{else}
		\end{cases}
    \end{equation*}
    between layers, which was shown to be effective in other applications such as classification~\cite{he2015delving}. The last activation function is chosen to be the Huber function to ensure positivity of $J$.
    
    \begin{remarkx}
        In our experience, training was more effective when using the Huber function instead of the absolute value function on the CT problems.
        Since we have a limited amount of high-dimensional manifold data, we suspect that a smoother landscape (with respect to the network weights) yields better generalization for this particular setup~\cite{chaudhari2019entropy,keskar2016large}.
    \end{remarkx}

    For adversarial regularizers, we use the network structure described in~\cite{lunz2019adversarial}, which consists of an 8-layer CNN with Leaky-Relu activation. The network contains 2,495,201 parameters. More details can be found in~\cite[Appendix B]{lunz2019adversarial}.\\

    \paragraph{Wasserstein-based Projection Training Setup}
    To train the Wasserstein-based projections, we begin with an initial distribution obtained from the TV reconstructions (\ie $\bbP^1$ is set to be the distribution of TV images). Our approach can therefore be considered as a post-processing scheme, depending on the mapping from measurement space to signal space.
    We update the distribution whenever 200 epochs have passed since the last update for both ellipses and human phantoms datasets. 
    In our setting, we choose the stepsizes only according to the mean distance by setting the relaxation parameter in line 1 of Algorithm~\ref{alg: training} as $\mu = (0.5, 0)$.
    We use the ADAM optimizer with a learning rate of $10^{-5}$ and batch size of 16 samples.
    To ensure Assumption~\ref{ass: gammak-props} is satisfied, we choose $\gamma_k = 10^{-1}/k$.
    Finally, to approximately satisfy Assumption~\ref{ass: lip}, we enforce $J$ to be 1-Lipschitz by adding a gradient penalty~\cite{gulrajani2017improved}.     
    As stopping criterion, we set a maximum of 20 iterations, \ie generator updates, in Algorithm~\ref{alg: training}.
    We note that, in practice, the number of epochs used for $\theta^k$ are hyperparameters that need to be tuned. 
    These are important since they determine how well we approximate solutions to Line 6 in Algorithm~\ref{alg: training}. 
    To evaluate our trained projection operator on a new signal, we run ten iterations of~\eqref{eq: relaxed-projected-gradient-update}.
    For the ellipse phantom dataset, we choose $\kappa = 10^{-1}$ and $\xi = 8 \times 10^{-2}$.
    For the human phantom dataset, we choose $\kappa = 8 \times 10^{-2}$ and $\xi = 5 \times 10^{-1}$.
    Like any optimization algorithm, these hyperparameters are application dependent and require tuning.
    
    \begin{table}[t]
        \centering
        \caption*{CT Results on Human Phantoms}
        \begin{tabular}{@{}ccc@{}}
        \toprule
        Method & Avg. PSNR (dB) & Avg. SSIM
        \\
        \midrule
        Filtered Backprojection & 15.90 & 0.467
        \\
        Total Variation & 21.55 & 0.728 
        \\     
        Adversarial Regularizers & 24.86 & 0.747
        \\
        Wasserstein-based Projections (ours) & {\bf 26.66} & {\bf 0.782} 
        \\
        \bottomrule
        \vspace{.0mm}
        \end{tabular}
        \caption{Average PSNR and SSIM on validation dataset with 2,000 images of human phantoms.}
        \label{tab:lodopab_results}
    \end{table}
    
    \begin{figure}[t]
      \centering
      \small
        \setlength{\tabcolsep}{0.1pt}
        \begin{tabular}{ccccc}
            ground truth & FBP & TV & Adv. Reg. & Adv. Proj.
            \\

        \begin{tikzpicture} [spy using outlines={rectangle, magnification=3, size=1cm, connect spies}, rounded corners]
                \node[anchor=south west,inner sep=0] (image) at (0,0) {\adjincludegraphics[angle=180,width=0.185\textwidth]{./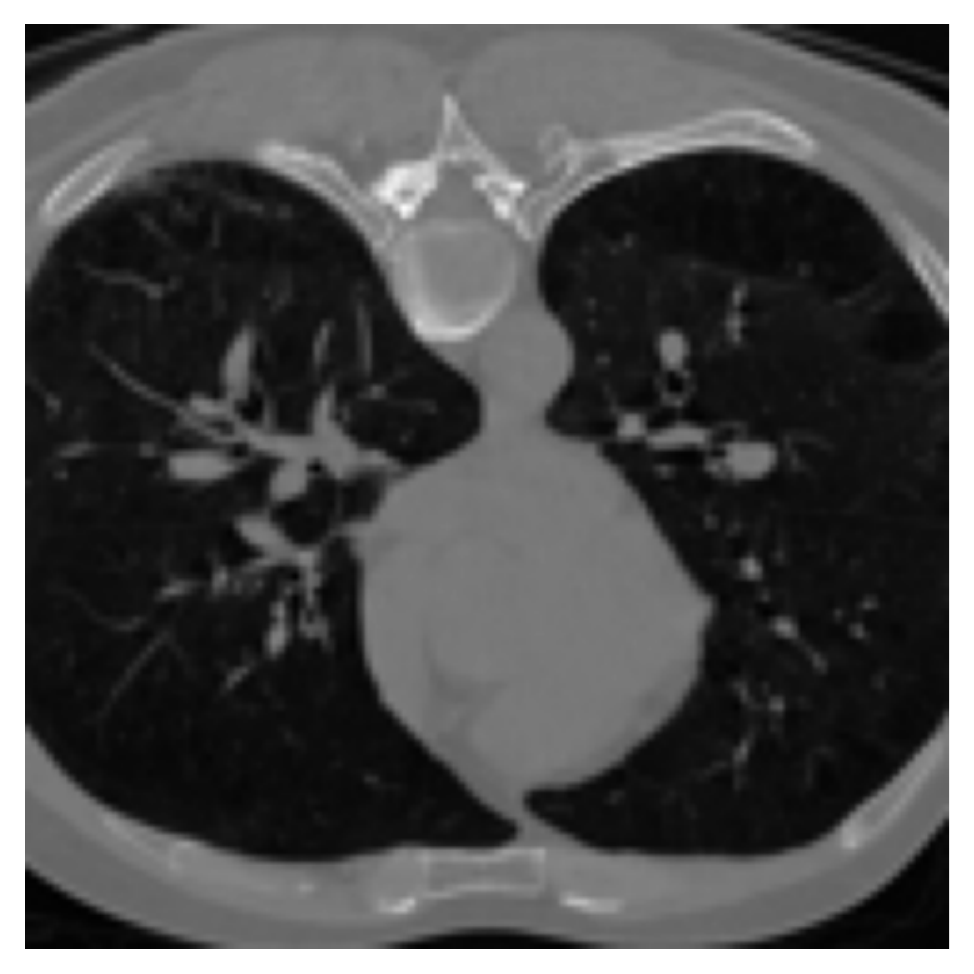}}; 
                \begin{scope}
                \spy[\spycolor,size=0.175\textwidth, every spy on node/.append style={line width = \W}] on (1.6,0.6) in node at (1.44, -1.5);  
                \end{scope}
        \end{tikzpicture}
        &
        \begin{tikzpicture} [spy using outlines={rectangle, magnification=3, size=1cm, connect spies}, rounded corners]
                \node[anchor=south west,inner sep=0] (image) at (0,0) {\adjincludegraphics[angle=180,width=0.185\textwidth]{./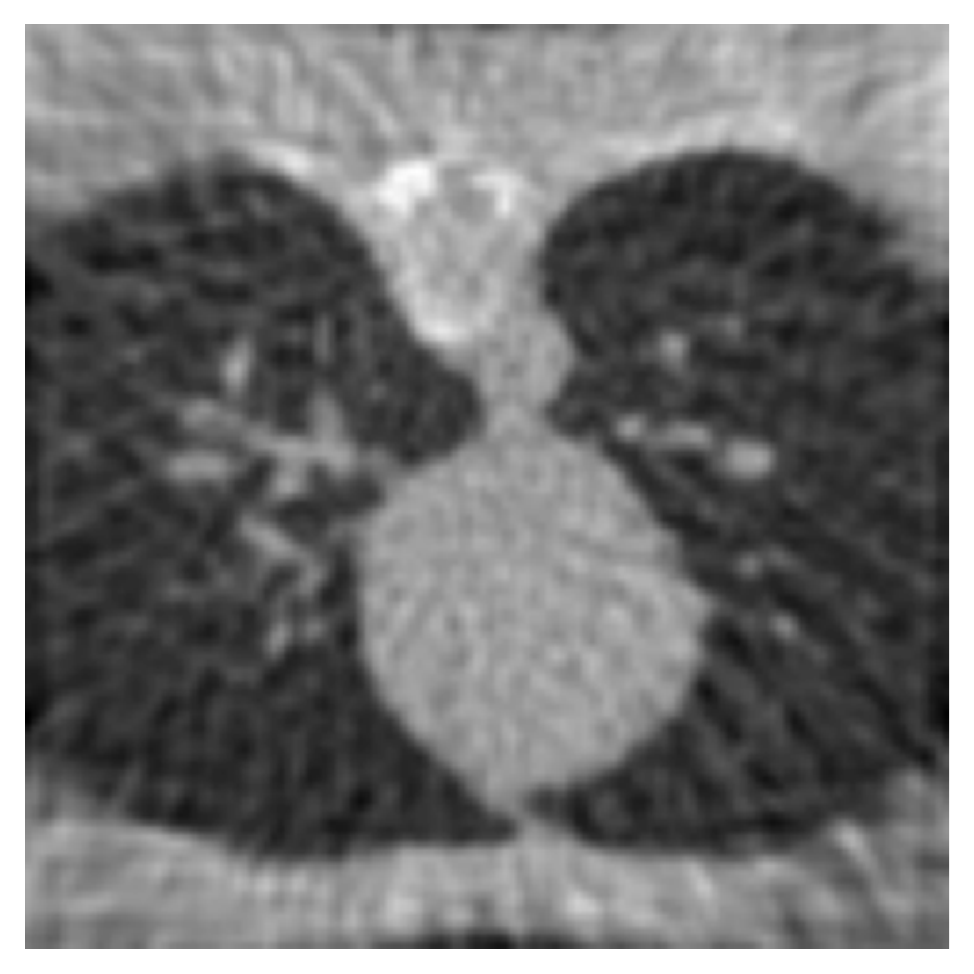}};
                \begin{scope}
                \spy[\spycolor,size=0.175\textwidth, every spy on node/.append style={line width = \W}] on (1.6,0.6) in node at (1.44, -1.5);  
                \end{scope}
        \end{tikzpicture}       
        &
        \begin{tikzpicture} [spy using outlines={rectangle, magnification=3, size=1cm, connect spies}, rounded corners]
                \node[anchor=south west,inner sep=0] (image) at (0,0) {\adjincludegraphics[angle=180,width=0.185\textwidth]{./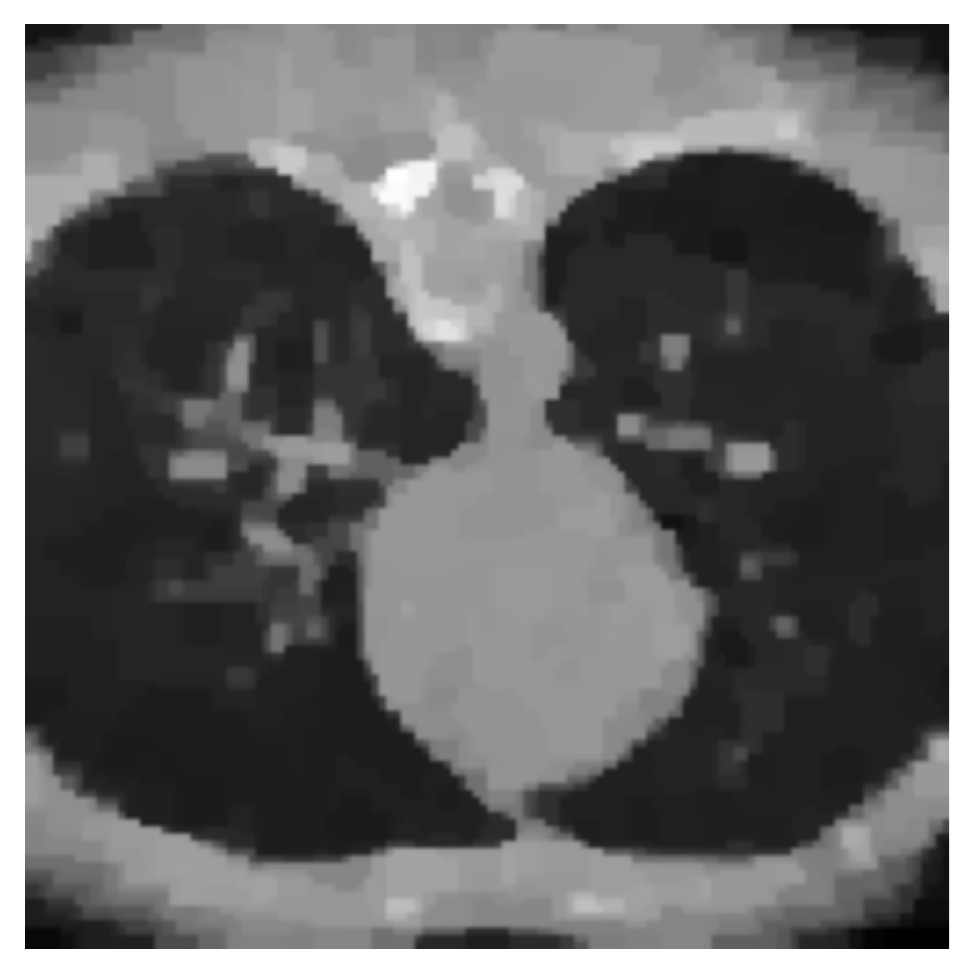}}; 
                \begin{scope}
                \spy[\spycolor,size=0.175\textwidth, every spy on node/.append style={line width = \W}] on (1.6,0.6) in node at (1.44, -1.5);  
                \end{scope}
        \end{tikzpicture}   
        &
        \begin{tikzpicture} [spy using outlines={rectangle, magnification=3, size=1cm, connect spies}, rounded corners]
                \node[anchor=south west,inner sep=0] (image) at (0,0) {\adjincludegraphics[angle=180,width=0.185\textwidth]{./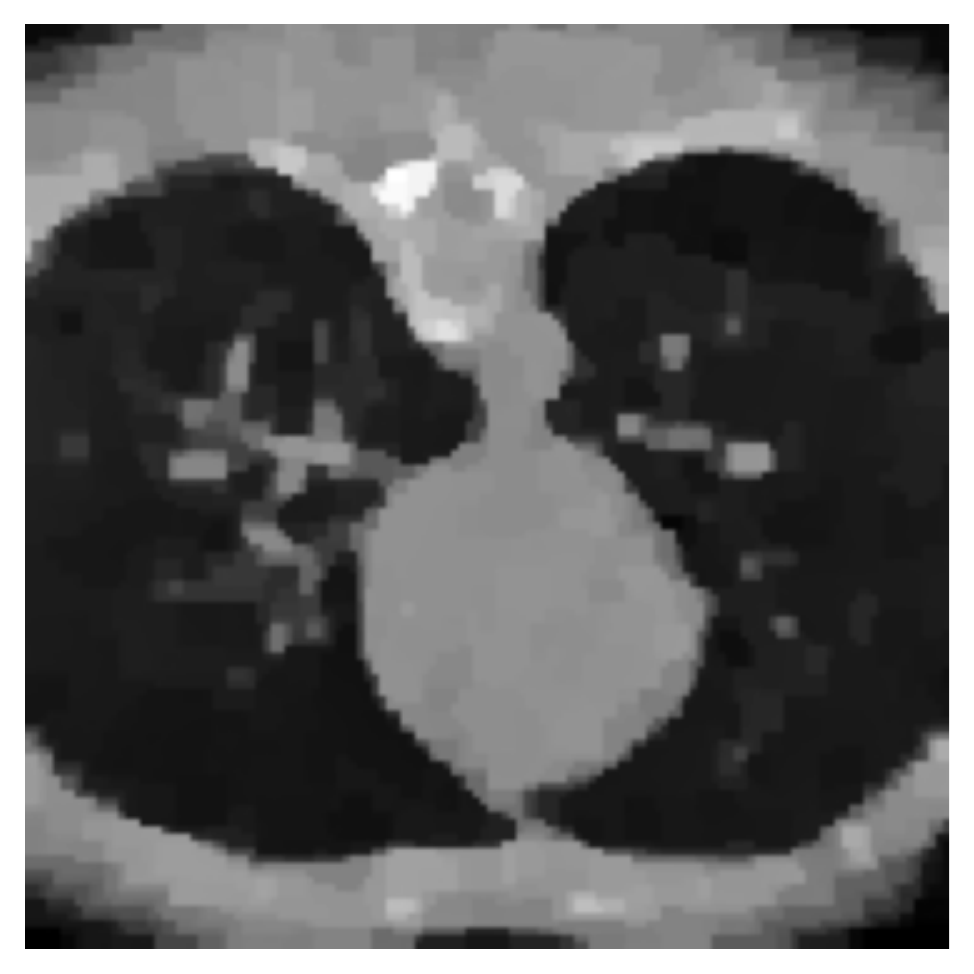}}; 
                \begin{scope}
                \spy[\spycolor,size=0.175\textwidth, every spy on node/.append style={line width = \W}] on (1.6,0.6) in node at (1.44, -1.5);  
                \end{scope}
        \end{tikzpicture}       
        &
        \begin{tikzpicture} [spy using outlines={rectangle, magnification=3, size=1cm, connect spies}, rounded corners]
                \node[anchor=south west,inner sep=0] (image) at (0,0) {\adjincludegraphics[angle=180,width=0.185\textwidth]{./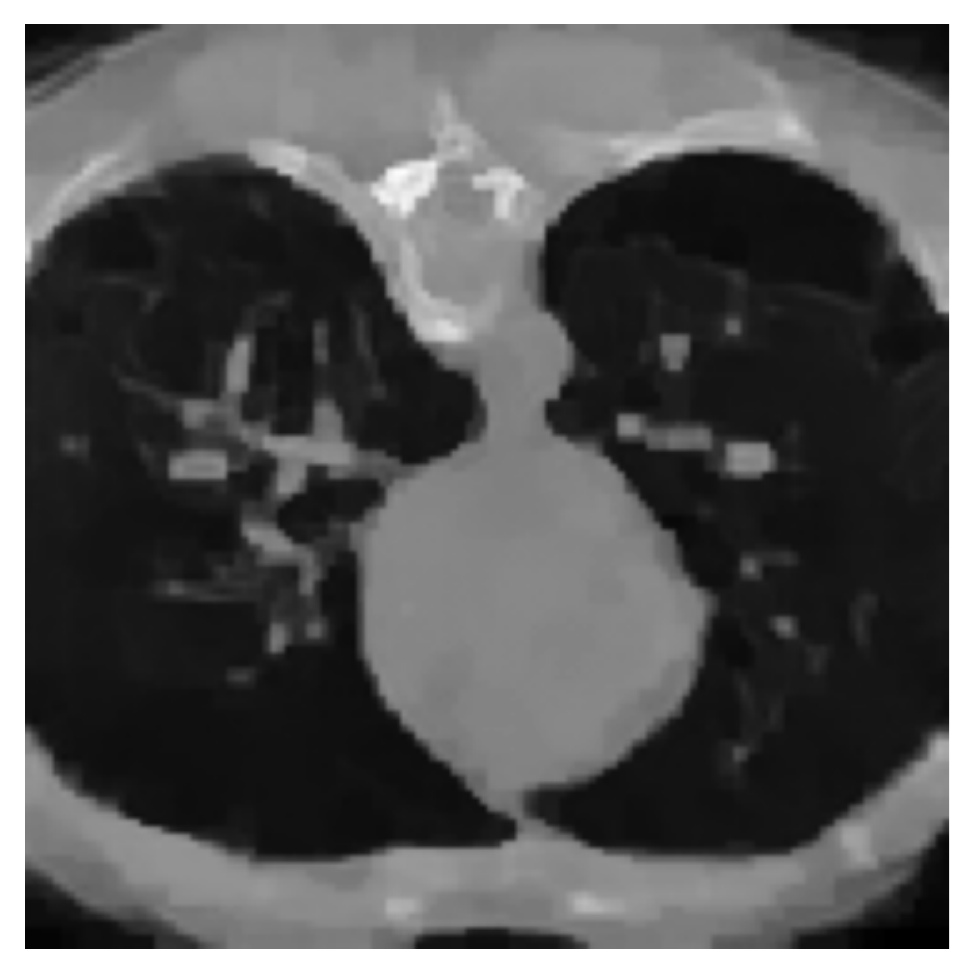}}; 
                \begin{scope}
                \spy[\spycolor,size=0.175\textwidth, every spy on node/.append style={line width = \W}] on (1.6,0.6) in node at (1.44, -1.5);  
                \end{scope}
        \end{tikzpicture}   
        \\
        & SSIM: 0.467 & SSIM: 0.712 & SSIM: 0.763 & SSIM: 0.804
        \\
        & PSNR: 15.00 & PSNR: 19.63 & PSNR: 23.14  & PSNR: 26.12
        \end{tabular}
        \caption{Reconstruction on a validation sample obtained with Filtered Back Projection (FBP) method, TV regularization, Adversarial Regularizer, and Wasserstein-based Projections (left to right). Bottom row shows expanded version of corresponding cropped region indicated by red box.}
        \label{fig:lodopab_reconstructions_1}
    \end{figure}

    \paragraph{Adversarial Regularizer Training Setup}
    To train the adversarial regularizers, we use the code provided in~\cite{advregGithub}. 
    Here, for the ellipses we use a learning rate of $10^{-4}$, a batchsize of 16, and a gradient-norm-weight of 20, and for the LoDoPaB dataset we use a learning rate of $10^{-4}$, a batchsize of 32, and a gradient-norm-weight of $20$, and we also did not use the ``unregularised minimization" option.
    We note that the setup for adversarial regularizers in~\cite{lunz2019adversarial} adds white Gaussian noise {independent} of the data, and is therefore different from our setup 
    As a result, we re-train the adversarial regularizers to match our setup and tune the reconstruction parameters to the best of our ability.
    In particular, after the regularizer is trained, we tune the regularization parameter, stepsize, and number of gradient steps (see Algorithm 2 in~\cite{lunz2019adversarial}) for the highest PSNR. 
    For a fair comparison, the adversarial regularizer is also trained on TV reconstructions as the initial distribution $\bbP^1$.
    \newparagraph
    
    \begin{remarkx}
        When the supports of $\bbP^k$ and $\PU$ do not overlap, it can be shown that distance function $d_{\sM}$ satisfies (\eg see~\cite[Cor. 1]{arjovsky2017wasserstein})
        \begin{equation}
            \bbE_{u\sim\bbP^k}\left[ \|\nabla d_{\sM}(u) \|^2\right] = 1.
            \label{eq: dm-gradient}
        \end{equation}
        This equality can be used to serve as a stopping criterion when tuning the weights $\theta$ to find $J_{\theta^k} = d_{\sM}$. That is, one can compute the expectation on the left hand side of (\ref{eq: dm-gradient}) with the estimate $J_\theta$ in place of $d_{\sM}$.
        During training, this expectation will gradually increase, being bounded above. A stopping criterion can be when this expectation stops increasing or is close to unity. 
    \end{remarkx}
    
    \paragraph{Experimental Results}
    In Tables~\ref{tab:ellipse_results} and~\ref{tab:lodopab_results}, we compare the average PSNR and SSIM on the validation datasets (1,000 images) for the ellipse dataset and LoDoPaB dataset (2,000 images), respectively. 
    These results compare Wasserstein-based projections with FBP, TV, and adversarial regularizers.
    We also show an ellipses image in Figure~\ref{fig:ellipses_reconstructions_3} and a LoDoPab image in Figure~\ref{fig:lodopab_reconstructions_1}.
    For the adversarial regularizers, we find that using 25 steps with a stepsize of 0.05 and a regularization parameter of 2 leads to the highest PSNR on the ellipse dataset. Similarly, we find that using 25 steps with a stepsize of 0.01 and a regularization parameter of 2 leads to the highest PSNR on the LoDoPaB dataset.
    While Wasserstein-based projections performs the best, we note that, \eg some ellipses are not reconstructed fully reconstructed (n.b. this is also the case for adversarial regularizers). 
    This is due to the fact that the initial TV reconstruction completely erases some ellipses due to the sparse angle setup.
    In this case, we have that some modes collapse, and Assumption~\ref{ass: projection-manifold-recovery} is not entirely satisfied. In particular, we obtain that the pushforward is simply a subset of the true manifold $\mathcal{M}$.
    Finally, it is worth noting that projection is expected to do well on signals that are close or on the manifold. That is, the data that ``looks like'' the training data. Nevertheless, one reason the Wasserstein-based projections does not overfit by introducing artifacts (as is common in standard deep learning for imaging) is that our approach is unsupervised. 
    More image reconstructions can be found in Appendix~\ref{subsec:more_reconstructions}.

\section{Conclusion} \label{sec:conclusion}
    We present a new mechanism for solving inverse problems. Our primary result provides a way to solve the variational problem (\ref{eq: VP-manifold}), which requires recovered signal estimates to lay on the underlying low dimensional manifold of true data.
    Our contribution that makes this possible is to show, by solving \textit{unsupervised} learning problems, we can project signals directly onto the manifold.
    That is, given an algorithm within the standard optimization framework for solving the constrained problem (\ref{eq: VP-manifold}) (\eg ADMM or projected gradient), we provide a Wasserstein-based method for solving each projection subproblem defined in an algorithm's update formula.    
    Our method for solving these subproblems is iterative and provably converges to the desired projections (in probability). The presented numerical experiments illustrate these results with projected gradient methods. These examples show Wasserstein-based projections outperform adversarial regularizers, a state-of-the-art unsupervised learning method, on CT image reconstruction problems.
    \newparagraph

    This work may inspire several extensions.
    At the aggregate level of distributions, Wasserstein-based projections may be viewed as a subgradient method for minimizing the Wasserstein-1 distance between the distribution of initial estimates and the true distribution. 
    Indeed, the training process consists of solving a sequence of minimization problems that may be interpreted as training a discriminator (similar to WGANs).    Future work will further investigate this connection. 
    Another extension to our work we intend to investigate is the semi-supervised regime, where we have labels for some of the data, and to investigate inclusion of the measurement data into the projection scheme. Additionally, we intend to investigate guidelines on the design of more effective network architectures such as PDE-based neural networks~\cite{haber2017stable,ruthotto2019deep}.

\section*{Acknowledgments}
 We thank the reviewers for their thoughtful efforts in providing feedback that enabled us to improve the writing of this paper.  
 
\bibliographystyle{siamplain}
\bibliography{WP_references}

\newpage 

\appendix 
\section{Proofs} \label{sec:appendix-proofs}
 
    Herein we prove all of our results. We restate each result before its proof for the reader's convenience. 
    
    {\bf Theorem \ref{thm:distance-attains-sup}.}
    \textit{Under Assumptions \ref{ass: manifold-support} and \ref{ass: projection-manifold-recovery}, for all $k\in \bbN$, $\tau\in [0,\infty)$, and $p\in[1,\infty)$, the pointwise distance function $d_{\sM}$ is a solution to (\ref{eq:distance-sup-problem}). 
        where $\Gamma$ is the set of nonnegative 1-Lipschitz functions  mapping $\sX$ to $\bbR$. 
        Moreover, when $\tau > 0$, the restriction of each minimizer $f^\star$ of (\ref{eq:distance-sup-problem}) to the support of $\bbP^k$ is unique, \ie
        (\ref{eq: distance-support-equality}) holds.}
    \begin{proof}
        This proof is an extension of the proof of Theorem 2 in \cite{lunz2019adversarial}. 
        We proceed by first showing the pointwise distance function $d_{\sM}$ is 1-Lipschitz (Step 1). Then we verify $d_{\sM}$ is a solution to \eqref{eq:distance-sup-problem} (Step 2). Lastly, we verify that $d_{\sM}$ is the unique solution when $\tau > 0$ (Step 3).\\
                
        {\bf Step 1.} For any $u,v\in\sX$, observe that
        \begin{align}
            d_{\sM}(u) - d_{\sM}(v)
            &= \min_{x\in\sM} \|u-x\| - \min_{x\in\sM} \|v-x\| \label{eq:dm-1-lipschitz}\\
            &= \min_{x\in\sM} \|u-x\| - \|v - P_{\sM}(v)\|  \\
            & \leq  \| u - P_{\sM}(v)\| - \|v - P_{\sM}(v)\|  \\
            & \leq \| u-v\|,
        \end{align}
        where the second equality holds since Assumption \ref{ass: manifold-support} implies the projection $P_{\sM}(v)$ of $v$ is unique and the final inequality is a rearrangement of the triangle inequality.        
        An analogous sequence of inequalities verifies that the inequality holds with $u$ and $v$ swapped in the left hand side of (\ref{eq:dm-1-lipschitz}).
        Thus, the pointwise distance function $d_{\sM}$ is 1-Lipschitz.\\
        
        {\bf Step 2.}
        The objective function $\phi: \Gamma \rightarrow \bbR$ of interest is defined by
        \begin{equation}
            \phi(f) \triangleq \bbE_{u\sim \PU}\left[ f(u) + \tau f(u)^p\right] -  \bbE_{u^k\sim\bbP^k}\left[ f(u^k) \right].
        \end{equation}   
        Fix any $f\in\Gamma$. By Assumption \ref{ass: projection-manifold-recovery}, 
        \begin{equation}
        \phi(f)
        =   \bbE_{u\sim\bbP^k} \left[  f(P_{\sM}(u^k)) - f(u^k)   + \tau f(P_{\sM}(u^k))^p   \right]. 
        \end{equation}
        For all $u^k\in\sX$,
        {\small 
        \begin{equation}
            f(u^k) - f(P_{\sM}(u^k))
            \leq \| u^k - P_{\sM}(u^k) \| 
            = d_{\sM}(u^k)
            \ \ \Longrightarrow \ \
            f(P_{\sM}(u^k)) - f(u^k) \geq - d_{\sM}(u^k), 
        \end{equation}}  
        and so
        \begin{align}
             \underbrace{f(P_{\sM}(u^k))  -f(u^k)}_{\geq -d_{\sM}(u^k)} + \underbrace{\tau f(P_{\sM}(u^k))^p}_{\geq 0}
            \geq -  d_{\sM}(u^k).
        \end{align}
        Therefore, taking expectations reveals the inequality
        \begin{align}
            \phi(f)
            \geq \bbE_{u^k\sim\bbP^k}\left[ -d_{\sM}(u^k) \right] 
            = \phi(d_{\sM}) \label{eq:objective-minimum},
        \end{align}        
        where the final equality holds since $d_{\sM}(u) = 0$ for all $u \in \sM$.
        Because (\ref{eq:objective-minimum}) holds for arbitrarily chosen $f\in\Gamma$, we conclude $d_{\sM}$ is a minimizer of $\phi$ over $\Gamma$.\\
        
        {\bf Step 3.} All that remains is to verify the restriction of minimizers of $\phi$ to $\mbox{supp}(\bbP^k)$ is unique. Let $f$ be a minimizer of $\phi$ over $\Gamma$. By way of contradiction, suppose there exists a set $V\subseteq  \sX$ with positive measure (\ie $\PU[V] > 0$) for which $f > 0$ on $V$. Choose a compact subset $K\subseteq V$ with positive measure and set
        \begin{equation}
            \xi \triangleq \min_{u\in K} f(u).
            \label{eq:xi-K}
        \end{equation}
        By the continuity of $f$ and the compactness of $K$, it follows that the minimum in (\ref{eq:xi-K}) exists. Moreover, by the choice of $K$, we deduce $\xi > 0$.
        Then observe
        \begin{equation}
            \bbE_{u^k\sim\bbP^k}\left[ f(P_{\sM}(u^k))^p\right]
            = \int_{\sX} f(P_{\sM}(u^k))^p \ \mbox{d}\bbP^k(u^k)
            \geq \int_{K}  \underbrace{f(u)^p}_{\geq \xi^p} \ \mbox{d}\PU(u) 
            \geq \xi^p\cdot \PU[ K].
        \end{equation}
        Then applying the above results yields
        \begin{align}
            \phi(f)
            & = \bbE_{u\sim\bbP^k}\Big[  \underbrace{f(P_{\sM}(u))-f(u)}_{\geq -d_{\sM}(u)} + \underbrace{ \tau f(P_{\sM}(u))^p}_{\geq \tau \xi^p\cdot \PU[K]} \Big]  \\
            &\geq \underbrace{\bbE_{u\sim\bbP^k}[-d_{\sM}(u)]}_{=\phi(d_{\sM})} + \underbrace{\tau \xi^p\cdot \PU[K]}_{>0}\\
            & > \phi(d_{\sM}),
        \end{align}
        from which we deduce $f$ is not a minimizer of $\phi$ over $\Gamma$, a contradiction. This contradiction proves the earlier assumption was false, and so $f= 0$ on $\sM$.\\
        
        Using the fact that $f = 0$ on $\sM$, observe that
        \begin{equation}
            f(u)
            = f(u) - f(P_{\sM}(u))
            \leq \|u - P_{\sM}(u)\|
            = d_{\sM}(u),
            \ \ \ \mbox{for all $u\in\sX$,}
        \end{equation}
        \ie $f \leq d_{\sM}$.        
        By way of contradiction, suppose there is a set $U$ of positive measure (\ie $\bbP^k[U] > 0$) such that $f < d_{\sM}$ in $U$. 
        Let $\tilde{K}$ be a compact subset of $U$ (\ie $\tilde{K} \subseteq U$) with positive measure and set
        \begin{equation}
            \zeta \triangleq \min_{u \in \tilde{K}} d_{\sM}(u) - f(u) > 0.
        \end{equation}
        This implies 
        \begin{align}
            \varepsilon \triangleq \int_{\tilde{K}} d_{\sM}(u) - f(u) \ \mbox{d}\bbP^k(u)           
            \geq \zeta \cdot \bbP^k[\tilde{K}]
            > 0. 
            \label{eq:diff-pos-measure}
        \end{align}
        Then
        \begin{equation}
            \bbE_{u^k\sim \bbP^k}\left[ d_{\sM}(u^k) - f(u^k) \right]
            = \int_{\sX} d_{\sM}(u^k) - f(u^k) \ \mbox{d}\bbP^k(u^k)
            \geq \varepsilon,
        \end{equation}
        where the final inequality holds by (\ref{eq:diff-pos-measure}) and the fact $f\leq d_{\sM}$.
        Rearranging reveals
        \begin{equation}
            \phi(d_{\sM})
            < \phi(d_{\sM}) + \varepsilon
            = - \bbE_{u\sim \bbP^k}\left[ d_{\sM}(u) \right] + \varepsilon
            \leq -  \bbE_{u\sim \bbP^k}\left[ f(u) \right]
            = \phi(f),
        \end{equation}
        and so $f$ does not minimize $\phi$.
        This contradiction proves the assumption that $U$ has positive measure was false, from which we deduce $f \geq d_{\sM}$ in $\mbox{supp}(\bbP^k)$.
        Therefore, combining our results, we conclude  $f = d_{\sM}$   in $\mbox{supp}(\bbP^k)$.
    \end{proof}

The following lemma can be found in various forms in the literature (\eg see \cite{liu1995nonlinear,reich1979constructive,xu2002iterative}) and is used below in the proof of our main result.
    
    \begin{lemma} \label{lemma: delta-k-inequality-conv} 
    If $\{\delta_n\}$ is a sequence of nonnegative real numbers such that
    \begin{equation}
        \delta_{k+1} \leq (1-\gamma_k) \delta_k + \gamma_k \sigma_k, \ \ \ \mbox{for all $k\in\bbN$,}
        \label{eq: lemma-delta-k-inequality}
    \end{equation}
    where $\{\gamma_k\}$ is a sequence in $(0,1]$ and $\{\sigma_k\}$ is a sequence in $\bbR$ such that
    \begin{equation}
        \sum_{k\in\bbN} \gamma_k = \infty
    \end{equation}
    and
    \begin{equation}
        \limsup_{k\rightarrow\infty} \sigma_k \leq 0,
    \end{equation}
    then 
    \begin{equation}
        \limk \delta_k = 0.
    \end{equation}
    \end{lemma}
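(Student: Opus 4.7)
The plan is to use a standard $\varepsilon$-shift argument, combined with the classical fact that $\sum \gamma_k = \infty$ forces the infinite product $\prod (1-\gamma_k)$ to vanish. In particular, I would fix an arbitrary $\varepsilon > 0$, show that $[\delta_k - \varepsilon]_+$ eventually behaves like a supermartingale-type quantity that contracts by the factor $(1-\gamma_k)$, and then let $\varepsilon \downarrow 0$.

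First I would choose, using $\limsup_k \sigma_k \leq 0$, an index $N$ such that $\sigma_k \leq \varepsilon$ for all $k \geq N$. Substituting this into the hypothesis \eqref{eq: lemma-delta-k-inequality} and subtracting $\varepsilon$ from both sides yields, for all $k \geq N$,
\begin{equation}
\delta_{k+1} - \varepsilon \;\leq\; (1-\gamma_k)(\delta_k - \varepsilon) + \gamma_k(\sigma_k - \varepsilon) \;\leq\; (1-\gamma_k)(\delta_k - \varepsilon).
\end{equation}
Since $\gamma_k \in (0,1]$ implies $1-\gamma_k \geq 0$, taking positive parts preserves the inequality, giving
\begin{equation}
[\delta_{k+1} - \varepsilon]_+ \;\leq\; (1-\gamma_k)\,[\delta_k - \varepsilon]_+, \qquad \text{for all } k \geq N.
\end{equation}

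Next I would iterate this recursion to obtain $[\delta_{k+1} - \varepsilon]_+ \leq [\delta_N - \varepsilon]_+ \prod_{j=N}^{k}(1-\gamma_j)$. Using the elementary bound $1-\gamma_j \leq \exp(-\gamma_j)$, the product is dominated by $\exp\bigl(-\sum_{j=N}^k \gamma_j\bigr)$, which tends to $0$ as $k \to \infty$ because $\sum_{k} \gamma_k = \infty$. Hence $[\delta_{k+1} - \varepsilon]_+ \to 0$, and since $\delta_k \geq 0$ this yields $\limsup_{k\to\infty} \delta_k \leq \varepsilon$. Because $\varepsilon > 0$ was arbitrary and $\delta_k \geq 0$ for all $k$, we conclude $\lim_{k\to\infty} \delta_k = 0$.

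The argument is essentially routine once the key shift $\delta_k \mapsto [\delta_k - \varepsilon]_+$ is in place; the main subtlety to get right is the step that takes positive parts on both sides, which requires the observation $1-\gamma_k \geq 0$ to preserve the direction of the inequality. A secondary point to handle carefully is the case $\gamma_k = 1$, where $(1-\gamma_k) = 0$ collapses the recursion term; this causes no problem since the right-hand side is still a valid upper bound. No technical obstacle beyond this is expected.
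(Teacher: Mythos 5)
Your proof is correct. Note that the paper itself does not prove this lemma; it only cites standard references (Liu, Reich, Xu), so there is no in-paper argument to compare against. Your $\varepsilon$-shift argument is essentially the classical proof from those sources (e.g., Xu's Lemma 2.5 in the 2002 paper): pick $N$ with $\sigma_k \le \varepsilon$ for $k \ge N$, absorb $\varepsilon$ into the recursion to get
\begin{equation*}
\delta_{k+1}-\varepsilon \le (1-\gamma_k)(\delta_k-\varepsilon),\qquad k\ge N,
\end{equation*}
telescope, and use $\prod_{j\ge N}(1-\gamma_j)\le \exp\bigl(-\sum_{j\ge N}\gamma_j\bigr)=0$. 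All the steps check out: the identity $(1-\gamma_k)(\delta_k-\varepsilon)+\gamma_k(\sigma_k-\varepsilon)=(1-\gamma_k)\delta_k+\gamma_k\sigma_k-\varepsilon$ is exact, taking positive parts is legitimate because $1-\gamma_k\ge 0$, the degenerate case $\gamma_k=1$ only helps, and the conclusion $\limsup_{k\to\infty}\delta_k\le\varepsilon$ for arbitrary $\varepsilon>0$ combined with nonnegativity gives the limit. The positive-part device is a slight cosmetic variation on the usual presentation (which keeps the term $\varepsilon\bigl(1-\prod_j(1-\gamma_j)\bigr)\le\varepsilon$ explicitly), but it is the same argument and, if anything, cleaner. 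No gaps.
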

    
    \ \\
    
    The next lemma identifies the relationship between the step size $\beta_k$ and the average distance to the manifold. 
    
    \begin{lemma}
        \label{lemma:betak}
        Under the assumptions of Theorem \ref{thm: main-result}, 
        \begin{equation}
            \beta_k = \bbE_{u^k\sim\bbP^k}\left[ d_{\sM}(u^k)\right].
        \end{equation}
    \end{lemma}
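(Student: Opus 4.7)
The plan is to derive the identity directly from the definition of $\beta_k$ in Line 7 of Algorithm \ref{alg: training} together with Theorem \ref{thm:distance-attains-sup}, which identifies the trained function $J_{\theta^k}$ with $d_{\sM}$ on the supports that matter. Unpacking the algorithm, $\beta_k = \bbE_{u^k\sim\bbP^k}[J_{\theta^k}(u^k)] - \bbE_{u\sim\PU}[J_{\theta^k}(u)]$, so the lemma reduces to two sub-tasks: (i) replace $J_{\theta^k}$ with $d_{\sM}$ inside the first expectation, and (ii) show that the second expectation vanishes.

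For (i), I would invoke the uniqueness part of Theorem \ref{thm:distance-attains-sup}: since $\tau > 0$ and $\theta^k$ is chosen in Line 6 as a minimizer of \eqref{eq:distance-sup-problem}, that theorem gives $J_{\theta^k}(u^k) = d_{\sM}(u^k)$ for every $u^k \in \mathrm{supp}(\bbP^k)$, and integrating against $\bbP^k$ immediately yields $\bbE_{u^k\sim\bbP^k}[J_{\theta^k}(u^k)] = \bbE_{u^k\sim\bbP^k}[d_{\sM}(u^k)]$. For (ii), I would use the fact, extracted from Step 3 of the proof of Theorem \ref{thm:distance-attains-sup}, that any minimizer $f^\star$ of \eqref{eq:distance-sup-problem} must satisfy $f^\star = 0$ on $\sM = \mathrm{supp}(\PU)$. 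Intuitively, the penalty $\tau f^\star(u)^p$ together with the nonnegativity constraint $f^\star \in \Gamma$ penalizes any positive mass of $f^\star$ on $\sM$; formally, as in that proof, a compactness argument shows that if $f^\star > 0$ on a subset of $\sM$ of positive $\PU$-measure, one can strictly decrease the objective, contradicting optimality. Since $J_{\theta^k}$ is such a minimizer, $\bbE_{u\sim\PU}[J_{\theta^k}(u)] = 0$.

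Combining the two observations yields $\beta_k = \bbE_{u^k\sim\bbP^k}[d_{\sM}(u^k)] - 0$, which is precisely the claim. The only subtle point to be careful about is that the statement of Theorem \ref{thm:distance-attains-sup} only asserts equality $J_{\theta^k} = d_{\sM}$ on $\mathrm{supp}(\bbP^k)$, not on $\sM$, so handling the $\PU$-expectation requires the vanishing-on-$\sM$ property taken from the theorem's proof rather than from its statement alone. Beyond that bit of bookkeeping, this is essentially a one-line computation, so I do not expect any serious obstacles.
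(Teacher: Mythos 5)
Your proposal is correct and follows essentially the same route as the paper: the paper's proof likewise substitutes $J_{\theta^k} = d_{\sM}$ into the defining formula for $\beta_k$ from Line 7 of Algorithm \ref{alg: training} and kills the $\PU$-expectation using $d_{\sM} = 0$ on $\sM$. If anything, you are more careful than the paper on the one subtle point you flag --- the published proof silently replaces $J_{\theta^k}$ by $d_{\sM}$ under the $\PU$-expectation even though the statement of Theorem \ref{thm:distance-attains-sup} only asserts equality on $\mathrm{supp}(\bbP^k)$, whereas you correctly source the vanishing-on-$\sM$ property from Step 3 of that theorem's proof.
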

    \begin{proof}
        This follows from the fact that $u\sim\PU$ implies $d_{\sM}(u) = 0$. Substituting yields
        \begin{align}
            \beta_k
            & = \bbE_{u^k\sim\bbP^k}\left[ J_{\theta^k}(u^k)\right]
            - \bbE_{u\sim\PU }\left[ J_{\theta^k}(u)\right]\\
            &= \bbE_{u^k\sim\bbP^k}\left[ d_{\sM}(u^k)\right]
            - \underbrace{\bbE_{u\sim\PU }\left[ d_{\sM}(u)\right]}_{=0}\\
            &= \bbE_{u^k\sim\bbP^k}\left[ d_{\sM}(u^k)\right].
        \end{align}
    \end{proof}
    
    \ \\ 
    
    Below is a proof of the main result, Theorem \ref{thm: main-result}.
    The analysis for the Halpern iteration closely follows the approach in \cite{yao2017strong}. For completeness, we first restate the theorem. \\
    
    {\bf Theorem} \ref{thm: main-result}. 
        \textit{$\mathrm{(Convergence\ of\ Adversarial\ Projections)}$
        Suppose Assumptions \ref{ass: manifold-support}, \ref{ass: bounded-D1},  \ref{ass: projection-manifold-recovery}, \ref{ass: lip}, and \ref{ass: gammak-props} hold. 
        If the sequence $\{u^k\}$ is generated by Algorithm \ref{alg: adversarial-projection}, then the sequence $\{u^k\}$ converges  to $P_{\sM}(u^1)$ in mean square, and thus, in probability.}
        
    \begin{proof}  
        We proceed in the following manner. First, set $z \triangleq P_{\sM}(u^1)$.
        An inequality is derived bounding the expectation of $\|g_k(u^k)-z\|^2$ (Step 1).
        This is used to show the sequence $\{\delta_k\}$ is bounded (Step 2) and then obtain an inequality relating $\delta_{k+1}$, $\delta_k$, and $\sigma_k$ as in (\ref{eq: lemma-delta-k-inequality}) (Step 3).
        We next verify the limit supremum of the sequence $\{\sigma_k\}$ is finite (Step 4), which enables us to show a subsequence of $\{d_k\}$ converges to zero (Step 5). This enables us to prove $\delta_k \rightarrow 0$ (Step 6) through verifying $ \limsup_{k\rightarrow \infty} \sigma_k \leq 0$. This establishes convergence in mean square. By Markov's inequality, convergence in probability then follows.\\
    
        Throughout this proof we define, for all $k\in\bbN$, 
        \begin{align}
            \delta_k & \triangleq \bbE_{u^k\sim\bbP^k}\left[ \|u^k-z\|^2\right], \\
            \sigma_k & \triangleq \left[ \bbE_{u^k\sim\bbP^k}\left[ 2\braket{u^{k+1}-z,u^1-z} \right] -\dfrac{(1-\gamma_k)\beta_k^2(\mu_1+\mu_2)(2-\mu_1-\mu_2)}{\gamma_k} \right]. \label{eq: sigmak-def}
        \end{align}
        In addition, for notational brevity, we henceforth write $u^k$ in place of the random variable $\sA_k(u^1)$ (n.b. $u^1$ is the sample of the random variable, usually denoted by $\omega$).
        We also notationally suppress the dependence of $\lambda_k(u)$ on $u$, as defined in \eqref{eq:lambda-def}, by writing $\lambda_k = \lambda_k(u)$.
        \\
        
        {\bf Step 1.} We first  derive an inequality for relaxed projections.
        Define the residual operator
        \begin{equation}
            S(u) \triangleq u - P_{\sM}(u).
        \end{equation}
        Fix any $k\in\bbN$. Using \eqref{eq: gk-relaxed-projection} with $\alpha_k=\alpha_k(u)$, observe
        \begin{align}
            \|g_k(u^k)-z\|^2
            & = \|u^k + \alpha_k (P_{\sM}(u^k)-u^k) - z\|^2\\
            & = \|u^k-z\|^2 - 2\alpha_k \braket{u^k-z, u^k-P_{\sM}(u^k)} + \alpha_k^2 \|u^k-P_{\sM}(u^k)\|^2 \\
            & = \| u^k - z\|^2 - 2\alpha_k \braket{u^k - z, S(u^k) - S(z)} + \alpha_k^2 \|S(u^k)\|^2 ,
            \label{eq: gk-expansion}
        \end{align}
        where we note $S(z) = 0$ since $z = P_{\sM}(u^1) \in \sM$.
        Furthermore, $S$ is firmly nonexpansive (\eg see Prop. 4.16 in \cite{bauschke2017convex}), which implies
        \begin{equation}
            \braket{u^k - z, S(u^k) - S(z)}
            \geq \|S(u^k)-S(z)\|^2
            = \|S(u^k)\|^2.
            \label{eq: FNE-inequality}
        \end{equation}
        Combining (\ref{eq: gk-expansion}) and (\ref{eq: FNE-inequality}) yields
        \begin{align}
            \|g_k(u^k)-z\|^2
            & \leq \|u^k - z\|^2 - \alpha_k (2-\alpha_k) \|S(u^k)\|^2  \\
            & = \|u^k - z\|^2 - \alpha_k (2-\alpha_k)  d_{\sM}(u^k)^2 \\
            & = \|u^k-z\|^2 - \lambda_k (2d_{\sM}(u^k)-\lambda_k).
            \label{eq: gk-inequality}
        \end{align}
        Then observe
        \begin{align}
            & \ \ \ \ \bbE_{u^k\sim\bbP^k}\left[\lambda_k(2d_{\sM}(u^k)-\lambda_k)\right] \\
            & = \bbE_{u^k\sim\bbP^k}\left[ \lambda_k(2-\mu_2)d_{\sM}(u^k) - \mu_1\beta_k \lambda_k    \right]\\
            & = \left(\mu_1 \beta_k^2 + \mu_2  \cdot \bbE_{u^k\sim\bbP^k}\left[ d^2_{\sM}(u^k) \right]\right)(2-\mu_2) - \mu_1 \beta_k \left(\mu_1\beta_k + \mu_2 \beta_k\right)\\
            & \geq \beta_k^2 \left[ (\mu_1+\mu_2)(2-\mu_2) - \mu_1(\mu_1+\mu_2) \right] \\
            & = \beta_k^2 (\mu_1+\mu_2) (2-\mu_1-\mu_2), \label{eq: gk-descent-terms}
        \end{align}
        where we note $\beta_k$ is the expected value of the distance (see Lemma \ref{lemma:betak}) and the inequality is an application of Jensen's inequality.
        Thus, combining  (\ref{eq: gk-inequality}) and (\ref{eq: gk-descent-terms}) in expectation yields
        \begin{align}
            \bbE_{u^k \sim\bbP^k} \left[ \|g_k(u^k) -z\|^2  \right]
            & \leq \bbE_{u^k \sim\bbP^k} \left[ \|u^k-z\|^2\right]
            -  \beta_k^2 (\mu_1+\mu_2) (2-\mu_1-\mu_2) \\
            & = \delta_k - \beta_k^2 (\mu_1+\mu_2) (2-\mu_1-\mu_2),
            \label{eq: gk-inequality-expectation}
        \end{align}        
        where the choice of $\mu$ in Line 1 of Algorithm \ref{alg: training} ensures $\mu_1 + \mu_2 > 0$ and $(2-\mu_1-\mu_2) > 0$.\\

        {\bf Step 2.}  
        Expanding the expression for $\delta_{k+1}$, we deduce
        \begin{align}
            \delta_{k+1}
            & =  \bbE_{u^k\sim\bbP^k}\left[ \| u^{k+1} -z \|^2  \right]  \\
            & =  \bbE_{u^k\sim\bbP^k}\left[   \| \gamma_k u^1  + (1-\gamma_k) g_k(u^k) -  z \|^2 \right] \label{eq: uk-exp1}\\
            & \leq  \bbE_{u^k\sim\bbP^k}\left[ \gamma_k \|u^1 -z \|^2 + (1-\gamma_k)\| g_k(u^k) -z \|^2 \right] \label{eq: uk-exp2}\\
            & =   \gamma_k\cdot  \bbE_{u^k\sim\bbP^k}\left[ \|u^1 -z \|^2 \right] 
             + (1-\gamma_k)  \bbE_{u^k\sim\bbP^k}\left[\| g_k(u^k) -z \|^2 \right] \\
            & \leq \gamma_k \delta_1 + (1-\gamma_k)\delta_k \label{eq: step-2-gk-01}\\
            & \leq \max\left( \delta_1, \delta_k \right),
        \end{align}                        
        where (\ref{eq: uk-exp2}) follows from (\ref{eq: uk-exp1}) by Jensen's inequality, (\ref{eq: step-2-gk-01}) holds by applying (\ref{eq: gk-inequality-expectation}), and the final inequality holds by Assumption \ref{ass: gammak-props}i.
        Through induction, it follows that $\{\delta_k\}$ is bounded since
        \begin{equation}
            \delta_{k+1}
            \leq  \delta_1 < \infty,
            \ \ \ \mbox{for all $k\in\bbN$.}
            \label{eq: delta-k-bound}
        \end{equation}
        
        {\bf Step 3.}
        To establish a useful inequality bounding $\delta_{k+1}$, we expand this expression once again to obtain, for all $k\in\bbN$,
        {\small 
        \begin{align}
            \delta_{k+1}
            & = \bbE_{u^k\sim\bbP^k} \left[         \|u^{k+1}-z\|^2\right] \\
            & = \bbE_{u^k\sim\bbP^k}\left[ \|\gamma_k u^1 + (1-\gamma_k)  g_k(u^k) - z\|^2 \right]\\
            & = \bbE_{u^k\sim\bbP^k}\left[ \gamma_k^2 \|u^1-z\|^2 + (1-\gamma_k)^2  \| g_k(u^k) - z\|^2 +2 \gamma_k(1-\gamma_k) \braket{ u^1-z, g_k(u^k)-z} \right]\\
            & \leq (1-\gamma_k)\cdot \bbE_{u^k\sim\bbP^k}\left[ \| g_k(u^k)-z\|^2\right] + 2\gamma_k\cdot \bbE_{u^k\sim\bbP^k}\left[\braket{u^{k+1}-z, u^1-z}\right] \\
            & \leq (1-\gamma_k)\left( \delta_k- \beta_k^2(\mu_1+\mu_2)(2-\mu_1-\mu_2)\right) + 2 \gamma_k\cdot \bbE_{u^k\sim\bbP^k}\left[ \braket{u^{k+1}-z,u^1-z} \right]\\
            & = (1-\gamma_k)\delta_k + \gamma_k\left[ \bbE_{u^k\sim\bbP^k}\left[ 2\braket{u^{k+1}-z,u^1-z} \right] -\dfrac{(1-\gamma_k)\beta_k^2(\mu_1+\mu_2)(2-\mu_1-\mu_2)}{\gamma_k} \right],
            \label{eq: delta-k-halpern-bound}
        \end{align}
        }where we leverage the definition of $u^{k+1}$ and the inclusions $\gamma_k, (1-\gamma_k)\in [0,1]$.
        Substituting   the definition of $\sigma_k$ from (\ref{eq: sigmak-def})  into (\ref{eq: delta-k-halpern-bound}) yields the inequality
        \begin{equation}
            \delta_{k+1} \leq (1-\gamma_k) \delta_k + \gamma_k \sigma_k,
            \ \ \ \mbox{for all $k\in\bbN$.}
            \label{eq: delta-k-descent-inequality}
        \end{equation} 
        
        {\bf Step 4.} We now show the limit supremum of $\{\sigma_k\}$ is finite.
        Indeed, for all $k\in\bbN$,
        \begin{align}
            \sigma_k 
            &\leq \bbE_{u^k\sim\bbP^k}\left[ 2 \braket{u^{k+1}-z, u^1-z}\right]\\
            &\leq  \bbE_{u^k\sim\bbP^k}\left[ \|u^{k+1}-z\|^2 + \|u^1-z\|^2\right]\\
            &= \delta_{k+1} + \delta_1\\
            &\leq 2 \delta_1\\
            &< \infty,
        \end{align}
        and this implies
        \begin{equation}
            \limsup_{k\rightarrow\infty} \sigma_k < \infty.
        \end{equation}
        Next, by way of contradiction, suppose
        \begin{equation}
            \limsup_{k\rightarrow\infty} \sigma_k < -1.
            \label{eq: sigma-k-contradiction}
        \end{equation}
        This implies there exists $N_1 \in \bbN$ such that
        \begin{equation}
            \sigma_k \leq -1, \ \ \ \mbox{for all $k\geq N_1$,}
        \end{equation}
        and so
        \begin{equation}
            \delta_{k+1} \leq (1-\gamma_k) \delta_k  - \gamma_k
            \leq \delta_k - \gamma_k,
            \ \ \ \mbox{for all $k\geq N_1$.}
        \end{equation}
        By induction, it follows that
        \begin{equation}
            \delta_{k+1} \leq \delta_{N_1} - \sum_{\ell=N_1}^k \gamma_\ell.
        \end{equation}
        Applying Assumption \ref{ass: gammak-props}iii and letting $k\rightarrow\infty$ reveals
        \begin{equation}
            \limsup_{k\rightarrow\infty} \delta_k \leq \delta_{N_1} - \sum_{\ell=N_1}^\infty \gamma_\ell = -\infty,
        \end{equation}
        which induces a contradiction since the sequence $\{\delta_k\}$ is nonnegative. This proves (\ref{eq: sigma-k-contradiction}) is false, and so
        \begin{equation}
            -1 \leq \limsup_{k\rightarrow\infty} \sigma_k < \infty.
            \label{eq: sigmak-bounds}
        \end{equation}
        
        {\bf Step 5.}
        Because (\ref{eq: sigmak-bounds}) shows the limit supremum of the sequence $\{\sigma_k\}$ is finite, there is a convergent subsequence $\{\sigma_{n_k}\} \subseteq \{\sigma_k\} $ satisfying
        {\small 
        \begin{align}
            \limsup_{k\rightarrow\infty} \sigma_k
            &= \limk \sigma_{n_k} \\
            &= \limk \left[ \bbE_{u^{n_k}\sim\bbP^{n_k}}\left[ 2\braket{u^{k+1}-z,u^1-z} \right] -\dfrac{(1-\gamma_{n_k})\beta_{n_k}^2(\mu_1+\mu_2)(2-\mu_1-\mu_2)}{\gamma_{n_k}} \right].
            \label{eq: sigma-k-limit-exp}
        \end{align}}
        By the Cauchy-Schwarz inequality and the  result (\ref{eq: delta-k-bound}) in Step 1, 
        \begin{align}
            \bbE_{u^{n_k}\sim\bbP^{n_k}}\left[\left| \braket{u^{n_k+1}-z, u^1-z} \right| \right]
            & \leq \bbE_{u^{n_k}\sim\bbP^{n_k}}\left[ \dfrac{1}{2}\left( \|u^{n_k+1}-z\|^2 + \| u^1-z\|^2 \right)\right]\\
            & = \dfrac{1}{2}\left( \delta_{n_k+1} + \delta_1 \right) \\
            & \leq \delta_1,
        \end{align}
        and so $\{\bbE_{u^{n_k}\sim\bbP^{n_k}}[\braket{u^{n_k+1}-z, u^1-z}]\}$   is a bounded sequence of real numbers.
        Thus, it contains a convergent subsequence $\{ \braket{u^{m_k+1}-z, u^1-z}\}$ (\ie $\{m_k\}\subseteq\{n_k\}$).
        This implies, when combined with the convergence of $\{\sigma_{m_k}\}$ and (\ref{eq: sigma-k-limit-exp}),  existence of the limit
        \begin{equation}
            \limk\dfrac{(1-\gamma_{m_k})\beta_{m_k}^2(\mu_1+\mu_2)(2-\mu_1-\mu_2)}{\gamma_{m_k}}.
        \end{equation}
        Since Assumption \ref{ass: gammak-props}ii asserts $\gamma_{k} \rightarrow 0$, it follows that
        \begin{equation}
            \limk (1-\gamma_k)\beta_{m_k}^2(\mu_1+\mu_2)(2-\mu_1-\mu_2)= 0
            \ \ \ \Longrightarrow \ \ \ 
            \limk \beta_{m_k} = 0,
            \label{eq: dmk-limit}
        \end{equation}
        \ie a subsequence $\{\beta_{m_k}\}$ of $\{\beta_k\}$ converges to zero.\\

        {\bf Step 6.}
        Observe, for all $k\in\bbN$,
        \begin{align}
            \bbE_{u^k\sim\bbP^k}\left[ \|u^k-u^1\| \right]
            & \leq \bbE_{u^k\sim\bbP^k}\left[ \|u^k-z\| + \|u^1 - z\| \right] \\
            & \leq \sqrt{\bbE_{u^k\sim\bbP^k}\left[ \|u^k-z\|^2\right]} + \sqrt{\bbE_{u^k\sim\bbP^k}\left[ \|u^1-z\|^2\right]} \\
            & = \sqrt{\delta_k} + \sqrt{\delta_1} \\
            & \leq 2\sqrt{\delta_1},
            \label{eq: uk-diff-bound}
        \end{align}
        where the second equality is an application of Jensen's inequality and the final inequality follows from \eqref{eq: delta-k-bound}. Applying (\ref{eq: uk-diff-bound}) reveals
        \begin{align}
            \bbE_{u^k\sim\bbP^k} \left[\| u^{k+1} - u^k\|\right]
            & \leq \gamma_{k} \bbE_{u^k\sim\bbP^k}\left[ \|u^k - u^1\|\right] + (1-\gamma_k)\bbE_{u^k\sim\bbP^k}\left[ \| g_k(u^k) - u^k\|\right] \\
            & \leq 2\gamma_k \sqrt{\delta_1} + (1-\gamma_k) \   \bbE_{u^k\sim\bbP^k} \left[  \lambda_k \right] \label{eq: projection-unity-norm}\\
            & = 2\gamma_k\sqrt{\delta_1} + (1-\gamma_k)(\mu_1+\mu_2)\beta_k,
        \end{align}       
        where (\ref{eq: projection-unity-norm}) holds since, by the choice of $g_k$ in (\ref{eq: gk-relaxed-projection}),
        \begin{equation}
            g_k(u) - u
            \in  \lambda_k \partial d_{\sM}(u)
            \ \ \ \Longrightarrow \ \ \ 
            \|g_k(u) - u\| \leq \lambda_k ,
        \end{equation}
        with the implication following from the fact that $\partial d_{\sM}(u)$ is a subset of the unit ball centered at the origin (since $d_{\sM}$ is 1-Lipschitz).        
        Utilizing (\ref{eq: dmk-limit}) and the fact $\gamma_{m_k}\rightarrow 0$, we deduce
        \begin{equation}
            \limk \bbE_{u^{m_k}\sim\bbP^{m_k}}\left[ \|u^{m_k+1} - u^{m_k}\|\right]
            \leq \limk  2\gamma_{m_k} \sqrt{\delta_1} + (1-\gamma_{m_k}) (\mu_1+\mu_2) \beta_{m_k}
            = 0.
        \end{equation}
        Because the left hand side is nonnegative, the squeeze lemma implies
        \begin{equation}
            \limk \bbE_{u^{m_k}\sim\bbP^{m_k}}\left[ \|u^{m_k+1} - u^{m_k}\|\right] = 0.
            \label{eq: umk-diff-conv-to-0}
        \end{equation} 
        Also, the boundedness of $\mbox{supp}(\bbP^1)$ and $\sM$ implies there exists a constant $C >0$ such that
        \begin{equation}
            \bbP^1\big[\|u^1\| \leq C \big] = \bbP^1 \big[ \|z\| \leq C \big] = 1.
        \end{equation}
        Thus, 
        \begin{equation}
            \bbP^1 \big[  2\|u^1-z\|  \leq 4C \big] = 1,
        \end{equation}
        from which we deduce
        \begin{align}
            \limsup_{k\rightarrow\infty}\sigma_k
            &= \limk \sigma_{m_k} \\
            &= \limk \left[  \bbE_{u^{m_k}\sim\bbP^{m_k}} \left[2\braket{u^{m_k+1}-z, u^1-z}\right] - \dfrac{\mu (1-\gamma_{m_k})}{4\gamma_{m_k}}\cdot \beta_{m_k}^2 \right]\\
            &\leq \limk \bbE_{u^{m_k}\sim\bbP^{m_k}}\left[ 2\braket{u^{m_k+1}-z, u^1-z}\right]\\
            & = \limk \bbE_{u^{m_k}\sim\bbP^{m_k}}\left[ 2\braket{P_{\sM}(u^{m_k+1})-z, u^1-z} \right] \\
            & + \bbE_{u^{m_k}\sim\bbP^{m_k}}\left[ 2\braket{u^{m_k+1} - P_{\sM}(u^{m_k+1}), u^1-z}\right]\\
            & \leq \limk 4C \cdot \bbE_{u^{m_k}\sim\bbP^{m_k}}\left[ \|u^{m_k+1} - P_{\sM}(u^{m_k+1})\| \right], 
            \label{eq: sigma-k-bound}
        \end{align}
        where the final inequality holds by application of the Cauchy Schwarz inequality, and utilizing the fact that  $z = P_{\sM}(u^1)$ and, by the projection identity (\eg see Thm. 3.16 in \cite{bauschke2017convex}, Thm 4.1 \cite{deutsch2001best}, and Thm 7.45 in \cite{galantai2003projectors}),
        \begin{equation}
            \braket{ v - P_{\sM}(u^1), u^1 - P_{\sM}(u^1)} \leq 0,
            \ \ \ \mbox{for all $v \in \sM$.}
        \end{equation}
        Applying the triangle inequality with the 1-Lipschitz property of the projection $P_{\sM}$ yields
        \begin{align}
            \limsup_{k\rightarrow\infty}\sigma_k
            & \leq  \limk C \cdot \bbE_{u^{m_k}\sim\bbP^{m_k}}\left[ \|u^{m_k+1}- u^{m_k}\|  + \|u^{m_k} - P_{\sM}(u^{m_k})\| \right] \\
            & + \bbE_{u^{m_k}\sim\bbP^{m_k}}\left[ \|P_{\sM}(u^{m_k}) - P_{\sM}(u^{m_k+1}) \|  \right]  \\
            & \leq  \limk C \cdot \bbE_{u^{m_k}\sim\bbP^{m_k}}\left[ 2\|u^{m_k+1} - u^{m_k} \| + \|u^{m_k} - P_{\sM}(u^{m_k})\|  \right]   \\
            & =  \limk C \cdot  \left(\bbE_{u^{m_k}\sim\bbP^{m_k}}\left[ 2\|u^{m_k+1} - u^{m_k} \|  \right] + \beta_{m_k} \right) \label{eq: expectation-limit-sigmak-1} \\
            & = 0, \label{eq: expectation-limit-sigmak-2}
        \end{align} 
        where (\ref{eq: expectation-limit-sigmak-2}) follows from (\ref{eq: expectation-limit-sigmak-1}) by (\ref{eq: dmk-limit}) and (\ref{eq: umk-diff-conv-to-0}).       
        Now, since the limit supremum of $\{\sigma_k\}$ is nonpositive, we may apply Lemma \ref{lemma: delta-k-inequality-conv} to (\ref{eq: delta-k-descent-inequality}) to deduce
        \begin{equation}
            \delta_k \rightarrow 0
            \ \  \ \Longrightarrow \ \ \ 
            \limk \bbE_{u^k\sim\bbP^k}\left[ \|u^k -z \|^2 \right]
            = 0,
        \end{equation}
        completing the proof.         
    \end{proof}

\newpage
\section{More Reconstructions} 
\label{subsec:more_reconstructions}
This final subsection presents additional figures of the CT image reconstructions from our numerical examples.

    
    \begin{figure}[ht]
      \centering
      \small
        \setlength{\tabcolsep}{0.1pt}
        \begin{tabular}{ccccc}
            ground truth & FBP & TV & Adv. Reg. & Adv. Proj.
            \\

        \begin{tikzpicture} [spy using outlines={rectangle, magnification=3, size=1cm, connect spies}, rounded corners]
                \node[anchor=south west,inner sep=0] (image) at (0,0) {\adjincludegraphics[width=0.185\textwidth]{./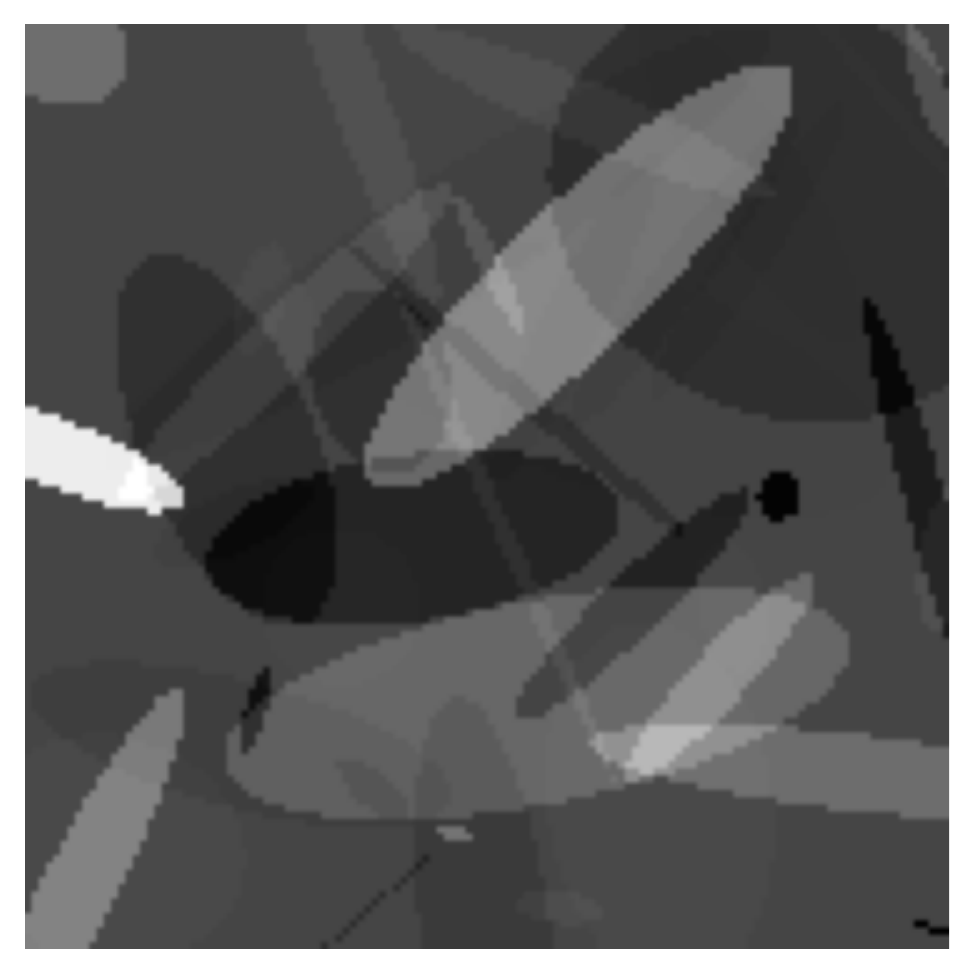}}; 
                \begin{scope}
                \spy[\spycolor,size=0.175\textwidth, every spy on node/.append style={line width = \W}] on (0.8,1.5) in node at (1.44, -1.5);  
                \end{scope}
        \end{tikzpicture}
        &
        \begin{tikzpicture} [spy using outlines={rectangle, magnification=3, size=1cm, connect spies}, rounded corners]
                \node[anchor=south west,inner sep=0] (image) at (0,0) {\adjincludegraphics[width=0.185\textwidth]{./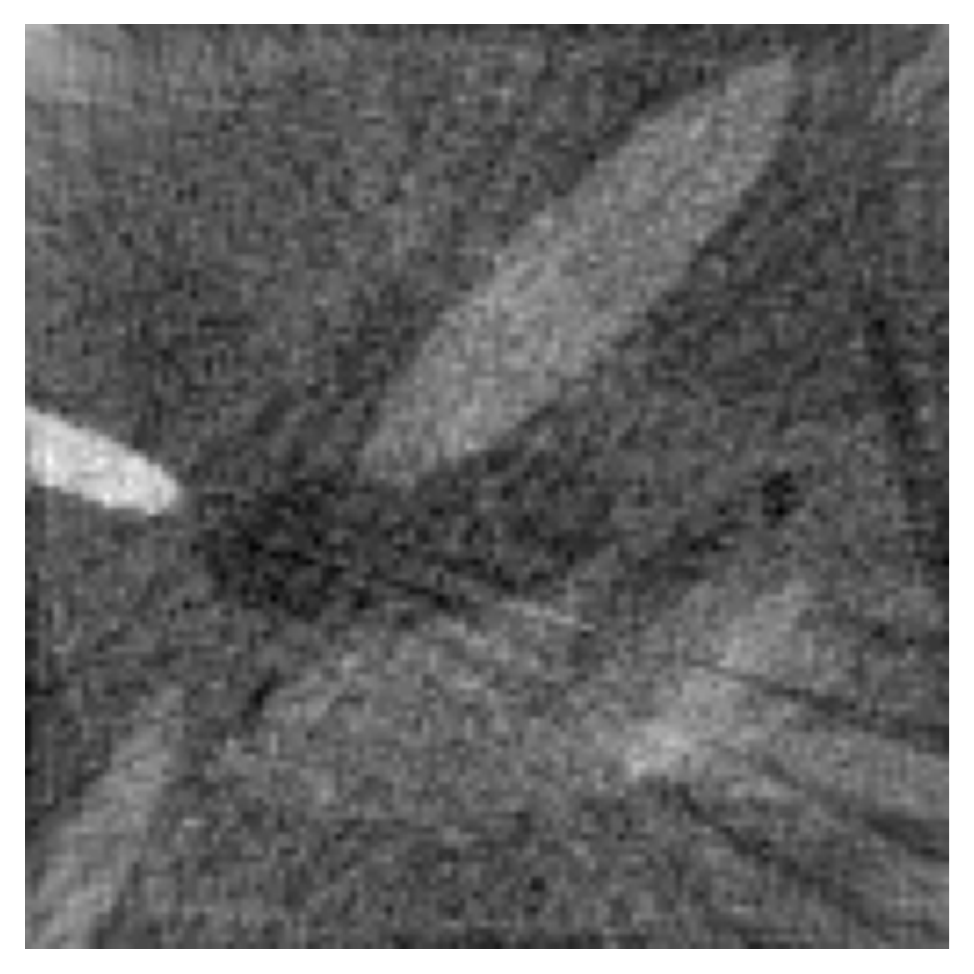}};
                \begin{scope}
                \spy[\spycolor,size=0.175\textwidth, every spy on node/.append style={line width = \W}] on (0.8,1.5) in node at (1.44, -1.5);  
                \end{scope}
        \end{tikzpicture}       
        &
        \begin{tikzpicture} [spy using outlines={rectangle, magnification=3, size=1cm, connect spies}, rounded corners]
                \node[anchor=south west,inner sep=0] (image) at (0,0) {\adjincludegraphics[width=0.185\textwidth]{./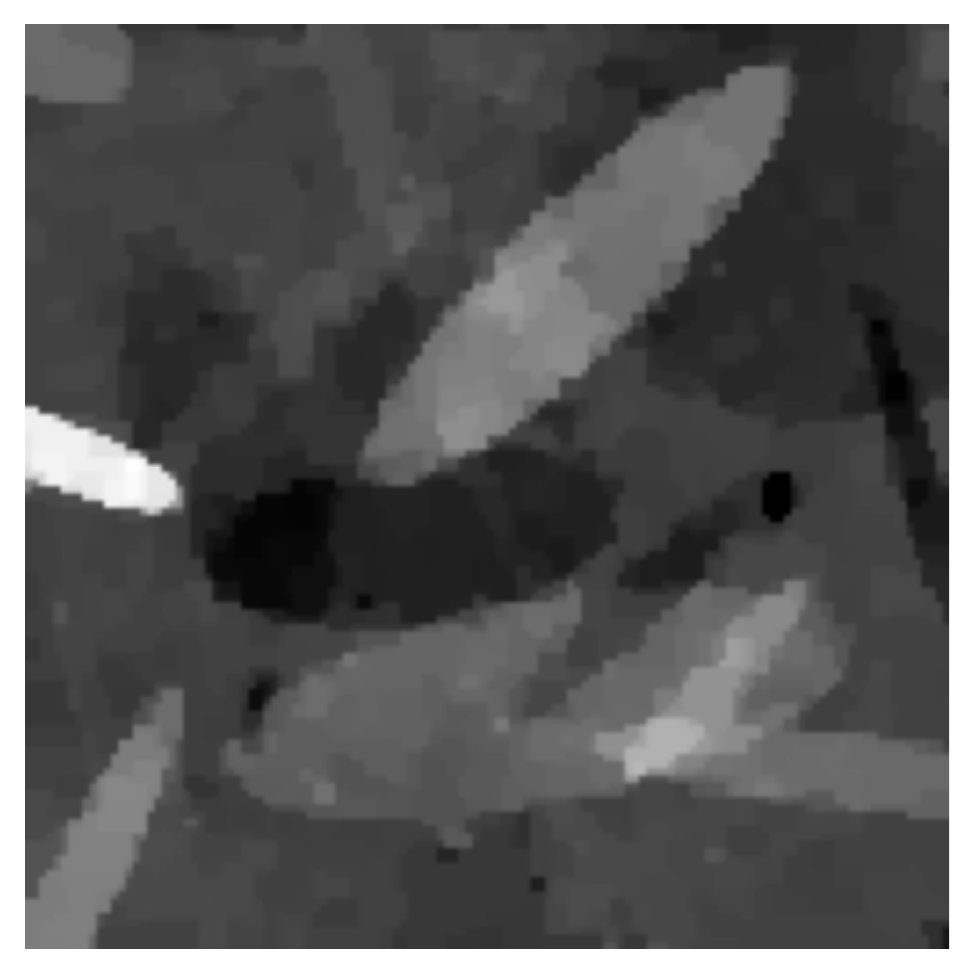}};
                \begin{scope}
                \spy[\spycolor,size=0.175\textwidth, every spy on node/.append style={line width = \W}] on (0.8,1.5) in node at (1.44, -1.5);  
                \end{scope}
        \end{tikzpicture}   
        &
        \begin{tikzpicture} [spy using outlines={rectangle, magnification=3, size=1cm, connect spies}, rounded corners]
                \node[anchor=south west,inner sep=0] (image) at (0,0) {\adjincludegraphics[width=0.185\textwidth]{./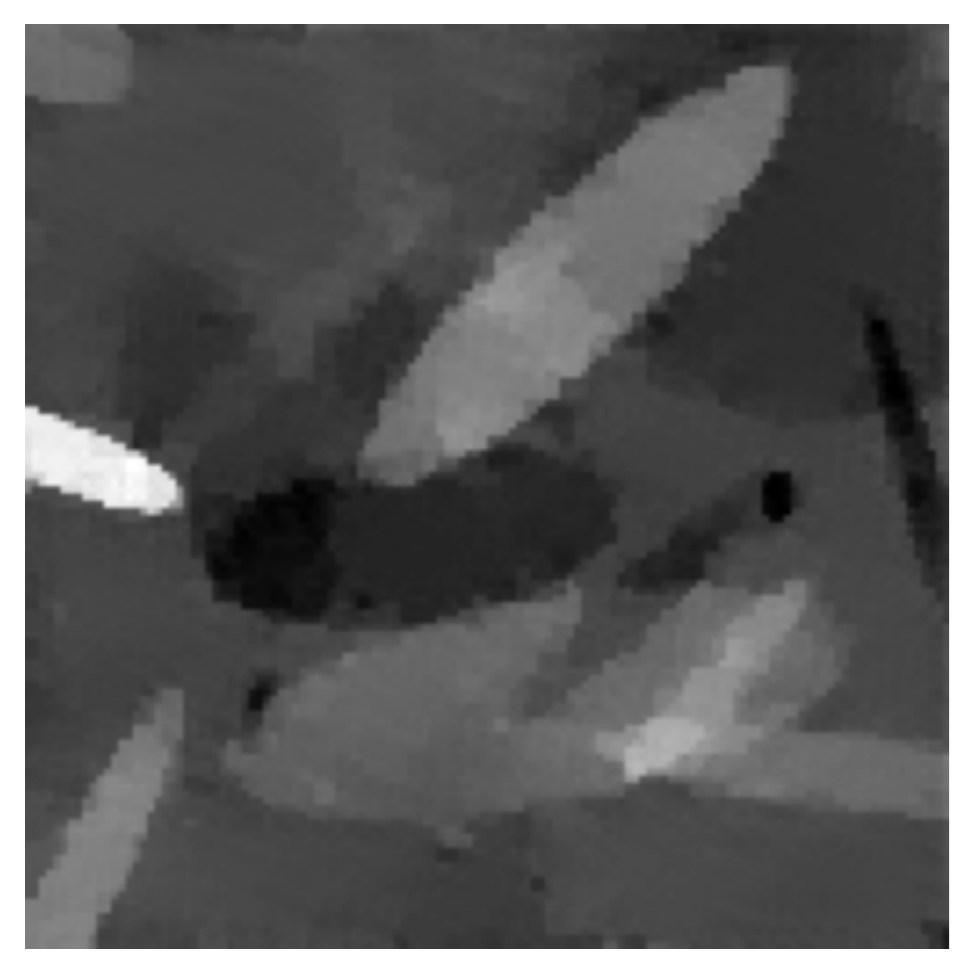}};
                \begin{scope}
                \spy[\spycolor,size=0.175\textwidth, every spy on node/.append style={line width = \W}] on (0.8,1.5) in node at (1.44, -1.5);  
                \end{scope}
        \end{tikzpicture}       
        &
        \begin{tikzpicture} [spy using outlines={rectangle, magnification=3, size=1cm, connect spies}, rounded corners]
                \node[anchor=south west,inner sep=0] (image) at (0,0) {\adjincludegraphics[width=0.185\textwidth]{./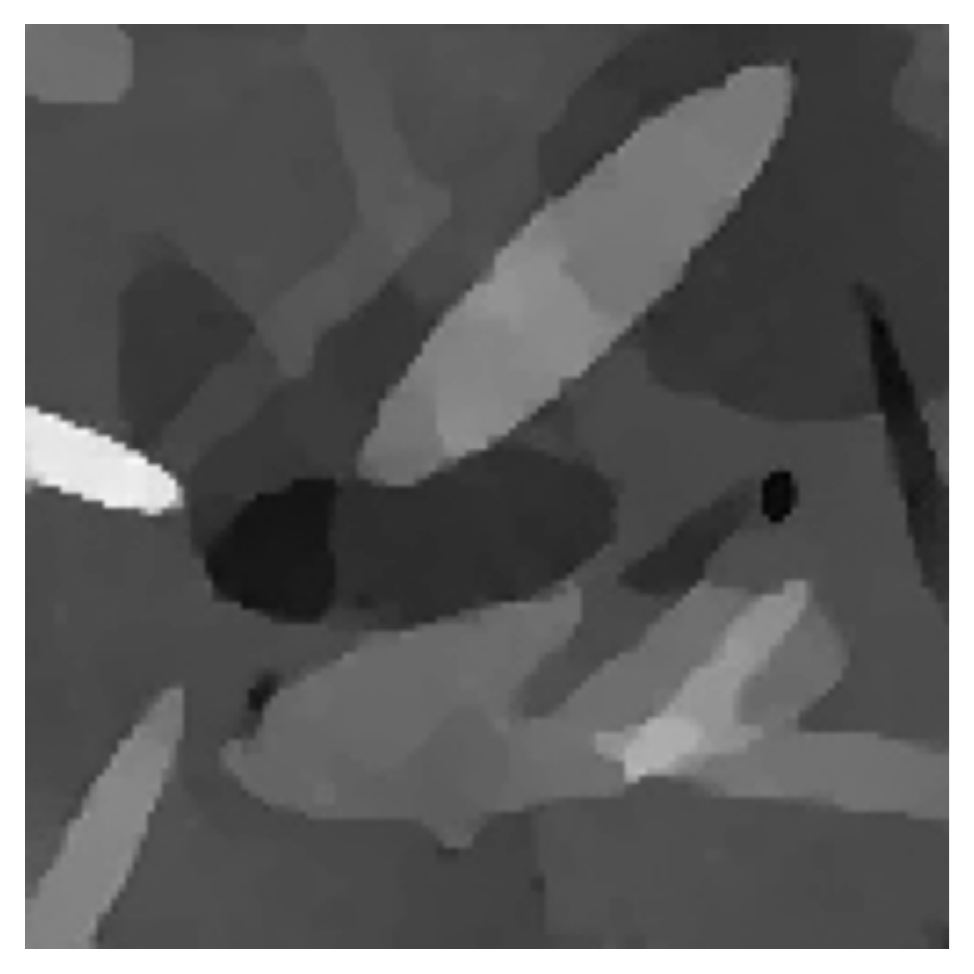}};
                \begin{scope}
                \spy[\spycolor,size=0.175\textwidth, every spy on node/.append style={line width = \W}] on (0.8,1.5) in node at (1.44, -1.5);  
                \end{scope}
        \end{tikzpicture}   
        \\
        & SSIM: 0.312 & SSIM: 0.834 & SSIM: 0.838 & SSIM: 0.880
        \\
        & PSNR: 20.20 & PSNR: 30.19 & PSNR: 30.10 & PSNR: 31.07
        \end{tabular}
        \caption{Reconstruction on a validation sample obtained with Filtered Back Projection (FBP) method, TV regularization, Adversarial Regularizer, and Wasserstein-based Projections (left to right). Bottom row shows expanded version of corresponding cropped region indicated by red box.}
        \label{fig:ellipses_reconstructions_2}
    \end{figure}


    \begin{figure}[t]
      \centering
      \small
        \setlength{\tabcolsep}{0.1pt}
        \begin{tabular}{ccccc}
            ground truth & FBP & TV & Adv. Reg. & Adv. Proj.
            \\

        \begin{tikzpicture} [spy using outlines={rectangle, magnification=3, size=1cm, connect spies}, rounded corners]
                \node[anchor=south west,inner sep=0] (image) at (0,0) {\adjincludegraphics[width=0.185\textwidth]{./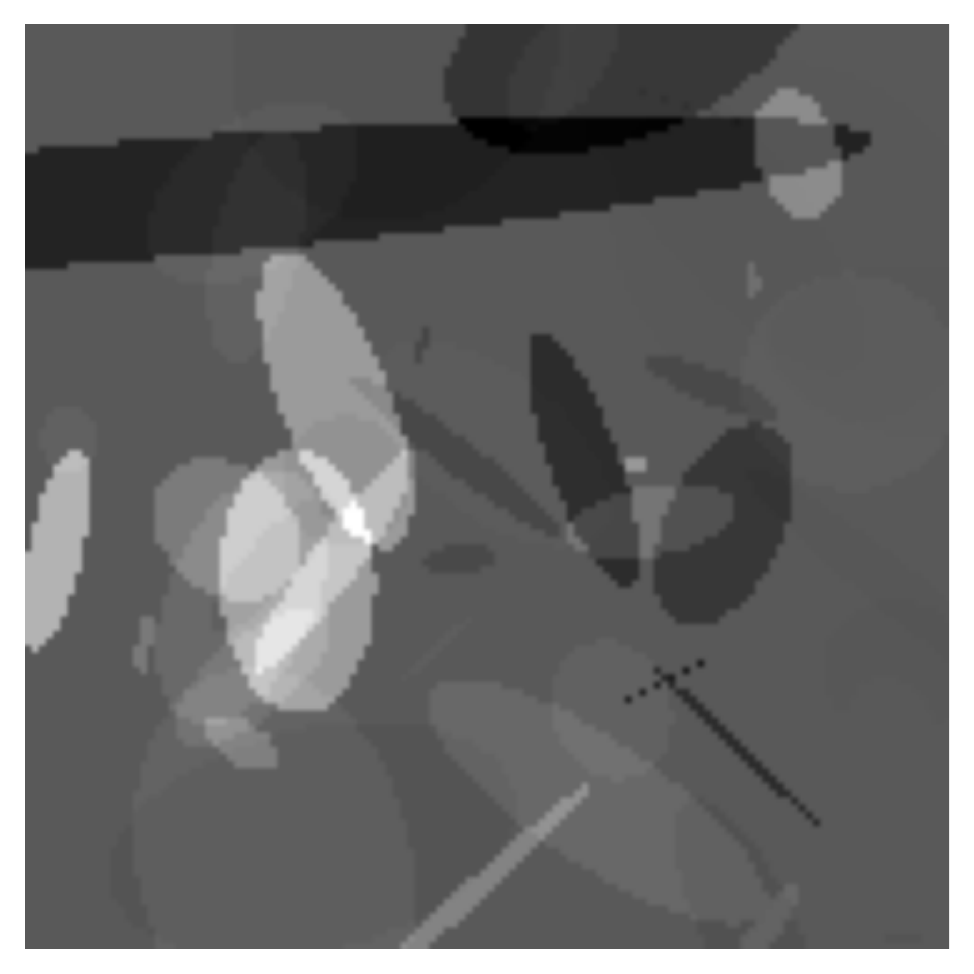}}; 
                \begin{scope}
                \spy[\spycolor,size=0.175\textwidth, every spy on node/.append style={line width = \W}] on (1.8,1.5) in node at (1.44, -1.5);  
                \end{scope}
        \end{tikzpicture}
        &
        \begin{tikzpicture} [spy using outlines={rectangle, magnification=3, size=1cm, connect spies}, rounded corners]
                \node[anchor=south west,inner sep=0] (image) at (0,0) {\adjincludegraphics[width=0.185\textwidth]{./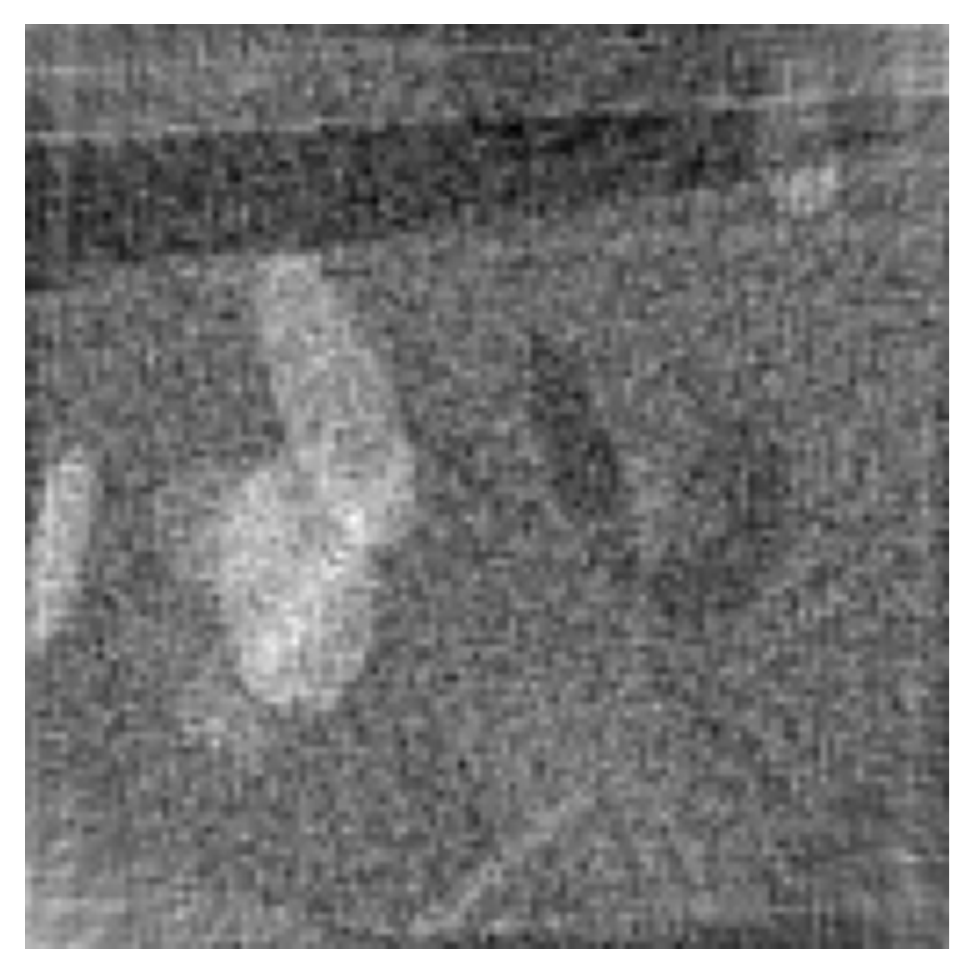}};
                \begin{scope}
                \spy[\spycolor,size=0.175\textwidth, every spy on node/.append style={line width = \W}] on (1.8, 1.5) in node at (1.44, -1.5);  
                \end{scope}
        \end{tikzpicture}       
        &
        \begin{tikzpicture} [spy using outlines={rectangle, magnification=3, size=1cm, connect spies}, rounded corners]
                \node[anchor=south west,inner sep=0] (image) at (0,0) {\adjincludegraphics[width=0.185\textwidth]{./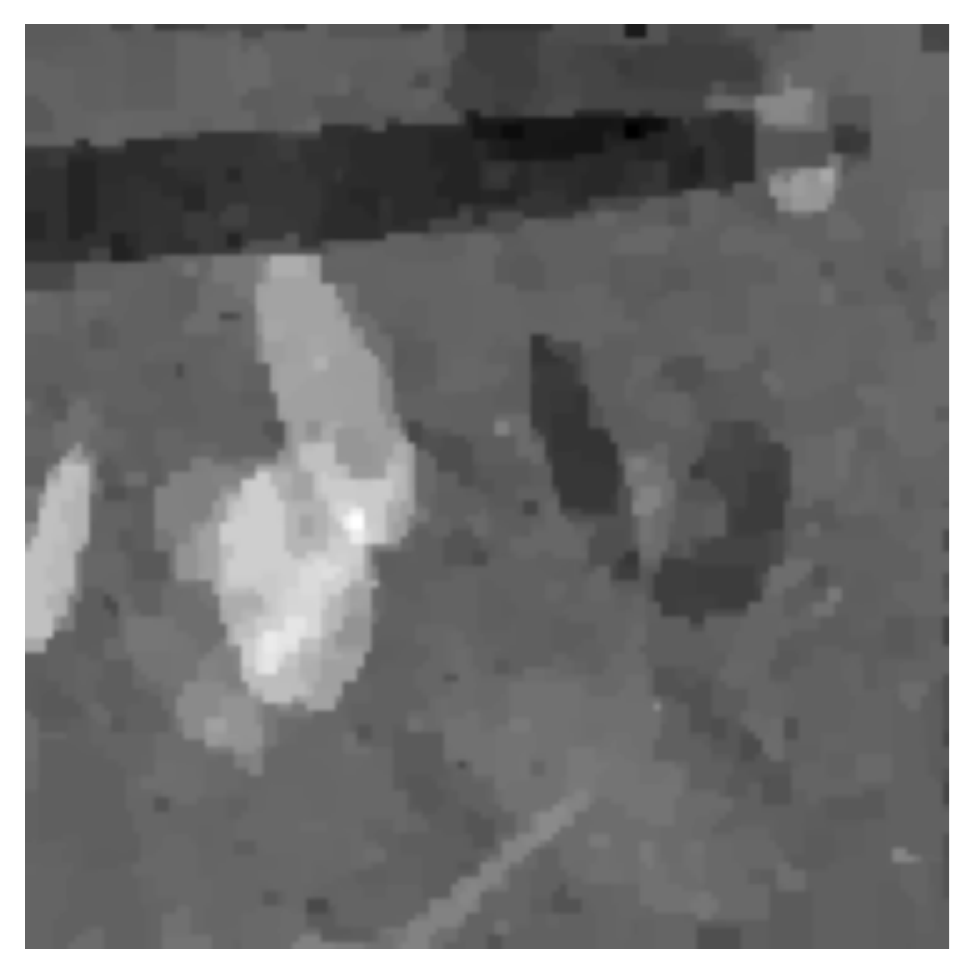}};
                \begin{scope}
                \spy[\spycolor,size=0.175\textwidth, every spy on node/.append style={line width = \W}] on (1.8,1.5) in node at (1.44, -1.5);  
                \end{scope}
        \end{tikzpicture}   
        &
        \begin{tikzpicture} [spy using outlines={rectangle, magnification=3, size=1cm, connect spies}, rounded corners]
                \node[anchor=south west,inner sep=0] (image) at (0,0) {\adjincludegraphics[width=0.185\textwidth]{./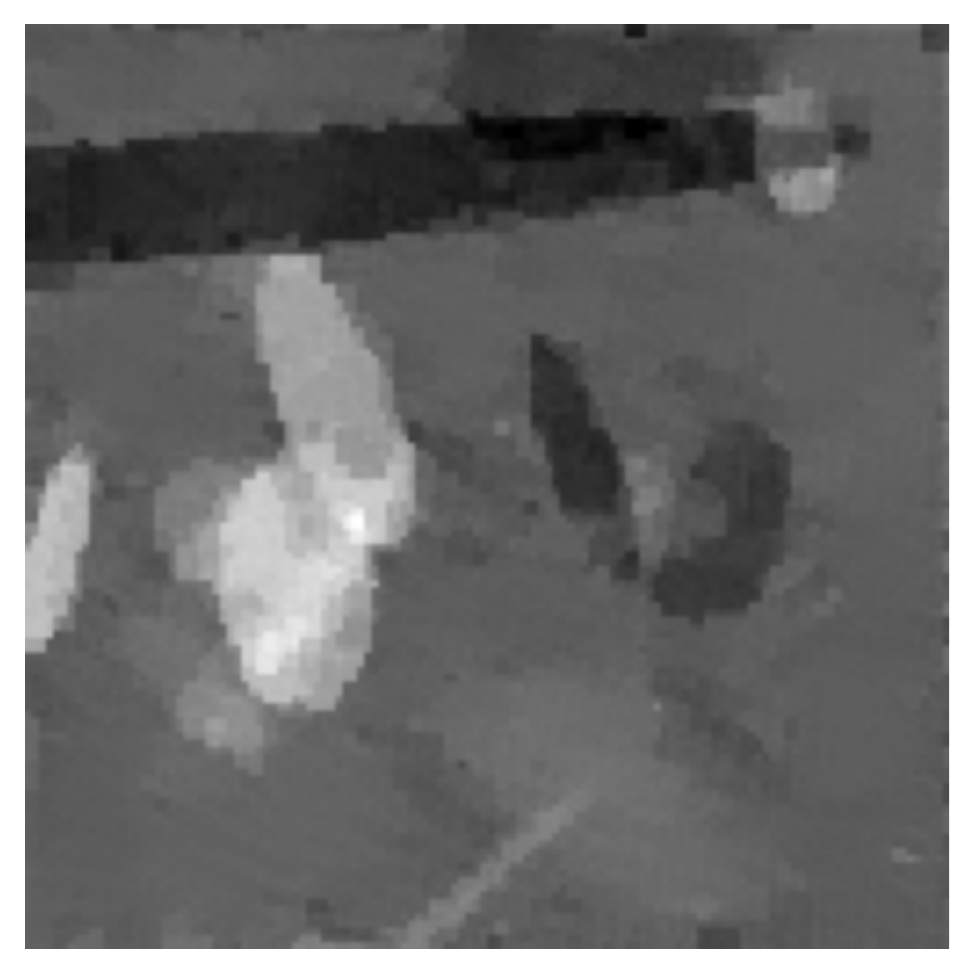}};
                \begin{scope}
                \spy[\spycolor,size=0.175\textwidth, every spy on node/.append style={line width = \W}] on (1.8,1.5) in node at (1.44, -1.5);  
                \end{scope}
        \end{tikzpicture}       
        &
        \begin{tikzpicture} [spy using outlines={rectangle, magnification=3, size=1cm, connect spies}, rounded corners]
                \node[anchor=south west,inner sep=0] (image) at (0,0) {\adjincludegraphics[width=0.185\textwidth]{./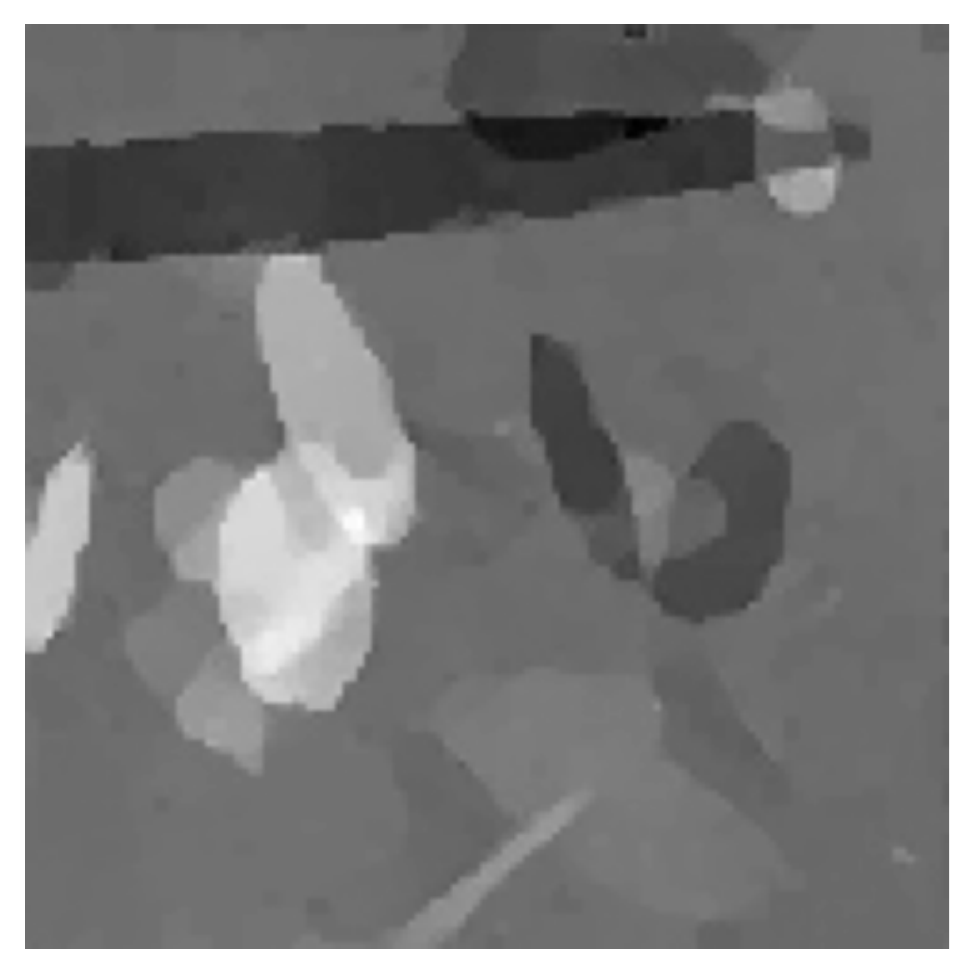}};
                \begin{scope}
                \spy[\spycolor,size=0.175\textwidth, every spy on node/.append style={line width = \W}] on (1.8,1.5) in node at (1.44, -1.5);  
                \end{scope}
        \end{tikzpicture}   
        \\
        & SSIM: 0.253 & SSIM: 0.769 & SSIM: 0.794 & SSIM: 0.841
        \\
        & PSNR: 19.19 & PSNR: 28.85 & PSNR: 29.04 & PSNR: 29.74
        \end{tabular}
        \caption{Reconstruction on a validation sample obtained with Filtered Back Projection (FBP) method, TV regularization, Adversarial Regularizer, and Wasserstein-based Projections (left to right). Bottom row shows expanded version of corresponding cropped region indicated by red box.}
        \label{fig:ellipses_reconstructions_1}
    \end{figure}

    \begin{figure}[t]
      \centering
      \small
        \setlength{\tabcolsep}{0.1pt}
        \begin{tabular}{ccccc}
            ground truth & FBP & TV & Adv. Reg. & Adv. Proj.
            \\

        \begin{tikzpicture} [spy using outlines={rectangle, magnification=3, size=1cm, connect spies}, rounded corners]
                \node[anchor=south west,inner sep=0] (image) at (0,0) {\adjincludegraphics[angle=180,width=0.185\textwidth]{./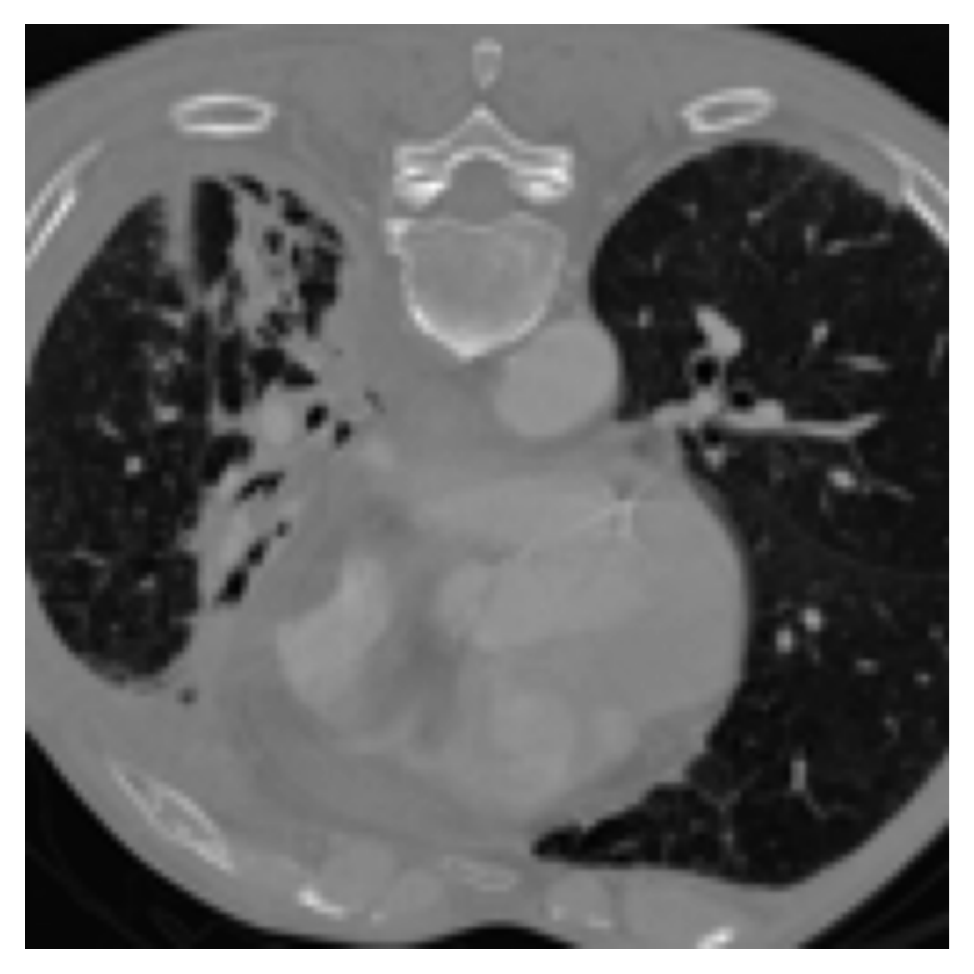}}; 
                \begin{scope}
                \spy[\spycolor,size=0.175\textwidth, every spy on node/.append style={line width = \W}] on (1.5,0.6) in node at (1.44, -1.5);  
                \end{scope}
        \end{tikzpicture}
        &
        \begin{tikzpicture} [spy using outlines={rectangle, magnification=3, size=1cm, connect spies}, rounded corners]
                \node[anchor=south west,inner sep=0] (image) at (0,0) {\adjincludegraphics[angle=180,width=0.185\textwidth]{./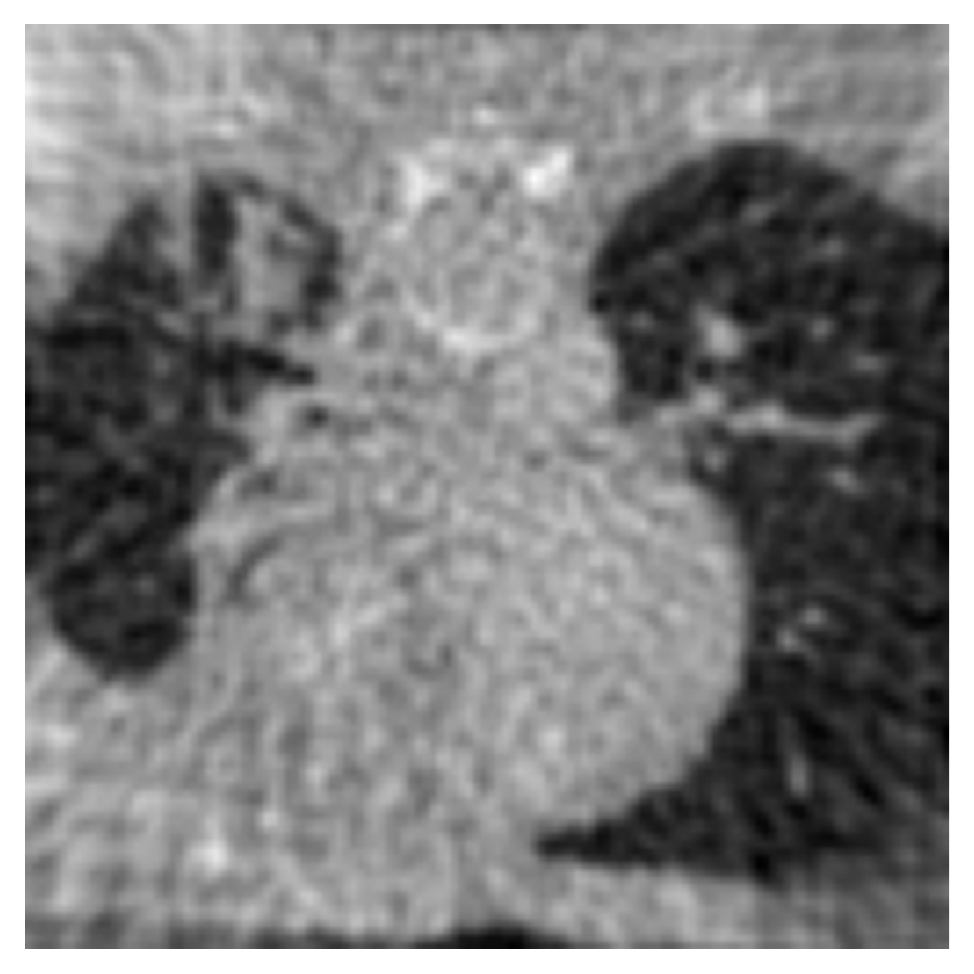}};
                \begin{scope}
                \spy[\spycolor,size=0.175\textwidth, every spy on node/.append style={line width = \W}] on (1.5,0.6) in node at (1.44, -1.5);  
                \end{scope}
        \end{tikzpicture}       
        &
        \begin{tikzpicture} [spy using outlines={rectangle, magnification=3, size=1cm, connect spies}, rounded corners]
                \node[anchor=south west,inner sep=0] (image) at (0,0) {\adjincludegraphics[angle=180,width=0.185\textwidth]{./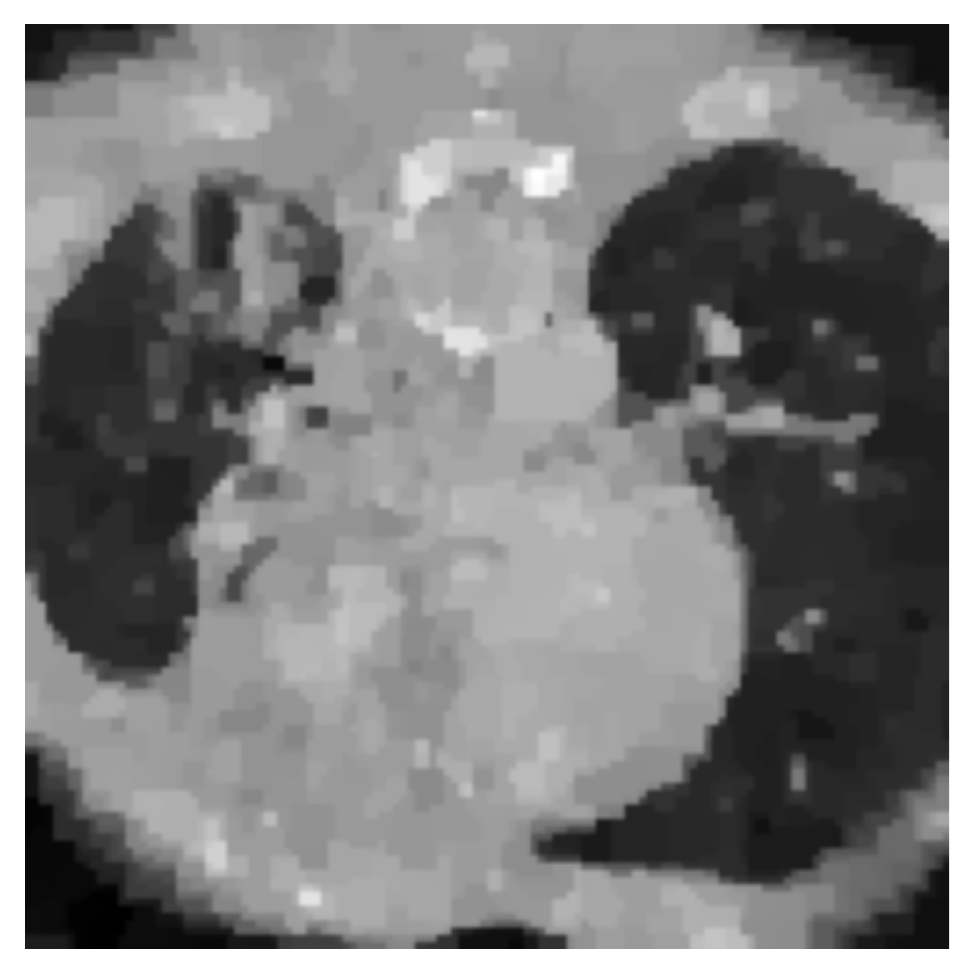}}; 
                \begin{scope}
                \spy[\spycolor,size=0.175\textwidth, every spy on node/.append style={line width = \W}] on (1.5,0.6) in node at (1.44, -1.5);  
                \end{scope}
        \end{tikzpicture}   
        &
        \begin{tikzpicture} [spy using outlines={rectangle, magnification=3, size=1cm, connect spies}, rounded corners]
                \node[anchor=south west,inner sep=0] (image) at (0,0) {\adjincludegraphics[angle=180,width=0.185\textwidth]{./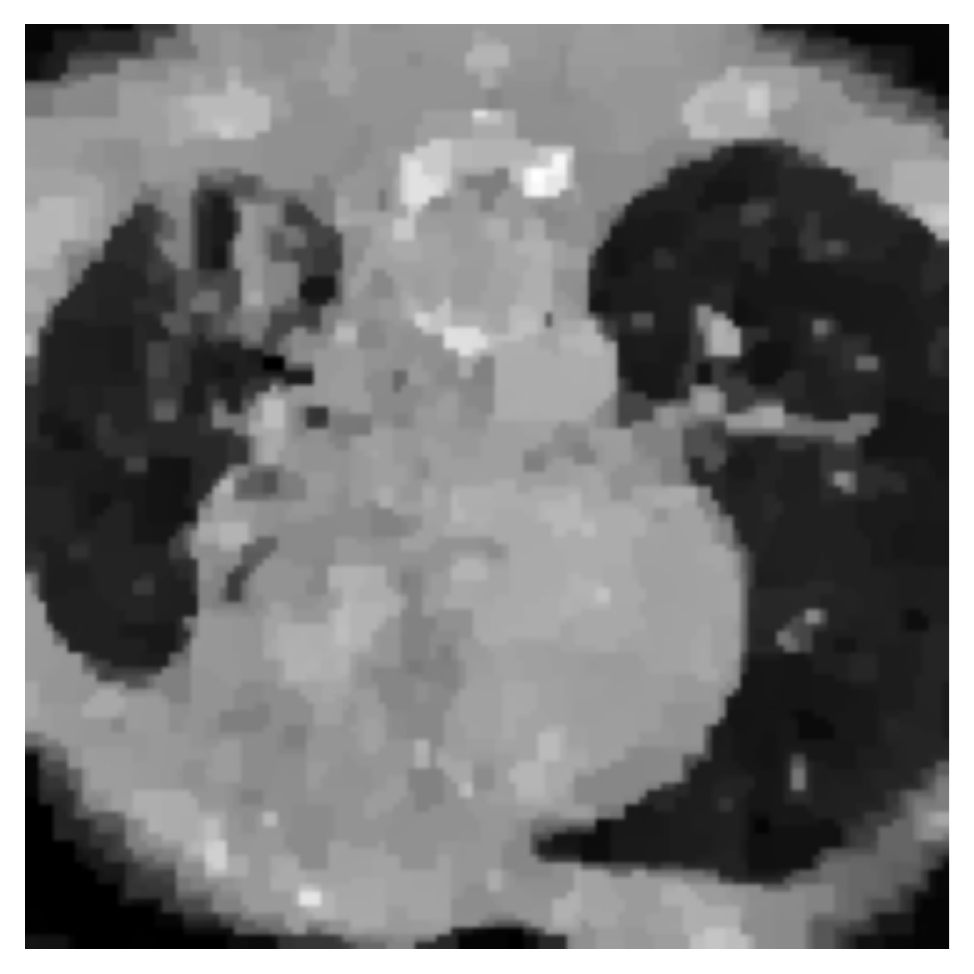}}; 
                \begin{scope}
                \spy[\spycolor,size=0.175\textwidth, every spy on node/.append style={line width = \W}] on (1.5,0.6) in node at (1.44, -1.5);  
                \end{scope}
        \end{tikzpicture}       
        &
        \begin{tikzpicture} [spy using outlines={rectangle, magnification=3, size=1cm, connect spies}, rounded corners]
                \node[anchor=south west,inner sep=0] (image) at (0,0) {\adjincludegraphics[angle=180,width=0.185\textwidth]{./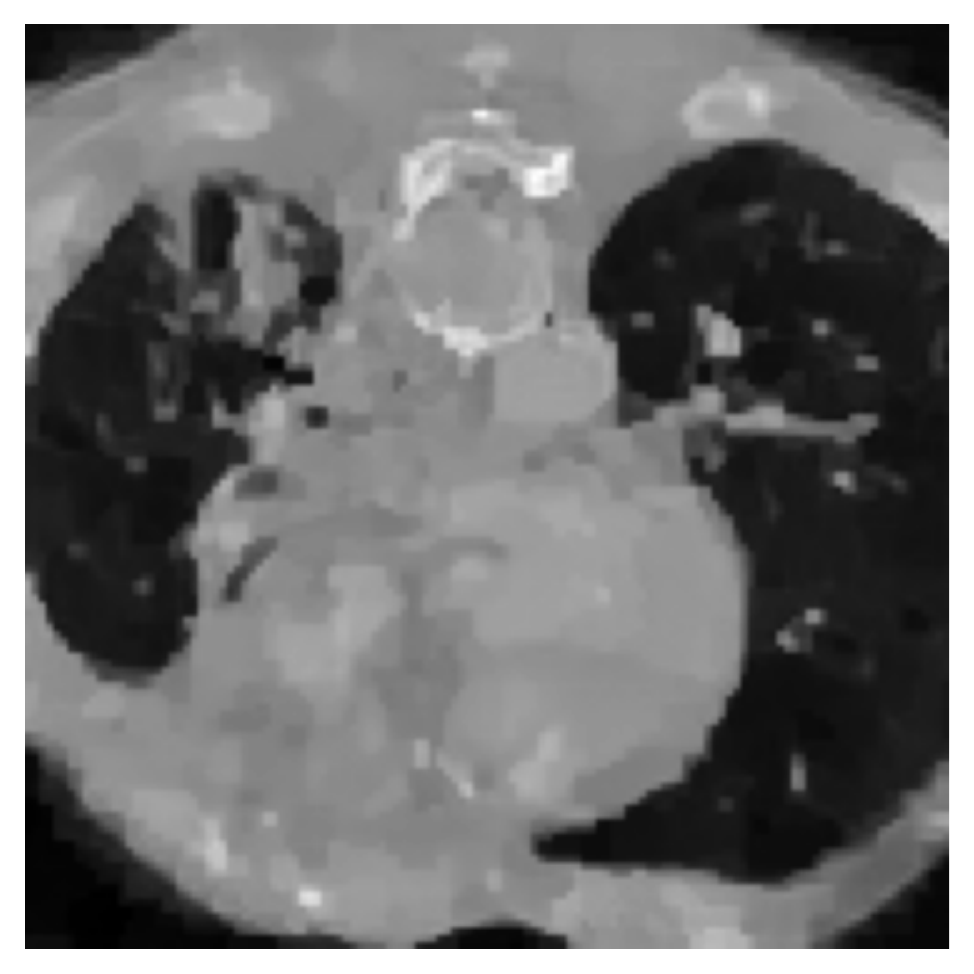}}; 
                \begin{scope}
                \spy[\spycolor,size=0.175\textwidth, every spy on node/.append style={line width = \W}] on (1.5,0.6) in node at (1.44, -1.5);  
                \end{scope}
        \end{tikzpicture}   
        \\
        & SSIM: 0.396 & SSIM: 0.679 & SSIM: 0.703 & SSIM: 0.740
        \\
        & PSNR: 15.89 & PSNR: 18.54 & PSNR: 22.51 & PSNR: 25.35
        \end{tabular}
        \caption{Reconstruction on a validation sample obtained with Filtered Back Projection (FBP) method, TV regularization, Adversarial Regularizer, and Wasserstein-based Projections (left to right). Bottom row shows expanded version of corresponding cropped region indicated by red box.}
        \label{fig:lodopab_reconstructions_2}
    \end{figure}
    
    \begin{figure}
      \centering
      \small
        \setlength{\tabcolsep}{0.1pt}
        \begin{tabular}{ccccc}
            ground truth & FBP & TV & Adv. Reg. & Adv. Proj.
            \\

        \begin{tikzpicture} [spy using outlines={rectangle, magnification=3, size=1cm, connect spies}, rounded corners]
                \node[anchor=south west,inner sep=0] (image) at (0,0) {\adjincludegraphics[angle=180,width=0.185\textwidth]{./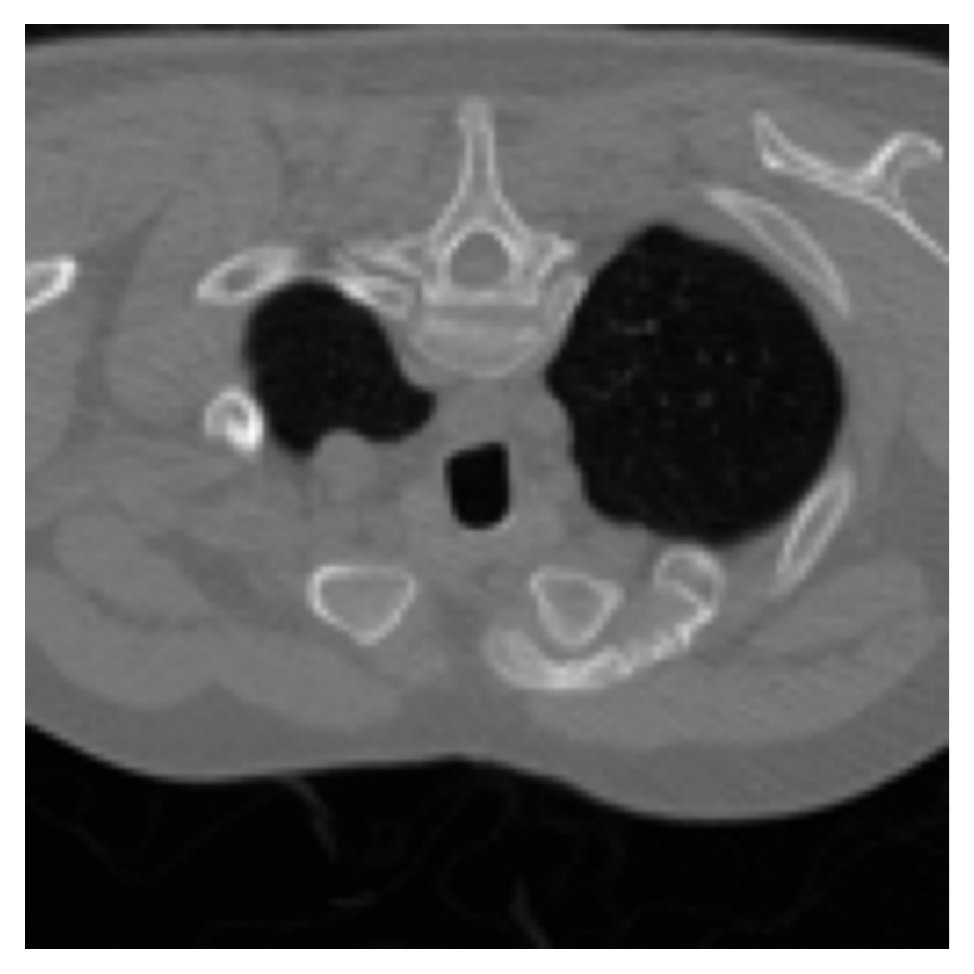}}; 
                \begin{scope}
                \spy[\spycolor,size=0.175\textwidth, every spy on node/.append style={line width = \W}] on (1.5,0.6) in node at (1.44, -1.5);  
                \end{scope}
        \end{tikzpicture}
        &
        \begin{tikzpicture} [spy using outlines={rectangle, magnification=3, size=1cm, connect spies}, rounded corners]
                \node[anchor=south west,inner sep=0] (image) at (0,0) {\adjincludegraphics[angle=180,width=0.185\textwidth]{./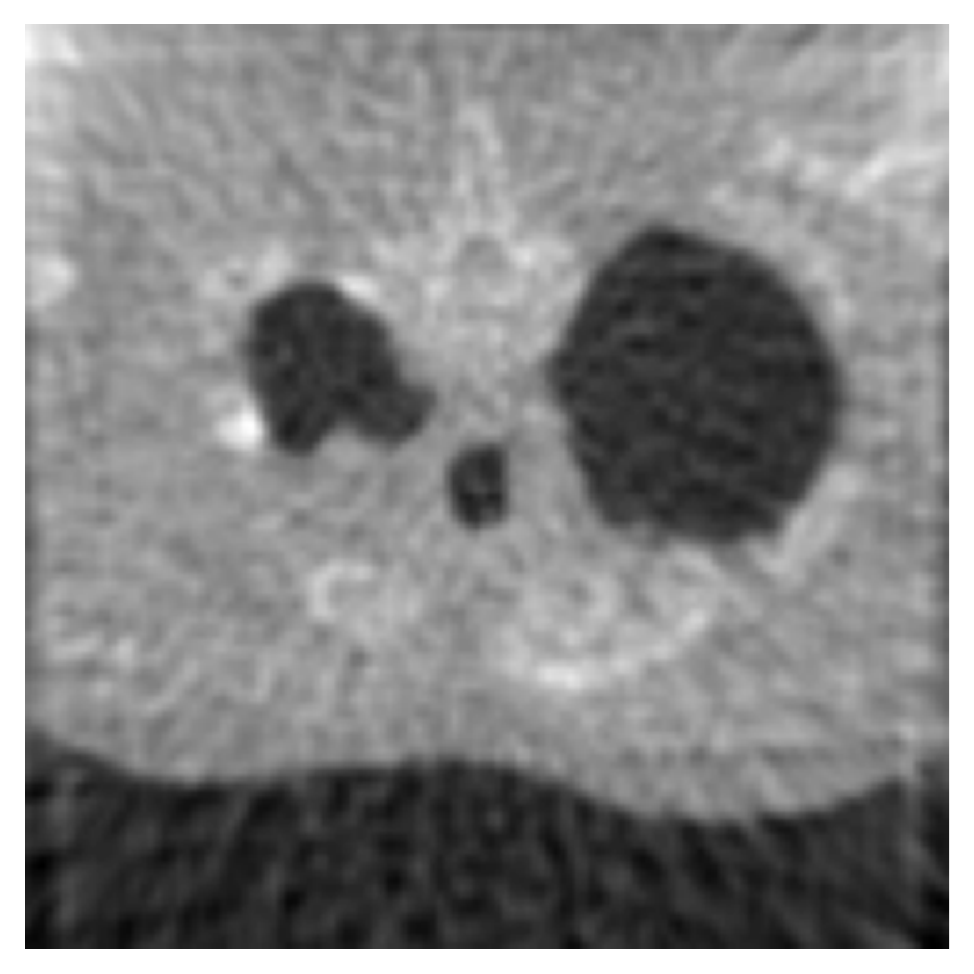}};
                \begin{scope}
                \spy[\spycolor,size=0.175\textwidth, every spy on node/.append style={line width = \W}] on (1.5,0.6) in node at (1.44, -1.5);  
                \end{scope}
        \end{tikzpicture}       
        &
        \begin{tikzpicture} [spy using outlines={rectangle, magnification=3, size=1cm, connect spies}, rounded corners]
                \node[anchor=south west,inner sep=0] (image) at (0,0) {\adjincludegraphics[angle=180,width=0.185\textwidth]{./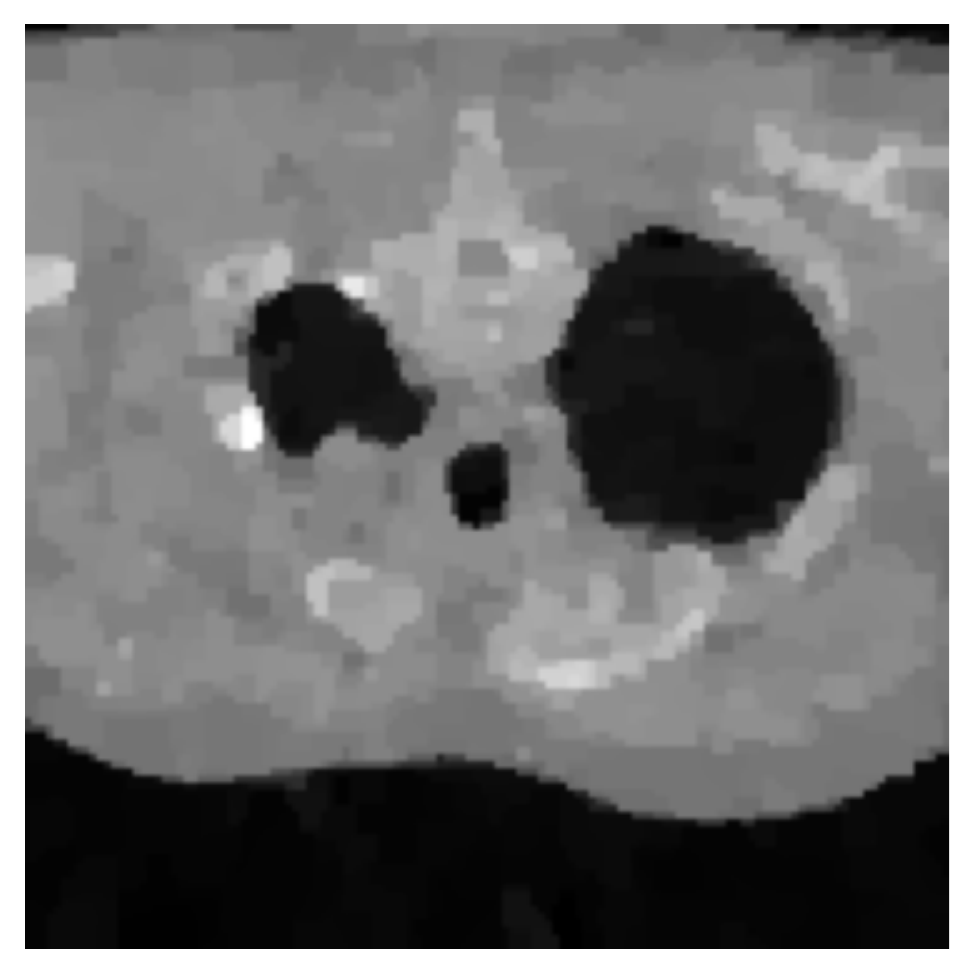}}; 
                \begin{scope}
                \spy[\spycolor,size=0.175\textwidth, every spy on node/.append style={line width = \W}] on (1.5,0.6) in node at (1.44, -1.5);  
                \end{scope}
        \end{tikzpicture}   
        &
        \begin{tikzpicture} [spy using outlines={rectangle, magnification=3, size=1cm, connect spies}, rounded corners]
                \node[anchor=south west,inner sep=0] (image) at (0,0) {\adjincludegraphics[angle=180,width=0.185\textwidth]{./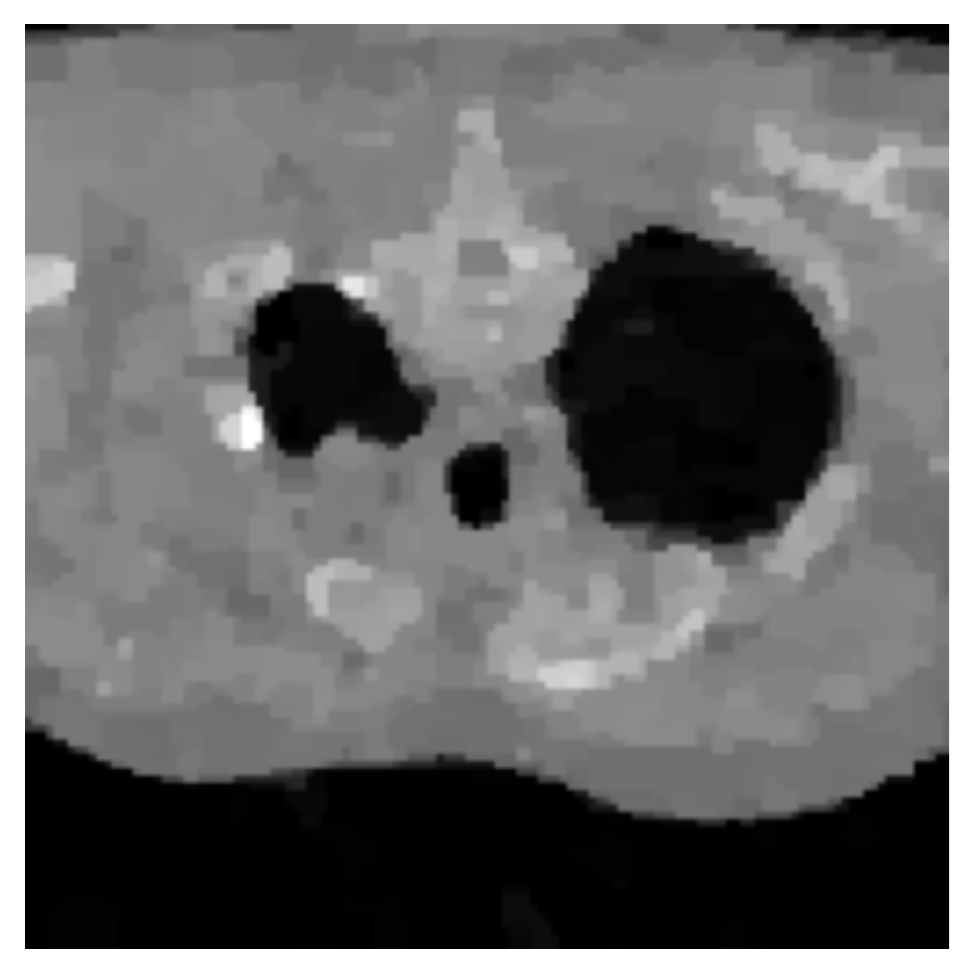}}; 
                \begin{scope}
                \spy[\spycolor,size=0.175\textwidth, every spy on node/.append style={line width = \W}] on (1.5,0.6) in node at (1.44, -1.5);  
                \end{scope}
        \end{tikzpicture}       
        &
        \begin{tikzpicture} [spy using outlines={rectangle, magnification=3, size=1cm, connect spies}, rounded corners]
                \node[anchor=south west,inner sep=0] (image) at (0,0) {\adjincludegraphics[angle=180,width=0.185\textwidth]{./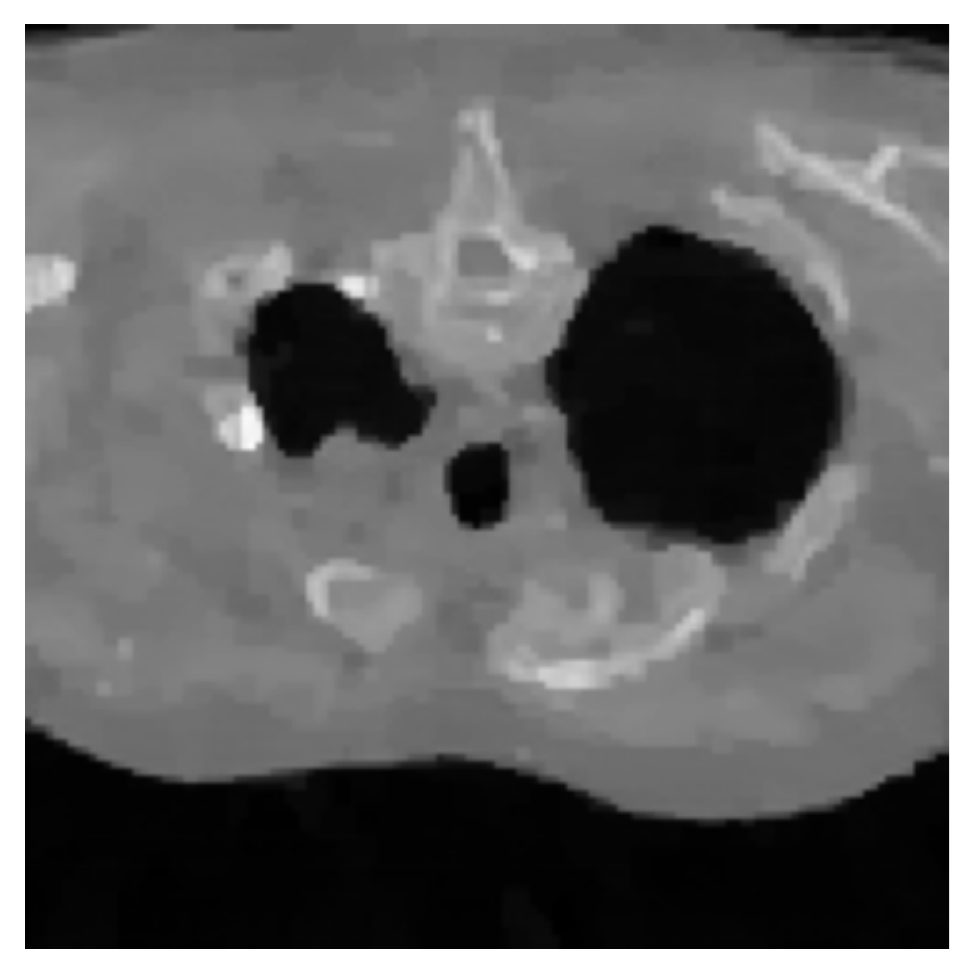}}; 
                \begin{scope}
                \spy[\spycolor,size=0.175\textwidth, every spy on node/.append style={line width = \W}] on (1.5,0.6) in node at (1.44, -1.5);  
                \end{scope}
        \end{tikzpicture}   
        \\
        & SSIM: 0.487 & SSIM: 0.767 & SSIM: 0.707 & SSIM: 0.799
        \\
        & PSNR: 15.62 & PSNR: 22.15 & PSNR: 26.33 & PSNR: 28.65
        \end{tabular}
        \caption{Reconstruction on a validation sample obtained with Filtered Back Projection (FBP) method, TV regularization, Adversarial Regularizer, and Wasserstein-based Projections (left to right). Bottom row shows expanded version of corresponding cropped region indicated by red box.}
        \label{fig:lodopab_reconstructions_3}
    \end{figure}
    
\end{document}